\useunder{\uline}{\ul}{}
\theoremstyle{plain}
\newtheorem{theorem}{Theorem}[section]
\newtheorem{proposition}[theorem]{Proposition}
\newtheorem{lemma}[theorem]{Lemma}
\newtheorem{corollary}[theorem]{Corollary}
\theoremstyle{definition}
\newtheorem{definition}[theorem]{Definition}
\newtheorem{assumption}[theorem]{Assumption}
\theoremstyle{remark}
\newcommand{\eins}{\boldsymbol{1}}
\begin{document}
	
	%
	
	%
	
	\twocolumn[
	
	\aistatstitle{Locally Private Estimation with Public Features}
	
	\aistatsauthor{ Yuheng Ma \And Ke Jia \And Hanfang Yang}
	
	\aistatsaddress{ School of Statistics, \\ Renmin University of China \And  School of Statistics, \\ Renmin University of China \And Center for Applied Statistics,\\School of Statistics, \\ Renmin University of China } ]

	\begin{abstract}
		We initiate the study of locally differentially private (LDP) learning with public features. 
		We define semi-feature LDP, where some features are publicly available while the remaining ones, along with the label, require protection under local differential privacy.
		Under semi-feature LDP, we demonstrate that the mini-max convergence rate for non-parametric regression is significantly reduced compared to that of classical LDP.
		Then we propose \texttt{HistOfTree}, an estimator that fully leverages the information contained in both public and private features. 
		Theoretically, \texttt{HistOfTree} reaches the mini-max optimal convergence rate.
		Empirically, \texttt{HistOfTree} achieves superior performance on both synthetic and real data. 
		We also explore scenarios where users have the flexibility to select features for protection manually. 
		In such cases, we propose an estimator and a data-driven parameter tuning strategy, leading to analogous theoretical and empirical results.
	\end{abstract}

	\section{Introduction}

	Data privacy regulations such as Europe's General Data Protection Regulation (GDPR) \citep{EuropeanParliament2016a} have led to a notable rise in the significance of privacy-preserving machine learning \citep{cummings2018role}.
	As a golden standard, local differential privacy (LDP) \citep{kairouz2014extremal, duchi2018minimax}, a variant of differential privacy (DP) \citep{dwork2006calibrating}, has gained considerable attention in recent years, especially among industry experts \citep{erlingsson2014rappor, apple2017differential}. 
	LDP assumes that each sample is possessed by a data holder, who privatizes their data before it is collected by the curator. 
	Offering a stronger sense of privacy protection compared to central DP, LDP often encounters much more sophisticated challenges which obstruct both the theoretical analysis and practical implementation of LDP learning.

	Fortunately, in some scenarios, a protocol with weakened protection can be adopted to privatize only the sensitive part of each sample. 
	For instance, label differential privacy \citep{ghazi2021deep, malek2021antipodes, badanidiyuru2024optimal} assumes only the labels contain sensitive information, while the features are freely accessible to the data users. 
	Moreover, clear theoretical advantages have been established  \citep{wang2019sparse, xu2023binary, zhao2024theoretical} for label LDP over classical LDP. 
	Recent works demonstrate the effectiveness of partially protecting a small portion of features \citep{curmei2023private, shen2023classification, krichene2024private, chua2024training}, mostly from a methodological perspective. 
	Yet, no theoretical framework has been established under this setting, which is the primary aim of this work.

	As is addressed by GDPR, a properly functioning consent management process should provide granular consent options regarding the specific types of data being collected and the purposes for which it will be processed \citep{nouwens2020dark}. 
	In our case, to meet the requirements, users must be able to choose which parts of their data are collected under privacy protections and which are not. 
	This necessitates the design of estimators with personalized privacy preferences, i.e., the private and public features for each user are manually selected rather than preset.

	Given this context, we study the problem of statistical estimation under local differential privacy with user-personalized public features. 
	We choose non-parametric regression as the task. 
	An extension to classification is straightforward, and our mechanism also directly implies an optimal protocol for density estimation. 
	Our contributions are summarized as follows:
	\begin{itemize}
		\item  We formalize the problem of LDP learning with public features. 
		Specifically, we define semi-feature LDP, where a portion of the features is publicly available, while the remaining ones, along with the label, require protection. 
		Moreover, we establish the first minimax lower bound under semi-feature LDP for non-parametric regression.
		\item We propose the \texttt{HistOfTree} estimator for both aligned (where the position of public features for all users is identical) and personalized (where users select their own public features) privacy preferences.
		Additionally, we provide a data-driven parameter selection rule.
		\item We demonstrate the superiority of \texttt{HistOfTree}. 
		Theoretically, we provide an excess risk upper bound for personalized privacy preferences, indicating a mini-max optimal convergence rate in the aligned case. 
		Empirically, we show that \texttt{HistOfTree} significantly outperforms naive competitors on both synthetic and real datasets.
	\end{itemize}

	\section{Related Work}

	\paragraph{Public Data and Features}
	
	There has been a long line of work about the benefit of public data \citep{papernotsemi, papernot2018scalable, liu2021leveraging, liu2021iterative, yu2021large, yu2022differentially, nasr2023effectively, gu2023choosing, fuentes2024joint, wang2024neural}.
	Public features are less considered. 
	Recently, a line of work has addressed the reasonableness of providing privacy protection w.r.t. a small portion of sensitive features \citep{curmei2023private, shen2023classification, krichene2024private, chua2024training}.
	Effective methodologies were proposed under central differential privacy, although no theoretical framework quantifying their utility has been provided. 
	A more extreme case is label differential privacy 
	\citep{chaudhuri2011sample, busa2021population, ghazi2021deep, malek2021antipodes, cunningham2022geopointgan, ghazi2022regression, badanidiyuru2024optimal, zhao2025enhancing}, where all the features are regarded as public.
	Under central DP, this relaxation is shown to improve the generalization error with only a constant related to privacy budget \citep{badanidiyuru2024optimal}. 
	However, under LDP, the advantage of label LDP over LDP becomes more apparent. 
	See the established generalization gap for both parametric \citep{wang2019sparse} and non-parametric estimation \citep{xu2023binary, zhao2024theoretical}.

	\paragraph{Personalized Privacy}
	
	Our personalized privacy preferences exhibit heterogeneity w.r.t. both users and attributes, i.e., different attributes of the same user can be either private or public, and different users have different choices of which features to protect.
	Various works have studied the problem of learning under heterogeneous privacy preference w.r.t. users. 
	\citet{wang2015personalized, yang2021federated, li2022protecting} considered locally private learning, with each data holder possessing their own privacy budget.
	\citet{sun2014personalized, song2019multiple, zhang2022support, zhang2024adaptive} assume that the privacy budget of each sample is related to its value.
	These studies neither provide theoretical guarantees nor cover our problem setting since there is no heterogeneity w.r.t. different features in their case.
	Similar to our aligned setting, \citet{amorino2023minimax} proposed Componentwise LDP, where each attribute is released through independent private channels with different budgets.
	This is a more restrictive case than ours, as we require private features to be jointly protected.
	They also do not consider heterogeneous privacy w.r.t. users.
	Moreover, their analysis does not allow for public features, i.e., features with infinitely large budgets.
	More recently, \cite{aliakbarpour2024enhancing} proposed a Bayesian Coordinate LDP framework, which is more general than ours. This framework also presents a promising approach to addressing potential risks arising from inner correlations between features, a topic not covered in this paper.

	\section{Methodology}\label{sec::methodology}
	This section is dedicated to the methodology of \texttt{HistOfTree}. 
	In Section \ref{sec:preliminaries}, we first present notations and preliminaries related to regression problems.
	We also provide the definition of semi-feature local differential privacy. 
	Next, we introduce the \texttt{HistOfTree} estimator in Section \ref{sec:ldpanypartition} under aligned privacy preference. 
	In Section \ref{sec:personalizedestimator}, we consider the case where the privacy preference of each user is different and propose a corresponding estimator.

	\subsection{Preliminaries}\label{sec:preliminaries}
	
	\paragraph{Notations}
	Throughout this paper, we use the notation $a_n \lesssim b_n$ and $a_n \gtrsim b_n$ to denote that there exist positive constant $c$ and $c'$ such that $a_n \leq c b_n$ and $a_n \geq c' b_n$, for all $n \in \mathbb{N}$.
	In addition, we denote $a_n\asymp b_n$ if $a_n\lesssim b_n$ and $b_n\lesssim a_n$.
	Let $a\vee b = \max (a,b)$ and $a\wedge b = \min (a,b)$. 
	For any vector $x$, let $x^i$ denote the $i$-th element of $x$. 
	Recall that for $1 \leq p < \infty$, the $L_p$-norm of $x = (x^1, \ldots, x^d)$ is defined by $\|x\|_p := (|x^1|^p + \cdots + |x^d|^p)^{1/p}$. 
	Let $x_{i:j} = (x_i, \cdots, x_j)$ be the slicing view of $x$ in the $i,\cdots, j$ position.
	For any set $A\subset \mathbb{R}^d$, the diameter of $A$ is defined by $\mathrm{diam}(A):=\sup_{x,x'\in A}\|x-x'\|_2$. 
	Let $A\times B$ be the Cartesian product of sets where $A\in\mathcal{X}_1$ and $B\in \mathcal{X}_2$. 
	For measure $\mathrm{P}$ on $\mathrm{X}_1$ and $\mathrm{Q}$ on $\mathcal{X}_2$, define the product measure $\mathrm{P} \otimes \mathrm{Q}$ on $\mathcal{X}_1\times \mathcal{X}_2$ as $\mathrm{P} \otimes \mathrm{Q} (A\times B)= \mathrm{P}(A)\cdot \mathrm{Q}(B)$. For an integer $k$, denote the $k$-fold product measure on $\mathcal{X}_1^k$ as $\mathrm{P}^k$.

	The goal of regression is to predict the value of an unobserved label of a given input $X$, based on a dataset $D:= \{ (X_i, Y_i) \}_{i=1}^n$ consisting of $n$ i.i.d. observations drawn from an unknown probability measure $\mathrm{P}$ on $\mathcal{X}\times \mathcal{Y} =  [0,1]^d \times [-M,M]$.
	It is legitimate to consider the least square loss $L : \mathcal{X} \times \mathcal{Y}\times \mathcal{Y} \to [0, \infty)$ defined by $L(x, y, f(x)) := (y - f(x))^2$ for our target of regression. Then, for a measurable decision function $f : \mathcal{X} \to \mathcal{Y}$, the risk is defined by $\mathcal{R}_{L,\mathrm{P}}(f) := \int_{\mathcal{X} \times \mathcal{Y}} L(x, y, f(x)) \, d\mathrm{P}(x,y)$. The Bayes risk is the smallest possible risk with respect to $\mathrm{P}$ and $L$. The function that achieves the Bayes risk is called {Bayes function}, namely, $f^*(x) := \mathbb{E} \big(Y|X=x\big)$.

	\begin{figure}[!t]
		\centering
		\subfigure[Aligned]{
			\begin{minipage}{0.28\linewidth}
				\centering
				\includegraphics[width=\textwidth]{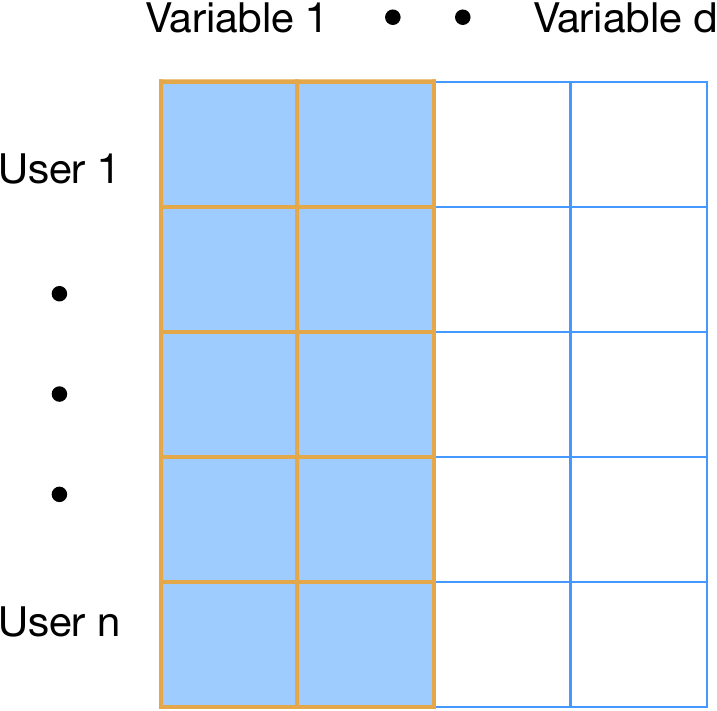}
			\end{minipage}
			\label{fig:diagram1}
		}
		\subfigure[Concentrated]{
			\begin{minipage}{0.28\linewidth}
				\centering
				\includegraphics[width=\textwidth]{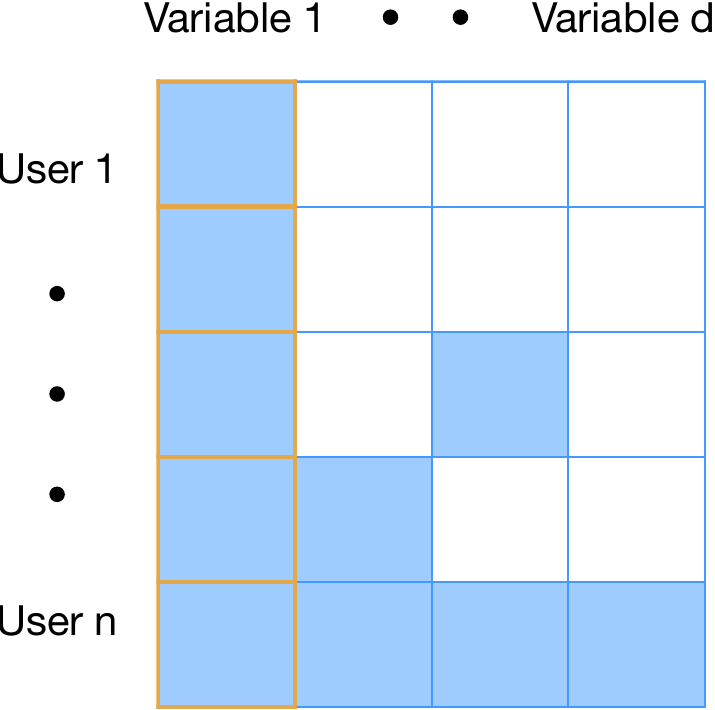}
			\end{minipage}
			\label{fig:diagram2}
		}
		\subfigure[Heavy tailed]{
			\begin{minipage}{0.28\linewidth}
				\centering
				\includegraphics[width=\textwidth]{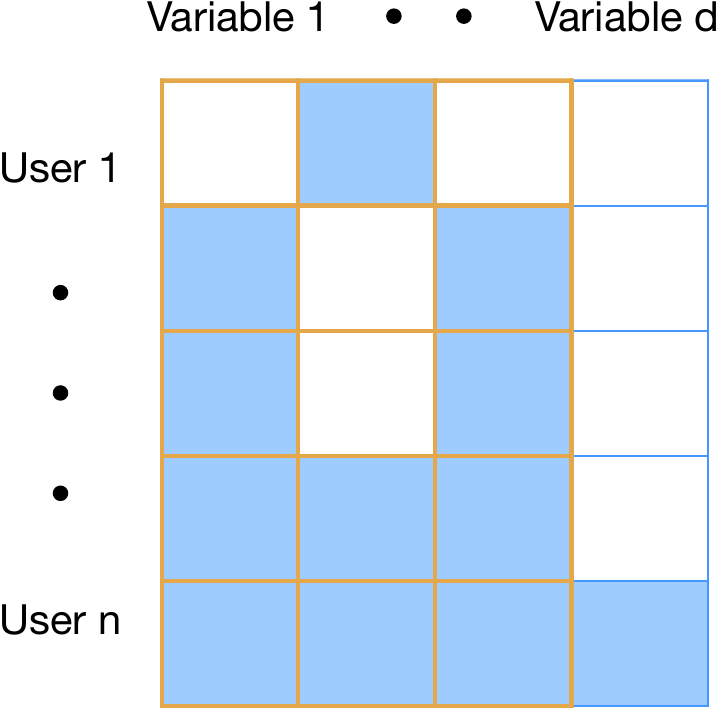}
			\end{minipage}
			\label{fig:diagram3}
		}
		\caption{Illustration of different $W$, where blue means $W_i^j=1$. In the aligned case (a), all users protect the first two features. In the personalized case, users specify different features, with the protected features being concentrated in (b) and spread in (c). The yellow boundaries represent the $s$ selected private features. }
		\label{fig:diagram}
	\end{figure}

	\paragraph{Semi-feature LDP}
	We formally set up private learning with public features. 
	For each $(X_i, Y_i)$, the label $Y_i$ and a subset of features of $X_i$ require privacy protection, while the remaining features can be released freely. 
	Let $W_i^{\ell} \in \{0,1\}$ be the indicator of whether the $\ell$-th feature of $X_i$ should be protected. 
	We assume the collection of private features is independent of $\mathrm{P}$ and $W_i^{\ell} $s are not necessarily protected and are publicly known.  
	The potential risk and solutions are discussed in Section \ref{sec:privacyriskfromw}. 
	We consider the aligned (resp. personalized) case, where the private features are identical (resp. different) for each sample.
	The corresponding $W$ are illustrated in Figure \ref{fig:diagram}. 
	Denote the number of private features of $X_i$ by $s_i^*$.

	With a slight abuse of notation, we write $(X_i, Y_i) = (X_i^{\text {priv}}, X_i^{\text {pub}}, Y_i)$, where $X_i^{\text {priv}}$ and $X_i^{\text {pub}}$ are the collection of private and public  features for $X_i$, respectively. 
	Let $\mathcal{X}^{\text{priv}}_i$ and $\mathcal{X}^{\text{pub}}_i$ be their corresponding domain space.
	Note that $X_i^{\text {priv}}$ and $X_i^{\text {pub}}$ are not necessarily ordered. 
	For instance, we allow $X_i^{\text {priv}} = (X_i^1, X_i^3)$ and $X_i^{\text {pub}} = (X_i^2, X_i^4)$. 
	Given the notations, we rigorously define our privacy constraint.

	\begin{definition}[\textbf{Semi-feature local differential privacy}]\label{def:ldp}
		Given data $\{(X_i,Y_i)\}_{i=1}^n$, a privatized information $Z_i$, which is a random variable on $\mathcal{S}$, is released based on $(X_i, Y_i)$ and $Z_1,\cdots, Z_{i - 1}$. 
		Let $\sigma(\mathcal{S})$ be the $\sigma$-field on $\mathcal{S}$. 
		$Z_i$ is drawn conditional on $(X_i, Y_i)$ and $Z_{1 : i -1}$ via the distribution $\mathrm{R}\left(S \mid X_i^{\text {priv}} , X_i^{\text {pub}}, Y_i , Z_{1 : i -1} \right) $ for $S\in \sigma(\mathcal{S})$. 
		Then the mechanism $\mathrm{R} = \{\mathrm{R}_i\}_{i = 1}^{n}$ is  \textit{$\varepsilon$-semi-feature local differential privacy} (\textit{$\varepsilon$-semi-feature LDP}) if for all $1 \leq i\leq  n, S \in \sigma(\mathcal{S})$, all $x^{\text {priv}},x^{\text {priv}\prime} \in \mathcal{X}^{\text{priv}}_i$, $x^{\text{pub}}  \in \mathcal{X}_i^{\text{pub}}$, $y, y^{\prime} \in \mathcal{Y}$, and $z_{1:i-1} \in \mathcal{S}^{i-1}$, there holds
		\begin{align}\label{equ:defofldpseq}
			\frac{\mathrm{R}_i\left(S \mid x^{\text {priv}},  x^{\text {pub}},  y, z_{1 : i -1} \right)}{\mathrm{R}_i\left(S \mid  x^{\text {priv}\prime},  x^{\text {pub}},  y',  z_{1 : i -1} \right)}  \leq e^{\varepsilon}
		\end{align}
		Moreover, if for all $1 \leq i\leq  n, S \in \sigma(\mathcal{S})$, all $x^{\text {priv}},x^{\text {priv}\prime} \in \mathcal{X}^{\text{priv}}_i$, $x^{\text{pub}}  \in \mathcal{X}_i^{\text{pub}}$, $y, y^{\prime} \in \mathcal{Y}$, there holds
		\begin{align}\label{equ:defofldpnon}
			\frac{\mathrm{R}_i\left(S \mid x^{\text {priv}},  x^{\text {pub}}, y \right)}{\mathrm{R}_i\left(S \mid x^{\text {priv}\prime}, x^{\text {pub}}, y' \right)}  \leq e^{\varepsilon}, 
		\end{align}
		then $\mathrm{R}$ is \textit{non-interactive $\varepsilon$-semi-feature LDP}. 
		Here, we let $0/0 = 1$.
	\end{definition}

	Similar definitions are proposed under central DP setting \citep{shen2023classification, krichene2024private, chua2024training}. 
	Semi-feature LDP requires individuals to guarantee their own privacy by considering the likelihood ratio of each $(X_i^{\text{priv}}, Y_i)$ as in \eqref{equ:defofldpseq} and \eqref{equ:defofldpnon}. 
	The value of $X_i^{\text{pub}}$ is regarded as known.  
	Once a view $z$ is provided, no further processing can reduce the deniability about taking a value $(x^{\text{priv}}, y)$ since any $z$ is nearly as likely to come from other initial value $(x^{\text{priv}\prime}, y^{\prime})$. 
	The definition is meaningful when no prior knowledge exists about the relationship between $x^{\text{priv}}$ and $x^{\text{pub}}$. 
	Otherwise, if $x^{\text{priv}}$ can be inferred using $x^{\text{pub}}$, a mechanism satisfying Definition \ref{def:ldp} may not necessarily provide sufficient protection for $x^{\text{priv}}$. 
	See, for example, \citet{aliakbarpour2024enhancing}. 
	
	If $s_i^* = d$, i.e. all features are private, our definition reduces to the common LDP \citep{kairouz2014extremal, duchi2018minimax} where non-parametric regression has been well-explored \citep{berrett2021strongly, gyorfi2022rate}. 
	If $s_i^* = 0$, i.e. only the labels are private, the definition reduces to label local differential privacy \citep{busa2021population, cunningham2022geopointgan}, which is shown to be an easier problem than pure LDP \citep{wang2019sparse, xu2023binary, zhao2024theoretical}. 
	Our definition explores the intermediate phase, which is practically meaningful \citep{krichene2024private, shen2023classification, chua2024training}.

	\subsection{\texttt{HistOfTree} Estimator for Aligned Private Features}\label{sec:ldpanypartition}

	In this section, we introduce the \texttt{HistOfTree} algorithm under the aligned private features.
	Let $\mathcal{X}^{\text{priv}} = [0,1]^{s^*}$ and $\mathcal{X}^{\text{pub}}= [0,1]^{d - s^*}$ be the private and public domain for all $i$. 
	
	\paragraph{Privacy Mechanism}
	We adopt a partition-based estimator, which creates disjoint partitions and predicts via the average of training labels in each partition grid. 
	Specifically, let $\pi^{\text{priv}} = \{A_j\}$ be a partition of $\mathcal{X}^{\text{priv}}$, i.e. $\cup A_j = \mathcal{X}^{\text{priv}}$ and  $A_{j_1}\cap A_{j_2} = \emptyset$, $j_1\neq j_2$. 
	Similarly, $\pi^{\text{pub}} = \{B_k\}$ on $\mathcal{X}^{\text{pub}}$. 
	The partition-based estimator (which is non-private) of $x$ in  $A_j \times B_k$ is 
	\begin{align}\label{equ:nonprivatedecisiontreeestimator}
		f(x) =  \frac{\sum_{i = 1 }^n Y_i \cdot  \eins_{A_j}(X_i^{\text{priv}}) \eins_{B_k}(X_i^{\text{pub}})  } {\sum_{i = 1 }^n \eins_{A_j}(X_i^{\text{priv}}) \eins_{B_k}(X_i^{\text{pub}}) }.
	\end{align}
	The denominator represents the sample marginal distribution in $A_j \times B_k$, while the numerator corresponds to their joint distribution. Thus, their division estimates their conditional relationship. 
	To perform such an estimation, three pieces of information are necessary from each data holder: $Y_i$, $\eins_{A_j}(X_i^{\text{priv}})$, and $\eins_{B_k}(X_i^{\text{pub}})$. 
	The first two of these require protection. 
	We employ the standard Laplace mechanism \citep{dwork2006calibrating} to protect $Y_i$, specifically:
	\begin{align}\label{equ:privavyprocedureY}
		\tilde{Y}_i = Y_i +  \frac{4M}{\varepsilon} \xi_i, 
	\end{align}
	where $\xi_i$s are i.i.d. standard Laplace random variables. 
	For indicator functions, we use the randomized response mechanism \citep{warner1965randomized} and let 
	\begin{align}\label{equ:privavyprocedureX}
		\tilde{U}_i^j= \begin{cases}C_{\varepsilon} \left(\eins_{A_j}(X_i^{\text{priv}}) -\frac{1}{1+e^{\varepsilon / 4}} \right)& \text { w.p. } \frac{e^{\varepsilon / 4}}{1+e^{\varepsilon / 4}}, \\  C_{\varepsilon} \left(\eins_{A_j^c}(X_i^{\text{priv}}) -\frac{1}{1+e^{\varepsilon / 4}} \right) & \text { w.p. } \frac{1}{1+e^{\varepsilon / 4}}, \end{cases}
	\end{align}
	where  $C_{\varepsilon} = \frac{e^{\varepsilon / 4}+1}{e^{\varepsilon / 4}-1}$. 
	Here, $A_j^c$ denotes the complement of $A_j$. 
	Note that $\mathbb{E}_{ \mathrm{R}}\left[\tilde{U}_{i}^j\right] =\eins_{A_j}(X_i^{\text{priv}})$ and thus we have an unbiased estimator of $\eins_{A_j}(X_i^{\text{priv}})$.

	\paragraph{Partition}
	To formalize the private estimator, we also need to formalize the partitions $\pi^{\text{priv}}$ and $\pi^{\text{pub}}$.
	The choice of $\pi^{\text{priv}}$ is restrictive due to privacy constraints. 
	We adopt the histogram partition, a classical technique in LDP learning \citep{berrett2021strongly, gyorfi2022rate}. 
	Specifically, consider $0 = a_0 < a_1 <\cdots < a_t = 1$, a equal length partition of $[0,1]$.
	Then a histogram partition of $\mathcal{X}^{\text{priv}} = [0,1]^{s^*}$ is $\left\{ \otimes_{k =1}^{s^*} [a_{\sigma_k}, a_{\sigma_k + 1})|0\leq \sigma_k \leq t \right\}$.
	For $\pi^{\text{pub}}$, we can fully utilize the information contained in the dataset.
	Given $\tilde{Y}_i$s and $X^{\text{pub}}_i$s, various candidate methods are available depending on the sample size, computational power, and prior information.
	We choose a decision tree partition for its merit of interpretability, efficiency, stability, extensiveness to multiple feature types, and resistance to the curse of dimensionality. 
	A decision tree partition is obtained by recursively split grids into subgrids using criteria such as variance reduction.
	Since it can be challenging to use the original CART rule \citep{breiman1984classification} for theoretical analysis, we adopt the \textit{max-edge} rule following \citet{cai2023extrapolated, ma2023decision, ma2024optimal}.
	This rule is amenable to theoretical analysis and can also achieve satisfactory practical performance.
	For each grid, the partition rule selects the midpoint of the longest edges that achieves the largest variance reduction. 
	This procedure continues until the depth of the tree reaches its predetermined limit. 
	Together, the \texttt{HistOfTree} estimator is
	\begin{align}\label{equ:ourestimator}
		\tilde{f}(x) = \sum_{A_j, B_k}   \eins_{A_j\times B_k}(x)
		\frac{\sum_{i = 1 }^n  \tilde{Y}_i\tilde{U}_i^j \eins_{B_k}(X_i^{\text{pub}})  } {\sum_{i = 1 }^n \tilde{U}_i^j \eins_{B_k}(X_i^{\text{pub}})}.
	\end{align}

	\subsection{Personalized Private Features}\label{sec:personalizedestimator}

	In practice, it is unlikely that all users share the same preferences regarding which features' privacy to prioritize.
	For instance, while most people may consider their favorite sports as insensitive information, some individuals feel uneasy about releasing such details due to concerns about targeted advertising, particularly from e-commerce platforms.
	We consider $W_i$ to be different for each $i$. 
	We assume the privacy budget $\varepsilon$ are identical across users.

	\textbf{Partition} \;\;
	The first step is to manually select out the features to be treated as private.
	Suppose $\sum_{i=1}^n W_i^{\ell}$ is ranked in decreasing order, i.e., the first feature is the most protected.
	We select out the first $s$ features with the largest $\sum_{i=1}^n W_i^{\ell}$, i.e. $1,\cdots, s$. 
	The selection of parameter $s$ is discussed in Section \ref{sec:selectionofs}.
	We create a histogram partition $\pi^{\text{priv}}$ on these dimensions. 
	Then on the rest of dimensions, we use all public items to create a decision tree partition $\pi^{\text{pub}}$. 
	Specifically, when splitting each grid, the variance reduction along the $\ell$-th dimension \eqref{equ:varreduction} is calculated by samples with $W^{\ell}_i = 0$. 
	We give the detailed algorithm for such partition rule in Algorithm \ref{alg:partition}.

	\begin{algorithm*}[h]
		\caption{Max-edge partition rule}
		\label{alg:partition}
		
		{\bfseries Input: }{ Public data $\{X^{\text{pub}}_i\}_{i=1}^n$, privatized labels $\{\tilde{Y}_i\}_{i=1}^n$, depth $p$. }

		{\bfseries Initialization: } $B_{0,0} =  [0,1]^d$, $D_{0,0} = \{X^{\text{pub}}_i\}_{i=1}^n$, $\pi_k = \emptyset$ for $1\leq k \leq p$. 
		
		\For{$k = 1$ {\bfseries to} $p$}{
			\For{$B_{k-1, j}$ }{
				Suppose $B_{k-1, j} = \times_{\ell=1}^d [a_{\ell}, b_{\ell}]$, let 
				$\mathcal{M}_{i-1}^{j} = \left\{k\mid |b_k - a_k| = \max_{\ell = 1,\cdots,d} |b_{\ell} - a_{\ell}| \right\}$. {\color{blue}\texttt{\# Longest edges.}}  \\
				Let ${B}_{k-1, j}^{0}(\ell) = \left\{x\mid x\in B_{k-1, j}, x^{\ell} < \frac{a_{\ell}+b_{\ell}}{2} \right\}$ and ${B}_{k -1, j}^{1}(\ell) = B_{k-1, j}/ B_{k-1, j}^0(\ell)$.\\ 
				Let ${D}_{k-1, j}(\ell) = \left\{ X^{\text{pub}}_i \in {D}_{k-1, j} \mid W_i^{\ell} = 0\right\}$. 	{\color{blue}\texttt{\# Identify available samples along each axis.}}\\ 
				Let $g(A, D)$ be the sample variance of $\tilde{Y}_i$s whose $X_i^{\text{pub}}$ is in both $A$ and $D$.  Select $\ell$ as
				\begin{align}\label{equ:varreduction}
					{\arg\min}_{\ell \in \mathcal{M}_{i-1}^{j}} \;\; g({B}_{k-1, j}^{0}(\ell), {D}_{k-1, j}(\ell)  ) + g({B}_{k-1, j}^{1}(\ell), {D}_{k-1, j}(\ell) ) .
				\end{align} \\
				${B}_{k, 2j-1} = {B}_{k-1, j}^{0}(\ell)$, ${B}_{k, 2j} = {B}_{k-1, j}^{1}(\ell)$, $\pi_k = \pi_k \cup \{{B}_{k, 2j-1}, {B}_{k, 2j}\}$.
				\\
				$D_{k,2j-1} = \{X^{\text{pub}}_i \in {D}_{k-1, j}(\ell) \mid X^{\text{pub}\ell}_i< \frac{a_{\ell}+b_{\ell}}{2} \}$, $D_{k,2j} = {D}_{k-1, j}(\ell)  / D_{k,2j-1} $. {\color{blue}\texttt{\# Allocate samples. }} 
			}
		}
		{\bfseries Output: }{ Partition $\pi_p$}
	\end{algorithm*}

	\paragraph{Privacy Mechanism}
	Injecting noise into the grid indices is trickier. 
	We first define the potential grids for a sample. 
	For $X_i$, let 
	\begin{align*}
		\mathcal{V}_i = \bigg\{ A \times B \in   \pi^{\text{priv}}\otimes \pi^{\text{pub}} & \mid \exists \overline{X}\in A\times B \; \text{ s.t. }\\ & X_i^{\ell} = \overline{X}^{\ell}  \; \text{ if  } \;\; W_i^{\ell} = 0 \bigg\}. 
	\end{align*}
	Intuitively, $ \mathcal{V}_i$ is the collection of grids where $X_i$ could potentially be located, given no information about its private features. 
	For simplicity, let $j$ be the index of $A \times B$ in the combined partition $\pi^{\text{priv}} \otimes \pi^{\text{pub}}$. 
	We then define the privacy mechanism:
	\begin{equation}\label{equ:privavyprocedurepersonalize}
		\begin{aligned}	\tilde{Y}_i = &  Y_i +  \frac{4M}{\varepsilon} \xi_i,  \text{ and }\\
			\tilde{V}_i^j= &  \begin{cases} 0 & \text{ if } A\times B \notin \mathcal{V}_i,  \\ C_{\varepsilon} \left(\eins_{A\times B}(X_i) -\frac{1}{1+e^{\varepsilon / 4}} \right)& \text {else w.p. } \frac{e^{\varepsilon / 4}}{1+e^{\varepsilon / 4}},  \\  C_{\varepsilon}\left( \eins_{(A\times B)^c}(X_i) -\frac{1}{1+e^{\varepsilon / 4}} \right) & \text {else w.p. } \frac{1}{1+e^{\varepsilon / 4}}.\end{cases}
		\end{aligned}
	\end{equation}
	In this case, the indices of potential grids are estimated analogously to $\tilde{U}_i^j$, while the indices for the remaining grids, which are already known without private features, are zeroed out. 
	Furthermore, we have $\mathbb{E}_{\mathrm{P}, \mathrm{R}}\left[\tilde{V}_{i}^j\right] =\mathbb{E}_{\mathrm{P}}\left[\eins_{A\times B}(X_i)\right]$. 
	Consequently, we define the personalized estimator:
	\begin{align}\label{equ:ourestimatorpersonal}
		\tilde{f}(x) = \sum_{A\times B}   \eins_{A\times B}(x) 
		\frac{\sum_{i = 1 }^n  \tilde{Y}_i\tilde{V}_i^j   } {\sum_{i = 1 }^n \tilde{V}_i^j }
	\end{align}
	where $j$ is the index of $A\times B$. 
	The following proposition shows the privacy guarantee of \eqref{equ:privavyprocedurepersonalize}.
	\begin{proposition}\label{prop:privacy}
		Let $\pi = \{ A\times B\mid A\in\pi^{\text{priv}}, B\in \pi^{\text{pub}}\}$ be any partition of $\mathcal{X}$.
		Then the privacy mechanism \eqref{equ:privavyprocedurepersonalize} is non-interactively $\varepsilon$-semi-feature LDP. 
	\end{proposition}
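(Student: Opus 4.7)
The plan is to exploit the fact that $\tilde{Y}_i$ and $(\tilde{V}_i^j)_j$ in \eqref{equ:privavyprocedurepersonalize} are produced from independent sources of randomness, so the joint conditional density factors and the two privacy losses multiply. Combined with the observation that the mechanism has no dependence on $z_{1:i-1}$, this reduces verification of Definition \ref{def:ldp} to bounding the label-noise channel by $e^{\varepsilon/2}$ and the grid-indicator channel by $e^{\varepsilon/2}$.

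For the label channel, $Y_i \in [-M, M]$ has sensitivity $2M$ and $\xi_i$ is standard Laplace scaled by $4M/\varepsilon$. A direct density-ratio calculation for the Laplace distribution yields a worst-case ratio of $\exp(2M\cdot\varepsilon/(4M)) = e^{\varepsilon/2}$, uniformly over $x^{\text{priv}}$ and $x^{\text{pub}}$.

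For the grid-indicator channel, the crucial observation is that $\mathcal{V}_i$ depends only on $x^{\text{pub}}$ and $W_i$, not on $x^{\text{priv}}$. Hence for indices $j$ with $A\times B \notin \mathcal{V}_i$, the coordinate $\tilde{V}_i^j$ is deterministically zero under both $(x^{\text{priv}}, x^{\text{pub}})$ and $(x^{\text{priv}\prime}, x^{\text{pub}})$, contributing a factor of exactly $1$ to the likelihood ratio. Within $\mathcal{V}_i$, the true indicator vector $(\eins_{A\times B}(X_i))_{A\times B \in \mathcal{V}_i}$ is one-hot because exactly one grid contains $X_i$, so swapping $x^{\text{priv}}$ for $x^{\text{priv}\prime}$ flips at most two coordinates -- the old grid's indicator from $1$ to $0$ and the new grid's from $0$ to $1$ -- while all remaining coordinates stay at $0$ and again contribute factor $1$. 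Since each single-coordinate randomized response is $\varepsilon/4$-LDP (its truth-telling probability $e^{\varepsilon/4}/(1+e^{\varepsilon/4})$ has log-odds exactly $\varepsilon/4$), the two changing coordinates together contribute at most $e^{\varepsilon/4}\cdot e^{\varepsilon/4} = e^{\varepsilon/2}$.

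Multiplying the two channel bounds gives $e^{\varepsilon/2}\cdot e^{\varepsilon/2} = e^{\varepsilon}$, matching the non-interactive form \eqref{equ:defofldpnon}. The step I anticipate as the main obstacle is this ``at most two coordinates change'' argument: a naive sequential composition over all grids in $\pi^{\text{priv}}\otimes\pi^{\text{pub}}$ would yield a privacy loss growing with the partition size and destroy the bound, so I must justify carefully that indicators outside $\mathcal{V}_i$ are deterministic given $x^{\text{pub}}$ (hence contribute factor $1$) and that the one-hot structure inside $\mathcal{V}_i$ limits the number of changing coordinates to at most two in the worst case.
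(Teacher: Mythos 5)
Your proposal is correct and follows essentially the same route as the paper's proof: factor the conditional density into the Laplace label channel and the randomized-response indicator channel via conditional independence, bound each by $e^{\varepsilon/2}$ (the latter using the one-hot structure so that only two coordinates of the indicator vector differ between $x^{\text{priv}}$ and $x^{\text{priv}\prime}$), and multiply. Your explicit observation that $\mathcal{V}_i$ depends only on $x^{\text{pub}}$ and $W_i$, so the zeroed-out coordinates contribute a factor of exactly $1$, is a point the paper spells out only in the proof of the generalized-response variant (Proposition A.1), but it is the same underlying argument.
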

	When the public features are shared across users, i.e. $\sum_{i = 1}^n W_i^{\ell} = n $ for $1\leq \ell \leq s $ and $\sum_{i = 1}^n W_i^{\ell} =0 $ otherwise, mechanism \eqref{equ:privavyprocedurepersonalize} reduces to that in \eqref{equ:privavyprocedureY} and \eqref{equ:privavyprocedureX}.
	The privacy guarantees of these mechanisms are thus implied by Proposition \ref{prop:privacy}. 
	In this case, the estimator \eqref{equ:ourestimatorpersonal} also recovers \eqref{equ:ourestimator}.

	\subsection{Potential Privacy Risk from $W$}\label{sec:privacyriskfromw}
	
	In some cases, $W$ and $(X,y)$ are dependent, which brings additional privacy leakage.
	For instance, AIDS carriers may choose to conceal their status if they are diagnosed with the condition.
	If the carriers are aware of the mechanism and believes it could leak information, they would provide incorrect results (choose not to prevent this feature and claim no AIDS).
	We propose two solutions. 
	(i)
	If the curator wishes to prevent such wrong data collection, it should classify highly correlated features as private beforehand, including these features among the first $s^*$ private features. 
	This approach requires certain prior knowledge and does not completely resolve the privacy accounting issue.
	(ii) Observing the estimator \eqref{equ:ourestimatorpersonal}, there is no need to release $W$ to the server. The server only needs to receive a binary vector of $V$.
	The only remaining issue is the tree partition, which could be mitigated by using random selection instead of \eqref{equ:varreduction}.
	Although this reduces the effectiveness of the partition, it entirely eliminates the risk associated with $W$.

	\section{Theoretical Results}\label{sec:theoretical}
	
	In this section, we present our theoretical results and related comments. 
	We first provide the mini-max lower bound under semi-feature LDP with aligned privacy preference in Section \ref{sec:lowerbound}. 
	In Section \ref{sec:utilityguarantee}, we establish the optimal convergence rate of \texttt{HistOfTree} estimator. 
	Based on the theoretical findings, we discuss the choice of the number of private dimensions $s$ in Section \ref{sec:selectionofs}.

	\subsection{A Mini-max Lower Bound}\label{sec:lowerbound}

	We first present a necessary assumption on the distribution $\mathrm{P}$, which is a standard condition widely used in non-parametric statistics.

	\begin{assumption}\label{asp:alphaholder}
		\label{asp:boundedmarginal}
		Assume that the density function of $\mathrm{P}$ is upper and lower bounded, i.e. $\underline{c} \leq {d\mathrm{P} (x)} / {dx}\leq \overline{c}$ for some $\overline{c} \geq  \underline{c}>0$. 
		Let $\alpha \in (0, 1]$. Assume  $f^* : \mathcal{X} \to \mathbb{R}$ is $\alpha$-H\"{o}lder continuous, i.e. there exists  $c_L > 0$ s.t. for all $x_1, x_2 \in \mathcal{X}$,  $|f^*(x_1) - f^*(x_2)| \leq c_L \|x_1 - x_2\|^{\alpha}$.
	\end{assumption}
	
	The bounded density assumption can greatly simply the analysis, yet can be removed without hurting the conclusion, see \citet{gyorfi2022rate}.
	We present the following theorem that specifies the mini-max lower bound with semi-feature LDP  under the above assumptions.
	The proof is based on first constructing a finite class of hypotheses and then applying information inequalities for local privacy \citep{duchi2018minimax} and Assouad's Lemma \citep{tsybakov2008introduction}.

	\begin{theorem}\label{thm:lowerbound}
		Denote the function class of $\mathrm{P}$ satisfying Assumption \ref{asp:alphaholder} by $\mathcal{F}$. 
		Consider the aligned privacy preference where $W_i^{\ell} = 1$ if $\ell \leq s^*$ and $W_i^{\ell} = 0$ otherwise. 
		Then for any estimator $\widehat{f}$ that is  sequentially-interactive $\varepsilon$-semi-feature LDP, there holds
		\begin{align}\nonumber
			&	\inf_{\widehat{f}}\sup_{\mathcal{F}}\mathbb{E}_{\mathrm{P}}\left[ \mathcal{R}_{\mathrm{P}} (\widehat{f}) - \mathcal{R}_{\mathrm{P}}^*  \right] \\
			\gtrsim &  \left(n (e^\varepsilon-1)^2\right)^{-\frac{2 \alpha}{2\alpha+d + s^* }}  \vee n^{-\frac{2\alpha}{2\alpha + d}}.
			\label{equ:minimaxconvergencerate}
		\end{align}
	\end{theorem}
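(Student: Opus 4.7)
The plan is to prove the two terms of the lower bound separately and take their maximum. The second term $n^{-2\alpha/(2\alpha+d)}$ is the classical minimax rate for $\alpha$-H\"older non-parametric regression without any privacy constraint; since a semi-feature LDP procedure only sees a (randomized) post-processing of $(X_i, Y_i)$, this rate is automatically a lower bound and needs no new ingredients beyond citing existing results (e.g., \citet{gyorfi2022rate}).

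For the first term I would follow the Assouad--Duchi recipe, with Duchi's local-privacy information inequality applied conditionally on the public features. Choose $m$ to be determined, partition $[0,1]^d$ into $m^d$ congruent sub-cubes, fix a smooth bump $\phi$ supported in the unit cube, and for each $\sigma \in \{-1,+1\}^{m^d}$ set
\begin{equation*}
f_\sigma(x) = c\, m^{-\alpha} \sum_{k=1}^{m^d} \sigma_k\, \phi\bigl(m(x-x_k)\bigr),
\end{equation*}
with $c$ small enough that $f_\sigma$ is $\alpha$-H\"older with constant $c_L$ and takes values in $[-M,M]$. Take $X$ uniform on $[0,1]^d$ and $Y = f_\sigma(X)+\xi$ for bounded zero-mean noise, so that $\mathrm{P}_\sigma\in\mathcal{F}$. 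A direct computation gives $\|f_\sigma-f_{\sigma'}\|_2^2 \asymp m^{-2\alpha-d}\, d_H(\sigma,\sigma')$, and Assouad's Lemma reduces the problem to bounding $\mathrm{TV}(Q_\sigma^{\otimes n},Q_{\sigma'}^{\otimes n})$ for single-bit neighbors, where $Q_\sigma$ is the joint law of the curator's observation $(Z_i,X_i^{\text{pub}})$.

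Because the marginal of $X^{\text{pub}}$ is uniform and independent of $\sigma$, the KL divergence decomposes into an expectation conditional on the public coordinates. Conditional on $X^{\text{pub}}$, the channel $(X^{\text{priv}},Y)\mapsto Z$ is a sequentially interactive $\varepsilon$-LDP mechanism in the classical sense, so \citet{duchi2018minimax} gives
\begin{equation*}
\mathrm{KL}(Q_\sigma^{\otimes n}\|Q_{\sigma'}^{\otimes n}) \lesssim n(e^\varepsilon-1)^2\, \mathbb{E}_{X^{\text{pub}}}\bigl\|\mathrm{P}_\sigma(\cdot\mid X^{\text{pub}})-\mathrm{P}_{\sigma'}(\cdot\mid X^{\text{pub}})\bigr\|_{\mathrm{TV}}^2.
\end{equation*}
For single-bit neighbors whose differing cube $C$ has private-projection volume $m^{-s^*}$ and public-projection volume $m^{-(d-s^*)}$, the inner TV is $O(m^{-\alpha-s^*})$ on $X^{\text{pub}}\in C^{\text{pub}}$ and zero elsewhere, so the expectation is of order $m^{-2\alpha-d-s^*}$. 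Balancing $n(e^\varepsilon-1)^2 m^{-2\alpha-d-s^*}\asymp 1$ forces $m\asymp(n(e^\varepsilon-1)^2)^{1/(2\alpha+d+s^*)}$, yielding the risk lower bound $m^{-2\alpha}\asymp(n(e^\varepsilon-1)^2)^{-2\alpha/(2\alpha+d+s^*)}$.

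The main obstacle is establishing the conditional form of Duchi's information inequality under sequentially interactive semi-feature LDP. Under sequential interaction the posterior updates with $Z_{1:i-1}$, and one must verify that conditioning on $X^{\text{pub}}_{1:n}$ does not break the standard tensorization. I would treat $X^{\text{pub}}_i$ as side information available to both the channel and the curator, and check step by step that the conditional channel $(X_i^{\text{priv}},Y_i)\mid X_i^{\text{pub}}, Z_{1:i-1} \to Z_i$ is $\varepsilon$-LDP in the classical sense, so that the single-step bound of Duchi--Jordan--Wainwright tensorizes as usual. Beyond this, the argument is routine bookkeeping: checking Assumption~\ref{asp:alphaholder} for the bump family, boundedness of $Y$, and converting the KL bound to $\mathrm{TV}\le 1/2$ via Pinsker to close Assouad.
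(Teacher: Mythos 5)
Your proposal is correct and follows essentially the same route as the paper: a bump-function hypothesis class, tensorization of the KL over samples, the observation that the public marginal is identical across hypotheses so the KL reduces to an expectation over $X^{\text{pub}}$ of a conditional KL, an application of Theorem 1 of \citet{duchi2018minimax} to the conditional channel (which is classical $\varepsilon$-LDP in $(X^{\text{priv}},Y)$ by Definition \ref{def:ldp}), and the same computation giving expected squared conditional TV of order $m^{-2\alpha-d-s^*}$, hence $m\asymp (n(e^{\varepsilon}-1)^2)^{1/(2\alpha+d+s^*)}$. The only (immaterial) difference is that you close the argument with Assouad over the $\{\pm1\}^{m^d}$ hypercube while the paper indexes single bumps and invokes Theorem 2.12 of \citet{tsybakov2008introduction}; both yield the stated rate.
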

	
	The lower bound consists of two terms. 
	The second term, $n^{-\frac{2\alpha}{2\alpha + d}}$, represents the classical mini-max lower bound for non-private learners under Assumption \ref{asp:alphaholder} \citep{tsybakov2008introduction}. 
	Regarding the first term, if $\varepsilon$ is sufficiently large such that $\left(e^{\varepsilon} - 1\right) \gtrsim n^{\frac{s^*}{4\alpha + 2d}}$, it is dominated by the second term. 
	This is the case where the level of privacy is not significant enough to degrade the estimator. 
	For constant level $\varepsilon$, it approaches the second term as $s^*$ becomes smaller, as the learning essentially becomes non-private for $s^* = 0$. 
	If $s^* = d$, our results reduce to the special case of LDP non-parametric regression with only private data. 
	Previous studies \citep{berrett2021strongly, gyorfi2022rate} have provided estimators that achieve the mini-max optimal convergence rate. 
	Our results depict the optimal behavior in the intermediate zone. 
	The performance gain brought by $d-s^*$ public features grows with the rate $\frac{2\alpha}{2\alpha + 2d - (d - s^*)}$, which becomes more significant for larger values of $d-s^*$.

	\subsection{Convergence Rate of \texttt{HistOfTree} }\label{sec:utilityguarantee}
	
	In this section, we present an upper bound for the \texttt{HistOfTree} estimator under personalized privacy preferences. 
	This bound subsequently implies the optimal rate under aligned privacy preferences.

	\begin{theorem}\label{thm:utility}
		Let the \texttt{HistOfTree} estimator $\tilde{f}$ be defined in \eqref{equ:ourestimatorpersonal}. 
		For $\pi = \pi^{\text{priv}} \otimes \pi^{\text{pub}} $, let $\pi^{\text{priv}}$ be a histogram partition with $t$ bins and $\pi^{\text{pub}}$ be generated by Algorithm \ref{alg:partition} with depth $p$.  
		Let Assumption \ref{asp:alphaholder} holds. 
		Define the solution to the equation 
		\begin{align}\label{equ:selectionofp}
			p^* = \underset{p}{\arg\min} \frac{ 2^{p \frac{d + s}{d - s}}  \cdot \log n}{n\varepsilon^2 } \cdot \delta(p) + \; 2^{-2 \alpha p / (d - s)} 
		\end{align} 
		where we let 
		\begin{align}
			\delta(p) =    \frac{1}{n}\sum_{i=1}^n 2^{\sum_{\ell=s+1}^d W_i^{\ell} \cdot p /(d-s)} . 
		\end{align}
		Then, let $\lambda^* = \log_2 (\delta(p^*)) / p^{*}$. 
		Then, $\tilde{f}$ is $\varepsilon$-semi-feature LDP. 
		Moreover, for any $\varepsilon \lesssim \left(n / \log n\right)^{\frac{s}{\alpha + d - s}}$, there exists some choice of the parameters $p\asymp p^* \asymp \log n\varepsilon^2$ and $t \asymp 2^{p^* / (d - s)}$ such that 
		\begin{align}\label{equ:convergencerateofpersonalizedestimator}
			\mathcal{R}_{L,\mathrm{P}} (\tilde{f}) - \mathcal{R}_{L,\mathrm{P}}^*   \lesssim  \left(\frac{\log n
			}{n \varepsilon^2}\right)^{\frac{2 \alpha}{2\alpha+d + s + \lambda^*(d - s)}}
		\end{align}
		holds with probability $1- 8 /n^2$ w.r.t. $\mathrm{P}^n\otimes\mathrm{R}$ where $\mathrm{R}$ is the joint distribution of privacy mechanisms in \eqref{equ:privavyprocedurepersonalize}.
	\end{theorem}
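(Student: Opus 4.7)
The plan is a bias--variance decomposition adapted to the product-partition structure $\pi = \pi^{\text{priv}} \otimes \pi^{\text{pub}}$ and to the per-sample potential-grid set $\mathcal{V}_i$ of the privacy mechanism. The privacy guarantee is immediate from Proposition~\ref{prop:privacy}, so the work is in the utility bound. Write $\bar{f}$ for the piecewise-constant function taking value $\mathbb{E}[f^*(X)\mid X\in A\times B]$ on each cell $A\times B\in\pi$. Since $L$ is the squared loss,
\begin{equation*}
\mathcal{R}_{L,\mathrm{P}}(\tilde{f})-\mathcal{R}_{L,\mathrm{P}}^{*} \;=\; \|\tilde{f}-f^*\|_{L_2(\mathrm{P})}^2 \;\le\; 2\|\tilde{f}-\bar{f}\|_{L_2(\mathrm{P})}^2 \;+\; 2\|\bar{f}-f^*\|_{L_2(\mathrm{P})}^2 .
\end{equation*}
For the approximation term I would bound the diameter of each cell: the histogram part of $\pi^{\text{priv}}$ gives diameter $\sqrt{s}/t$, while the depth-$p$ max-edge tree on $d-s$ axes gives diameter $O(2^{-p/(d-s)})$ along every axis (the standard max-edge diameter lemma, cf.\ \citet{cai2023extrapolated,ma2023decision,ma2024optimal}). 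Choosing $t\asymp 2^{p/(d-s)}$ equalises these scales, and Assumption~\ref{asp:alphaholder} yields approximation error $\lesssim 2^{-2\alpha p/(d-s)}$.

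The heart of the proof is the cell-wise variance analysis. For a fixed cell $A\times B$ with index $j$,
\begin{equation*}
\tilde{f}(x)-\bar{f}(x) \;=\; \frac{\sum_{i=1}^n \tilde{Y}_i\,\tilde{V}_i^j}{\sum_{i=1}^n \tilde{V}_i^j}\;-\;\bar{f}(x).
\end{equation*}
By construction $\tilde{V}_i^j=0$ on $\{A\times B\notin\mathcal{V}_i\}$, so only the admissible count $n_j^{*}:=|\{i:A\times B\in\mathcal{V}_i\}|$ contributes noise. Bernstein's inequality applied to $\sum_{i}(\tilde{V}_i^j-\mathbb{E}[\tilde{V}_i^j])$ gives deviation of order $\sqrt{n_j^{*}\log n}/\varepsilon$ for the denominator; a parallel argument, after separating the Laplace noise on $\tilde{Y}_i$, controls the numerator. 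On the event that the denominator stays within a constant factor of its mean $n\mathrm{P}(A\times B)$ --- which is where I use $n\mathrm{P}(A\times B)\gtrsim\sqrt{n_j^{*}\log n}/\varepsilon$ --- a standard ratio concentration step yields the per-cell bound
\begin{equation*}
\mathrm{P}(A\times B)\cdot\mathbb{E}_{\mathrm{R}}\!\left[(\tilde{f}(x)-\bar{f}(x))^{2}\mid x\in A\times B\right] \;\lesssim\; \frac{n_j^{*}\log n}{n^{2}\varepsilon^{2}\,\mathrm{P}(A\times B)} .
\end{equation*}

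Summing over cells, Assumption~\ref{asp:alphaholder} gives $\mathrm{P}(A\times B)\asymp|\pi|^{-1}\asymp 2^{-pd/(d-s)}$, and the combinatorial identity
\begin{equation*}
\sum_{j} n_j^{*} \;=\; \sum_{i=1}^{n}|\mathcal{V}_i| \;\asymp\; t^{s}\sum_{i=1}^{n}2^{(p/(d-s))\sum_{\ell>s}W_i^{\ell}} \;=\; n\,t^{s}\,\delta(p)
\end{equation*}
produces an estimation error $\lesssim 2^{p(d+s)/(d-s)}\delta(p)\log n/(n\varepsilon^{2})$. Balancing this against $2^{-2\alpha p/(d-s)}$ is precisely the minimisation in~\eqref{equ:selectionofp}, and plugging in $\lambda^{*}=\log_{2}\delta(p^{*})/p^{*}$ together with $p\asymp p^{*}\asymp\log(n\varepsilon^{2})$ recovers~\eqref{equ:convergencerateofpersonalizedestimator}. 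The two technical obstacles I anticipate are (i) keeping $\sum_{i}\tilde{V}_i^{j}$ bounded away from zero \emph{uniformly} over all $|\pi|$ cells, which is exactly what the constraint $\varepsilon\lesssim(n/\log n)^{s/(\alpha+d-s)}$ buys via a union-bound-friendly lower bound on $n\mathrm{P}(A\times B)$; and (ii) handling the data-dependence of $\pi^{\text{pub}}$, since the tree is grown from $\{\tilde{Y}_i\}$ and $\{X_i^{\text{pub}}\}$. For (ii) I would condition on the realised partition and absorb the finitely many depth-$p$ max-edge trees (at most $(d-s)^{2^{p}-1}$ of them) into the probability budget, incurring only an extra $\log n$ factor that is already present in~\eqref{equ:convergencerateofpersonalizedestimator}.
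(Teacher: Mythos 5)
Your overall strategy tracks the paper's proof closely: the paper also reduces to an approximation term controlled by cell diameters (its Lemma \ref{lem::treeproperty} and Lemma \ref{lem:boundingofapproximationerror}), a per-cell ratio-concentration analysis in which the denominator $\sum_i \tilde{V}_i^j$ is shown to stay of order $n\mathrm{P}(A\times B)\asymp n t^{-s}2^{-p}$, and the same combinatorial count $\sum_j n_j^* = \sum_i |\mathcal{V}_i| \lesssim n t^s \delta(p)$ (its Lemma \ref{lem:zerocountsofV}), after which balancing against $2^{-2\alpha p/(d-s)}$ yields \eqref{equ:selectionofp} and \eqref{equ:convergencerateofpersonalizedestimator}. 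The only structural difference is that you fold the paper's separate ``sample error'' and ``privatized error'' terms of \eqref{equ:decompositionofexcessrisk} into a single estimation term; that is harmless provided your ``parallel argument'' for the numerator also tracks the non-private sampling fluctuation of order $t^s 2^p \log n / n$, which is dominated in the stated $\varepsilon$ regime.

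There is, however, one step that fails as written: your treatment of the data-dependence of $\pi^{\text{pub}}$ in obstacle (ii). A union bound over all realizable depth-$p$ max-edge trees, of which there are up to $(d-s)^{2^p-1}$, inflates the Bernstein deviation by $\sqrt{\log\left((d-s)^{2^p-1}\right)}\asymp \sqrt{2^p}$ rather than $\sqrt{\log n}$; since $2^{p}\asymp (n\varepsilon^2/\log n)^{(d-s)/(2\alpha+d+s+\lambda^*(d-s))}$ is polynomial in $n$, this destroys the claimed rate. The quantities you need to control are per-cell statistics, so it suffices to be uniform over the collection of \emph{cells} that can appear, not over whole trees: every max-edge cell is a dyadic rectangle of total depth $p$, and there are only $\mathrm{poly}(n)$ many of these (crossed with the $t^s$ histogram bins), giving a genuine $O(\log n)$ penalty. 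The paper achieves the same effect more generally via a VC/covering argument over all axis-aligned rectangles (its Lemma \ref{VCIndex} and Lemmas \ref{lem:cellbound}--\ref{lem:cellboundY}), with covering number polynomial in $1/\varepsilon$ of exponent $2d$. Replace the tree-level union bound by either of these and the rest of your argument goes through.
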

	
	The upper bound \eqref{equ:convergencerateofpersonalizedestimator} is some power of $\log n / n\varepsilon^2$. 
	The presence of the $\log n$ term comes from the high probability nature of Theorem \ref{thm:utility}, while the privacy budget contributes a term of $\varepsilon^2$. 
	Apparently, the tighter the privacy requirement (i.e., the smaller $\varepsilon$), the more relaxed the upper bound becomes.
	Moreover, we require $\varepsilon \lesssim \left(n / \log n\right)^{\frac{s}{\alpha + d - s}}$.
	If $\varepsilon$ exceeds this threshold, the privacy noise becomes negligible, and the presented upper bound might be dominated by the non-private term $n^{-\frac{2\alpha}{2\alpha + d}}$.
	
	We comment on the theorem regarding the selected $p^*$ and the term $\delta(p^*)$. 
	The final rate \eqref{equ:convergencerateofpersonalizedestimator} involves $\lambda^*$, a manufactured quantity representing the impact of the denser tail of the privacy mask matrix $W$. 
	Generally, for a fixed $p$, if a user $i$ chooses to protect more features in $s+1 ,\cdots, d$, the quantity $\sum_{\ell = s+1}^{d} W^{\ell}_i$ becomes larger, and $\delta(p)$ also increases.
	In this case, the selected $p^*$ decreases, whereas the final $\delta(p^*)$ increases (see proof for details). 
	This suggests that, for more demanding privacy protection, the optimal depth $p^*$ (as well as the number of histogram bins $t$, since $ t \asymp \log n\varepsilon^2$) should be reduced, meaning fewer bins are preferable to maintain estimation stability.
	Additionally, since $2^{\lambda^* p^*}  = \delta(p^*)$, the value of $\lambda^*$ is larger, subsequently increasing the upper bound.
	This confirms the intuition, as protecting more features inherently necessitates a looser bound.

	Under the aligned privacy preference, where $W_i^{\ell} = 0$ for $\ell \geq s^* + 1$, Theorem \ref{thm:utility} has a direct implication as outlined in the following corollary.

	\begin{corollary}
		Consider the aligned privacy preference where $W_i^{\ell} = 1$ if $\ell \leq s^* $ and $W_i^{\ell} = 0$ otherwise. 	
		Then, under the same conditions and settings to Theorem \ref{thm:utility}, let $p\asymp p^* \asymp \log n\varepsilon^2$ and $t \asymp 2^{p^* / (d - s^* )}$.
		There holds
		\begin{align}\label{equ:convergencerateofalignedestimator}
			\mathcal{R}_{L,\mathrm{P}} (\tilde{f}) - \mathcal{R}_{L,\mathrm{P}}^*   \lesssim  \left(\frac{\log n
			}{n \varepsilon^2}\right)^{\frac{2 \alpha}{2\alpha+d + s^*  }}
		\end{align}
		with probability $1- 8 /n^2$ with respect to $\mathrm{P}^n\otimes\mathrm{R}$.
	\end{corollary}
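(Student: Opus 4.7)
The plan is to specialize Theorem \ref{thm:utility} to the aligned privacy preference by identifying $s = s^*$ and showing that the manufactured quantity $\lambda^*$ collapses to zero. The entire argument is a direct substitution, so there is essentially no technical obstacle beyond bookkeeping; the main observation is that the heavy-tail correction term $\delta(p)$ trivializes when every user has exactly the same mask.

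First I would set $s = s^*$ in the theorem. Under the aligned preference, $W_i^{\ell} = 0$ for every $i$ and every $\ell \geq s^* + 1$, so that for any depth $p$ one has $\sum_{\ell = s^*+1}^{d} W_i^{\ell} = 0$ for each $i$. Substituting this into the definition of $\delta(p)$ gives
\begin{align*}
\delta(p) = \frac{1}{n}\sum_{i=1}^n 2^{\sum_{\ell=s^*+1}^{d} W_i^{\ell} \cdot p /(d-s^*)} = \frac{1}{n}\sum_{i=1}^n 2^0 = 1
\end{align*}
for every $p$. Consequently, the minimization problem \eqref{equ:selectionofp} reduces to balancing $2^{p(d+s^*)/(d-s^*)} \log n/(n\varepsilon^2)$ against $2^{-2\alpha p/(d-s^*)}$, whose optimizer obeys $p^* \asymp \log(n\varepsilon^2)$ as stated, and one obtains $\lambda^* = \log_2(\delta(p^*))/p^* = \log_2(1)/p^* = 0$.

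Second, the privacy guarantee is inherited verbatim from Proposition \ref{prop:privacy}, since the aligned case is a particular instance of the personalized mechanism \eqref{equ:privavyprocedurepersonalize} with every $\mathcal{V}_i$ determined by the same $(s^*)$-subset of coordinates. The parameter choice $t \asymp 2^{p^*/(d-s^*)}$ coincides with the generic prescription $t \asymp 2^{p^*/(d-s)}$ of Theorem \ref{thm:utility} under the identification $s = s^*$, and the condition $\varepsilon \lesssim (n/\log n)^{s^*/(\alpha + d - s^*)}$ carries over unchanged.

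Finally, plugging $\lambda^* = 0$ and $s = s^*$ into \eqref{equ:convergencerateofpersonalizedestimator} gives
\begin{align*}
\mathcal{R}_{L,\mathrm{P}}(\tilde{f}) - \mathcal{R}_{L,\mathrm{P}}^* \lesssim \left(\frac{\log n}{n\varepsilon^2}\right)^{\frac{2\alpha}{2\alpha + d + s^* + 0 \cdot (d - s^*)}} = \left(\frac{\log n}{n\varepsilon^2}\right)^{\frac{2\alpha}{2\alpha + d + s^*}}
\end{align*}
which is exactly \eqref{equ:convergencerateofalignedestimator}, with the same failure probability $8/n^2$ inherited from Theorem \ref{thm:utility}. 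Thus the corollary follows as an immediate specialization; no separate proof machinery is needed, and the only care required is to verify that $\delta \equiv 1$ in the aligned regime, which is where the improvement over the personalized bound manifests.
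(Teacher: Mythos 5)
Your proposal is correct and matches the paper's intended derivation exactly: the paper presents this corollary as a direct implication of Theorem \ref{thm:utility}, obtained precisely by observing that in the aligned case $\delta(p)\equiv 1$, hence $\lambda^*=0$, and substituting $s=s^*$ into the personalized rate. No further machinery is needed, and your bookkeeping of the parameter choices, the privacy guarantee, and the failure probability is accurate.
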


	Compared to Theorem \ref{thm:lowerbound}, the \texttt{HistOfTree} estimator with the best parameter choice reaches the mini-max lower bound. 
	We note that when $\varepsilon$ is large, there is a gap between $(e^{\varepsilon} - 1)^2$ as in the lower bound \eqref{equ:minimaxconvergencerate} and
	$\varepsilon^2$ in the upper bound.
	The gap is commonly observed \citep{duchi2018minimax, gyorfi2022rate, ma2023decision, ma2024optimal, ma2024better, zhao2024learning}.

	\subsection{Selection of $s$}\label{sec:selectionofs}

	In this section, we explore the selection of the parameter  $s$ within the context of personalized privacy preferences. 
	Unlike scenarios with aligned preferences, where an intrinsic value of $s = s^* $ is implied, the personalized case requires the explicit specification of  $s$ as a hyper-parameter. 
	When considering the final convergence rate \eqref{equ:convergencerateofpersonalizedestimator}, it is noted that choosing either a significantly large or small value for $s$ inevitably leads to a large denominator of the rate,  impacting the efficiency of the convergence.
	From the proof of Theorem \ref{thm:utility}, we recognize that the right-hand side of \eqref{equ:selectionofp} is a high probability upper bound of the excess risk. 
	As a result, we tend to choose the $s$ that minimizes the upper bound. 
	Together with the selection of $p$, the parameter tuning is specified as an optimization problem
	\begin{align}\label{equ:obejectforparametertuning}
		s, p^* = \underset{s, p}{\arg\min} \frac{ 2^{p \frac{d + s}{d - s}}  \cdot \log n}{n\varepsilon^2 } \cdot \delta(p) + \; 2^{-2  p / (d - s)}
	\end{align}
	where the unknown $\alpha$ is replaced by 1, i.e. the Lipschitz continuous case. 
	Since there are at most $d\log n$ possible candidates for $(s, p^*)$, this can be done via a brute force search. 
	The minimum, analogous to Theorem \ref{thm:utility}, is $ \left({\log n
	}/{n \varepsilon^2}\right)^{\frac{2 }{2+d + s + \lambda^*(d - s)}}$. 
	Clearly, if the private features are concentrated across users, i.e. most $\sum_{\ell=s+1}^d W_i^{\ell}$ are small, $\lambda^*$ will eventually be small.
	As a result, $s$ leans to be small. 
	In the opposite, if private features are less concentrated (long tail), a large $s$ is selected to ensure a broader privacy coverage. 
	This behavior is illustrated in Figure \ref{fig:diagram}.

	There are also some interesting corner cases of the selection procedure. 
	First, the solution is consistent with the aligned case. 
	Specifically, selecting any $s'>s^*$ leads to $\lambda^* =0$.
	As a result, the rate has $\frac{2 }{2+d + s' } < \frac{2 }{2+d + s^* }$. 
	For $s' < s^*$, the rate remains $ \frac{2 }{2+d + s^*}$, meaning that choosing small $s'$ is theoretically equivalent, although it may result in less effective partitions in practice.
	\eqref{equ:obejectforparametertuning} also covers the case of locally private estimation with public data \citep{ma2023decision}, where $W_i^{\ell} = 1$ if $i \leq n - n_{q}$ and $W_i^{\ell} = 0$ otherwise. 	
	In this case, the object in \eqref{equ:obejectforparametertuning} yields a $\lambda^*$ strictly smaller than 1, and $s = 0$ is the optimal choice. 
	Our estimator reduces to the locally private decision tree proposed by \citet{ma2023decision}.

	\section{Experiments}\label{sec:experiments}
	
	To demonstrate the superiority of proposed methods and to validate our theoretical findings, we conduct experiments on both synthetic and real datasets in Section \ref{sec:simulation} and \ref{sec:realdata}, respectively. 
	
	The tested methods include: 
	\textbf{(i)} \texttt{HistOfTree} is the proposed estimator. 
	In addition to the partition rule proposed in Algorithm \ref{alg:partition}, we boost the performance by incorporating the criterion reduction scheme from the original CART \citep{breiman1984classification}, similar to \citet{ma2023decision, ma2024optimal};
	\textbf{(ii)} \texttt{AdHistOfTree} is identical to \texttt{HistOfTree} except that the choice of $s$, $p$, and $t$ are selected according to Section \ref{sec:selectionofs}. 
	\textbf{(iii)} \texttt{Hist} is the locally private histogram estimation \citep{berrett2021strongly, gyorfi2022rate} which treats all features and the label as private;
	\textbf{(iv)} \texttt{KRR} is the obfuscation approach that directly perturb $X_i^j$ using $k$-generalized randomized response if $W_i^j = 1$. 
	It then fit with a decision tree. 
	\textbf{(v)} \texttt{ParDT} estimates by adding Laplace noise to labels and then fitting a decision tree based on the noisy label and the public part of the samples. 
	\textbf{(vi)}  \texttt{LabelDT} estimates by adding Laplace noise to labels and then using a decision tree. 
	It serves as the label LDP benchmark.
	\textbf{(vi)} \texttt{DT} is the non-private decision tree and serves as the non-private benchmark. 
	Implementation details of all these methods are provided in Appendix \ref{app:experiments}. 
	
	The evaluation metric in all experiments is the mean squared error (MSE).
	For each model, we report the best result over its parameter grids, with the best result determined based on the average of at least 50 replications. 
	The size of the parameter grids is selected based on running time to ensure that each method incurs an equal amount of computation. 
	All experiments are conducted on a machine with 72-core Intel Xeon 2.60GHz and 128GB of main memory.
	The illustrative codes can be found on GitHub\footnote{\url{https://github.com/Karlmyh/LDP-PublicFeatures}}.

	\begin{figure}[!h]
		\centering
		\subfigure[Privacy utility trade off]{
			\begin{minipage}{0.45\linewidth}
				\centering
				\includegraphics[width=\textwidth]{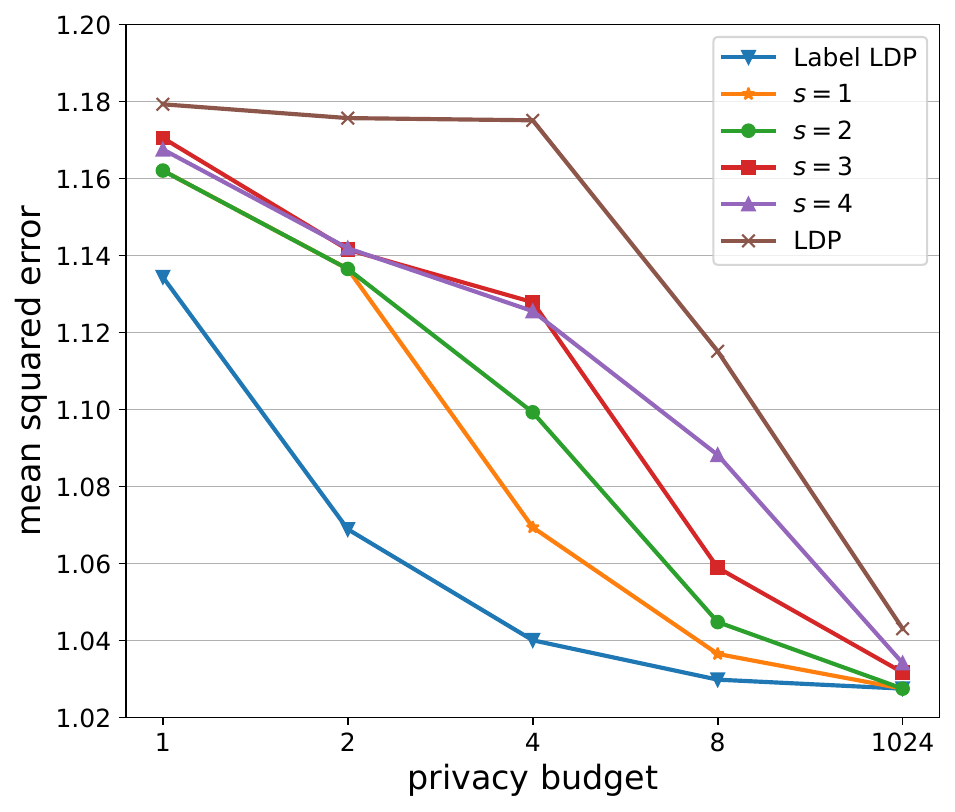}
			\end{minipage}
			\label{fig:privacyutility}
		}
		\subfigure[Consistency]{
			\begin{minipage}{0.45\linewidth}
				\centering
				\includegraphics[width=\textwidth]{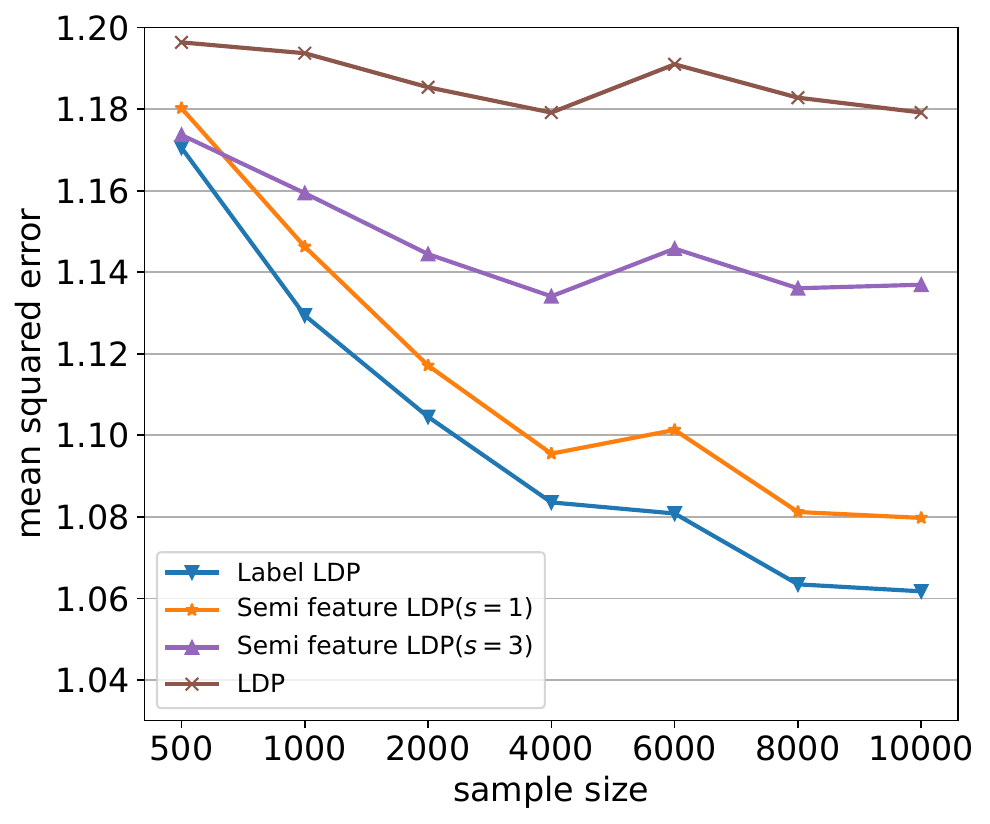}
			\end{minipage}
			\label{fig:samplesize}
		}
		\subfigure[Parameter analysis]{
			\begin{minipage}{0.45\linewidth}
				\centering
				\includegraphics[width=\textwidth]{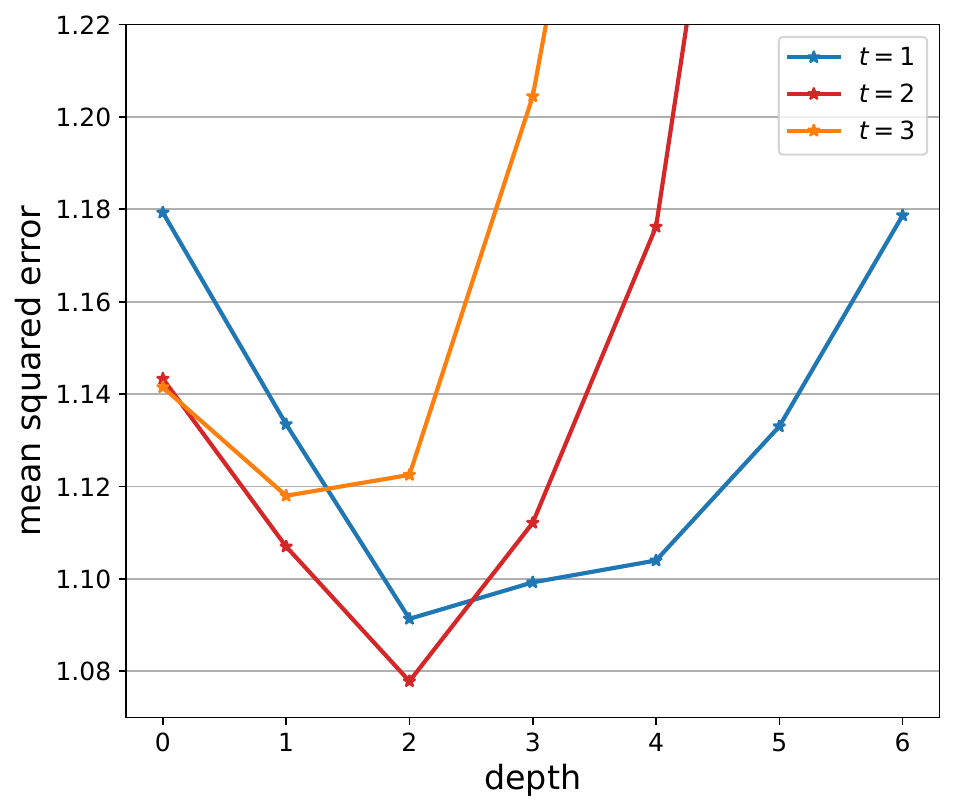}
			\end{minipage}
			\label{fig:parameter}
		}
		\subfigure[Selection of $s$]{
			\begin{minipage}{0.45\linewidth}
				\centering
				\includegraphics[width=\textwidth]{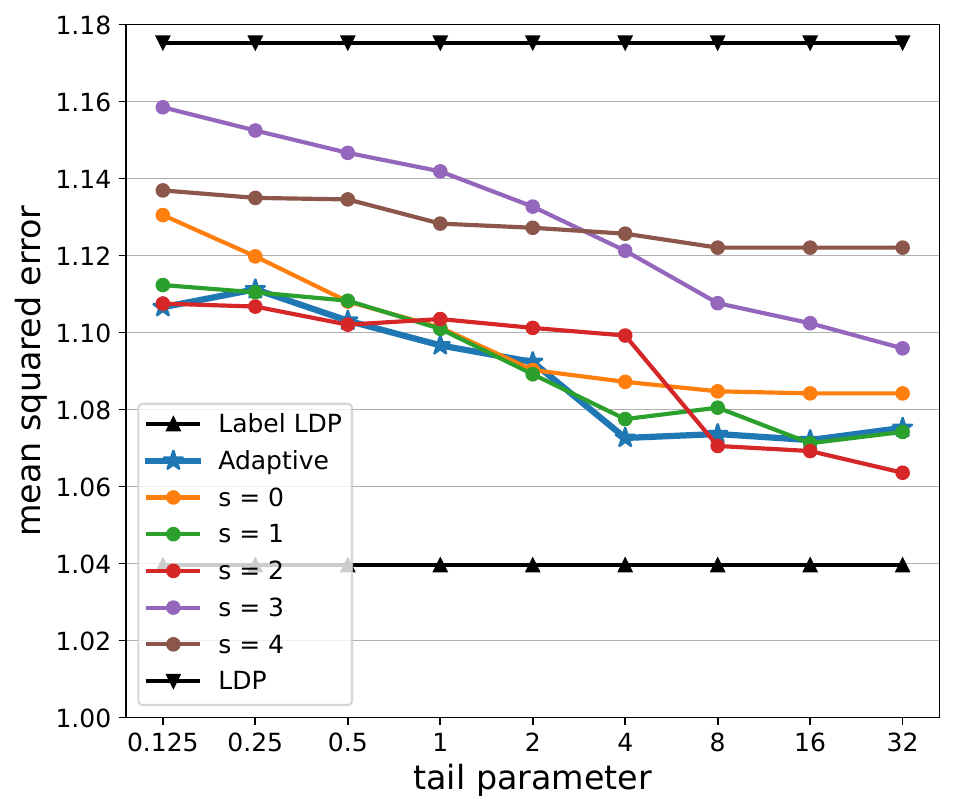}
			\end{minipage}
			\label{fig:selects}
		}
		\caption{Experiment results on synthetic data. \texttt{LabelDT} and \texttt{Hist} are captioned as label LDP and LDP, respectively. \texttt{HistOfTree} is captioned with specific choice of parameters $s$ and $t$. 
			In \ref{fig:privacyutility}, we apply uneven scaling to the x-axis to accommodate the outlying value of 1024, representing the non-private performance.	
			In \ref{fig:selects}, \texttt{AdHistOfTree} is captioned as adaptive.
		}
		\label{fig::simulation}
	\end{figure}

	\subsection{Simulation}\label{sec:simulation}

	In the simulation, we consider the following distribution $\mathrm{P}$. 
	The marginal distributions $\mathrm{P}_X $ is the uniform distributions on $\mathcal{X} = [0,1]^5$.
	The label is generated via $Y = \sin(X^1) + \sin(X^2) + \sin(X^5) + \sigma$, where $\sigma$ is a standard Gaussian random variable. 
	It can be easily verified that the constructed distributions above satisfy Assumption \ref{asp:alphaholder}. 
	For mask matrix $W$, we always let $\sum_{i=1}^nW_i^{\ell} \geq \sum_{i=1}^nW_i^{\ell'}$ if $\ell \leq \ell'$. 
	In this case, useful information is contained in both public and private features, which necessitates our method.

	\begin{table*}[!h]
		\centering
		\caption{Real data performance for $\varepsilon = 2$. 
			To ensure significance, we employ the Wilcoxon signed-rank test \citep{wilcoxon1992individual} with a significance level of 0.05 to determine if a result is significantly better. 
			The best results are \textbf{bolded} and those holding significance towards the rest results are marked with $*$.
			ME and CART stand for max-edge and the classical CART rule, respectively. }
		\label{tab:realdataepsilon2}
		\resizebox{0.95\linewidth}{!}{
			\renewcommand{\arraystretch}{1}
			\setlength{\tabcolsep}{8pt}
			\begin{tabular}{l|lllll|lllllll}
				\toprule
				\multicolumn{1}{c|}{\multirow{3}{*}{}}                                  & \multicolumn{5}{c|}{Aligned}                                                                                                                                  & \multicolumn{7}{c}{Personalized}                                                                                                                                                                                 \\ 
				\cmidrule(r){2-6} \cmidrule(l){7-13}
				\multicolumn{1}{c|}{}                                                   & \multicolumn{2}{c}{\texttt{HistOfTree}}           & \multicolumn{1}{c}{\multirow{2}{*}{\texttt{ParDT}}} & \multicolumn{1}{c}{\multirow{2}{*}{\texttt{Hist}}} &  \multicolumn{1}{c|}{\multirow{2}{*}{\texttt{KRR}}} & \multicolumn{2}{c}{\texttt{HistOfTree}}           & \multicolumn{2}{c}{\texttt{AdHistOfTree}}         & \multicolumn{1}{c}{\multirow{2}{*}{\texttt{ParDT}}} & \multicolumn{1}{c}{\multirow{2}{*}{\texttt{Hist}}}  & \multicolumn{1}{c}{\multirow{2}{*}{\texttt{KRR}}} \\
				\multicolumn{1}{c|}{}                                                   & \multicolumn{1}{c}{ME} & \multicolumn{1}{c}{CART} & \multicolumn{1}{c}{}                                & \multicolumn{1}{c}{}            &                   & \multicolumn{1}{c}{ME} & \multicolumn{1}{c}{CART} & \multicolumn{1}{c}{ME} & \multicolumn{1}{c}{CART} & \multicolumn{1}{c}{}                                & \multicolumn{1}{c}{}                               \\ \midrule
				ABA                                                                     & 1.85                   & 1.65                     & \textbf{1.63}                                                & 1.98                    & 1.81                            & \textbf{1.61*}                 & 1.80                     & 1.61                   & 1.78                     & 1.65                                                & 1.98                &                 1.77                \\
				AQU                                                                     & 1.55                   & \textbf{1.47*}                    & 1.71                                                & 1.63                      & 2.05                          & \textbf{1.60*}                   & 1.60                     & \textbf{1.60*}                   & 1.60                     & 1.75                                                & 1.63               & 2.05                                \\
				BUI                                                                     & 1.48                   & \textbf{1.27*}                     & 1.51                                                & 1e4                       & 2.43                          & \textbf{1.42}                   & 1.47                     & 1.43                   & 1.47                     & 1.59                                                & 1e4                          & 2.41                      \\
				DAK                                                                     & 6.69                   & \textbf{6.65*}                     & 8.03                                                & 8.34                        & 9.57                        & 6.73                   & 6.73                     & \textbf{6.57*}                   & \textbf{6.57*}                     & 8.85                                                & 8.34                  & 9.57                             \\
				FIS                                                                     & \textbf{1.88*}                   & 2.01                     & 2.14                                                & 2.16                            & 2.35                    & 2.09                   & 2.16                     & \textbf{2.07}                   & 2.14                     & 2.23                                                & 2.16                    & 2.42                           \\
				MUS                                                                     & \textbf{1.13*}                   & \textbf{1.13*}                     & 1.26                                                & 2e5                        & 1.44                         & 1.13                   & 1.13                     & \textbf{1.12}                  & 1.13                     & 1.26                                                & 2e5               & 1.45                                 \\
				POR                                                                     & 3.15                   & \textbf{3.04*}                     & 3.32                                                & 4.23                      & 4.02                          & 3.08                   & 3.08                     & \textbf{2.98*}                   & \textbf{2.98*}                     & 3.35                                                & 4.23                    & 3.92                           \\
				PYR                                                                     & \textbf{1.48*}                   & \textbf{1.48*}                    & 1.65                                                & 3e3                          & 2.29                       & 1.49                   & 1.49                     & \textbf{1.42*}                   & \textbf{1.42*}                     & 1.77                                                & 3e3                     & 2.30                           \\
				RED                                                                     & 1.44                   & \textbf{1.37*}                    & 1.45                                                & 1.55                 & 1.67                               & 1.44                   & \textbf{1.43}                     & 1.44                   & 1.44                     & 1.46                                                & 1.55                     & 1.67                          \\
				WHI                                                                     & 1.42                   & \textbf{1.41}                     & \textbf{1.41}                                                & 1.53                        & 1.50                        & 1.42                   & \textbf{1.39*}                     & 1.44                   & 1.44                     & 1.42                                                & 1.53                     & 1.49                          \\ \midrule
				\multicolumn{1}{c|}{\begin{tabular}[c]{@{}c@{}}Rank\\ Sum\end{tabular}} &        22                &           \textbf{15}               &                            28                         &                        45 & 43                             &          20              &          27                &            \textbf{15}            &       23                   &                            48                         &                      62 & 62                              \\ \bottomrule
			\end{tabular}
		}
	\end{table*}

	\paragraph{Privacy Utility Trade-off} We analyze how privacy budget $\varepsilon$ influences the prediction quality under different numbers of public features.
	For $1\leq s^*  \leq 4$, we consider $n=10000$ and vary $\varepsilon$. 
	The results are displayed in Figure \ref{fig:privacyutility}. 
	When $\varepsilon$ increases, all MSEs decrease, while the performance steadily improves as the number of public features increases. 
	This observation is compatible with both Theorem \ref{thm:utility} and the notion that semi-feature LDP serves as the intermediate stage of LDP and label LDP.

	\paragraph{Consistency} We analyze the consistency of \eqref{equ:ourestimator} as $n$ grows. 
	We take $\varepsilon = 4$.
	The results are displayed in Figure \ref{fig:samplesize}, showing that the MSE is diminishing as $n$ grows, while the rate is faster for small $s$.

	\paragraph{Parameter Analysis}
	We conduct experiments to investigate the influence of key parameters $p$ and $t$. 
	We consider $n=10000$, $\varepsilon = 4$, and $s^* = 2$. 
	As shown in Figure \ref{fig:parameter}, for each $t$, as $p$ increases, MSE exhibits a U-shaped curve. 
	Also, the best performance is achieved with $t=2$, indicating such a phenomenon is also true for $t$. 
	This further emphasizes the importance of choosing correct hyperparameters as outlined in Theorem \ref{thm:utility}.
	Moreover, the depth $p$ at which the test error is minimized increases as $t$ increases. 
	This is compatible with theory since $t \asymp 2^{p / (d-s^* )}$.

	\paragraph{Selection of $s$ under Different Behaviors of $W$}
	We investigate how the behavior of $W$ affects the selection of $s$. 
	Specifically, we use $W_{\gamma}$ parameterized by $\gamma$, where $W_{\gamma i}^\ell = 1$ if $i\leq n / \ell^{\gamma}$. 
	A smaller $\gamma$ implies a thicker tail of $W$.
	We compare the performance of \texttt{HistOfTree} with different choices of $s$, as well as \texttt{AdHistOfTree} whose $s$ is selected automatically. 
	In Figure \ref{fig:selects}, the observations are two-fold. 
	First, MSE decreases w.r.t. $\gamma$, which aligns with Theorem \ref{thm:utility} that a thicker tail of $W$ leads to a larger upper bound. 
	On the other hand, one $s$ does not consistently outperform another under different thicknesses of the tail, which necessitates the discussions about $W$ in Section \ref{sec:utilityguarantee} and \ref{sec:selectionofs}. 
	The best $s$ lies between 1 and 2, while \texttt{AdHistOfTree} always achieves approximately the best performance. 
	This illustrates the effectiveness of the selection rule in \eqref{equ:obejectforparametertuning}.

	\subsection{Real Data}\label{sec:realdata}

	We conduct experiments on 10 real-world datasets that are available online.
	These datasets cover a wide range of sample sizes and feature dimensions commonly encountered in real-world scenarios.
	Some of these datasets contain both sensitive and insensitive features, making them suitable for our case. 
	Privacy preferences follow two paradigms.
	For both paradigms, we rank the features subjectively according to their level of sensitivity, from sensitive to non-sensitive. 
	In the aligned case, we select out the first $s^*  = \lceil\log \sqrt{d} \rceil$ features.
	In the personalized case, we set $W_i^{\ell} = 1$ if $i \leq n / 10^{\lfloor \ell / s^* \rfloor}$ and 0 otherwise. 
	A summary of key information for these datasets and pre-processing details is in Appendix \ref{app:datasets}.

	We first compute the mean squared error over 50 random train-test splits for $\varepsilon = 1,2,4$.
	To standardize the scale across datasets, we report the MSE ratio relative to non-private fitting a decision tree over the whole training samples.
	The results for $\varepsilon =2 $ are displayed in Table \ref{tab:realdataepsilon2}, while the others are presented in Appendix \ref{app:additionalresults}. 
	For all privacy budgets, the proposed methods significantly outperform competitors in terms of both average performance (rank sum) and the number of best results achieved.
	The data-driven selected parameters achieve comparable performance to the best parameter, indicating the effectiveness of the selection rule. 
	In some cases, \texttt{AdHistOfTree} performs slightly better, largely due to the incompleteness of parameter grids of \texttt{HistOfTree}.
	Moreover, we observe that the CART rule outperforms ME in the aligned setting, while the opposite holds in the personalized setting. 
	This is attributed to the reliance on sufficient public features of CART.

	\section{Limitations}\label{app:limitation}

	The study is a first step towards locally private learning with public features.
	There are several limitations that can serve as a guide for future work.
	The current lower bound can be further explored to see whether Theorem \ref{thm:utility} is tight. 
	Under the current scheme, we shall have $\sum_{i=1}^n \mathcal{E}^{\frac{2 \alpha+d+\sum_{j=1}^d W_i^j}{2 \alpha}} \geq \frac{1}{\left(e^{\varepsilon}-1\right)^2}$ for lower bound, where $\mathcal{E}$ is the excess risk. For the upper bound, we have $\sum_{i=1}^n \mathcal{E}^{-\frac{2 \alpha+d+\sum_{j=1}^d W_i^j}{2 \alpha}} \geq n^2 \varepsilon^2$. They are matched similarly as in the article if $\sum_{j=1}^d W_i^j$ are all equal, which includes the aligned case as a special case but is far more restrictive than the personalized case.

	During the rebuttal, one of the reviewers raised a concern that the privacy preferences used in the real-data experiments are subjective rather than derived from actual users. We acknowledge this limitation. Real-world data containing genuine privacy preferences is extremely rare and typically only obtainable through online testing. Such data must include both the value of a sensitive feature and the user’s preference for privatizing it. However, (1) collecting such data is uncommon, and (2) missing values may arise when users choose not to disclose sensitive features.
	Following the reviewer’s suggestion, we attempted to collect data using a public survey service. However, the collected data proved to be too noisy, and further data collection is not feasible due to limited funding.
	As a result, no additional experiments or datasets are included as contributions to this paper.
	
	\section*{Acknowledgments}
	
	Hanfang Yang is the corresponding author. 
	The authors would like to thank the reviewers for their constructive comments, which led to a significant improvement in this work. 
	The research is supported by the Special Funds of the National Natural Science Foundation of China (Grant No. 72342010). 
	Yuheng Ma is supported by the Outstanding Innovative Talents Cultivation Funded Programs 2024 of Renmin University of China.
	This research is also supported by Public Computing Cloud, Renmin University of China.

	\bibliographystyle{plainnat}
	\bibliography{references}

\begin{thebibliography}{65}
\providecommand{\natexlab}[1]{#1}
\providecommand{\url}[1]{\texttt{#1}}
\expandafter\ifx\csname urlstyle\endcsname\relax
  \providecommand{\doi}[1]{doi: #1}\else
  \providecommand{\doi}{doi: \begingroup \urlstyle{rm}\Url}\fi

\bibitem[Aliakbarpour et~al.(2024)Aliakbarpour, Chaudhuri, Courtade, Fallah,
  and Jordan]{aliakbarpour2024enhancing}
Maryam Aliakbarpour, Syomantak Chaudhuri, Thomas~A Courtade, Alireza Fallah,
  and Michael~I Jordan.
\newblock Enhancing feature-specific data protection via bayesian coordinate
  differential privacy.
\newblock \emph{arXiv preprint arXiv:2410.18404}, 2024.

\bibitem[Amorino and Gloter(2023)]{amorino2023minimax}
Chiara Amorino and Arnaud Gloter.
\newblock Minimax rate for multivariate data under componentwise local
  differential privacy constraints.
\newblock \emph{arXiv preprint arXiv:2305.10416}, 2023.

\bibitem[Badanidiyuru~Varadaraja et~al.(2024)Badanidiyuru~Varadaraja, Ghazi,
  Kamath, Kumar, Leeman, Manurangsi, Varadarajan, and
  Zhang]{badanidiyuru2024optimal}
Ashwinkumar Badanidiyuru~Varadaraja, Badih Ghazi, Pritish Kamath, Ravi Kumar,
  Ethan Leeman, Pasin Manurangsi, Avinash~V Varadarajan, and Chiyuan Zhang.
\newblock Optimal unbiased randomizers for regression with label differential
  privacy.
\newblock \emph{Advances in Neural Information Processing Systems}, 36, 2024.

\bibitem[Berrett et~al.(2021)Berrett, Gy{\"o}rfi, and
  Walk]{berrett2021strongly}
Thomas~B Berrett, L{\'a}szl{\'o} Gy{\"o}rfi, and Harro Walk.
\newblock Strongly universally consistent nonparametric regression and
  classification with privatised data.
\newblock \emph{Electronic Journal of Statistics}, 15:\penalty0 2430--2453,
  2021.

\bibitem[Breiman(1984)]{breiman1984classification}
L~Breiman.
\newblock Classification and regression trees.
\newblock \emph{The Wadsworth \& Brooks/Cole}, 1984.

\bibitem[Busa-Fekete et~al.(2021)Busa-Fekete, Syed, Vassilvitskii,
  et~al.]{busa2021population}
Robert~Istvan Busa-Fekete, Umar Syed, Sergei Vassilvitskii, et~al.
\newblock Population level privacy leakage in binary classification wtih label
  noise.
\newblock In \emph{NeurIPS 2021 Workshop Privacy in Machine Learning}, 2021.

\bibitem[Cai et~al.(2023)Cai, Ma, Dong, and Yang]{cai2023extrapolated}
Yuchao Cai, Yuheng Ma, Yiwei Dong, and Hanfang Yang.
\newblock Extrapolated random tree for regression.
\newblock In \emph{Proceedings of the 40th International Conference on Machine
  Learning}, pages 3442--3468. PMLR, 2023.

\bibitem[Chaudhuri and Hsu(2011)]{chaudhuri2011sample}
Kamalika Chaudhuri and Daniel Hsu.
\newblock Sample complexity bounds for differentially private learning.
\newblock In \emph{Proceedings of the 24th Annual Conference on Learning
  Theory}, pages 155--186. JMLR Workshop and Conference Proceedings, 2011.

\bibitem[Chua et~al.(2024)Chua, Cui, Ghazi, Harrison, Kamath, Krichene, Kumar,
  Manurangsi, Narra, Sinha, et~al.]{chua2024training}
Lynn Chua, Qiliang Cui, Badih Ghazi, Charlie Harrison, Pritish Kamath, Walid
  Krichene, Ravi Kumar, Pasin Manurangsi, Krishna~Giri Narra, Amer Sinha,
  et~al.
\newblock Training differentially private ad prediction models with
  semi-sensitive features.
\newblock \emph{arXiv preprint arXiv:2401.15246}, 2024.

\bibitem[Cortez et~al.(2009)Cortez, Cerdeira, Almeida, Matos, and
  Reis]{cortez2009modeling}
Paulo Cortez, Ant{\'o}nio Cerdeira, Fernando Almeida, Telmo Matos, and Jos{\'e}
  Reis.
\newblock Modeling wine preferences by data mining from physicochemical
  properties.
\newblock \emph{Decision support systems}, 47\penalty0 (4):\penalty0 547--553,
  2009.

\bibitem[Cummings and Desai(2018)]{cummings2018role}
Rachel Cummings and Deven Desai.
\newblock The role of differential privacy in gdpr compliance.
\newblock In \emph{Proceedings of the Conference on Fairness, Accountability,
  and Transparency}, page~20, 2018.

\bibitem[Cunningham et~al.(2022)Cunningham, Klemmer, Wen, and
  Ferhatosmanoglu]{cunningham2022geopointgan}
Teddy Cunningham, Konstantin Klemmer, Hongkai Wen, and Hakan Ferhatosmanoglu.
\newblock Geopointgan: Synthetic spatial data with local label differential
  privacy.
\newblock \emph{arXiv preprint arXiv:2205.08886}, 2022.

\bibitem[Curmei et~al.(2023)Curmei, Krichene, Zhang, and
  Sundararajan]{curmei2023private}
Mihaela Curmei, Walid Krichene, Li~Zhang, and Mukund Sundararajan.
\newblock Private matrix factorization with public item features.
\newblock In \emph{Proceedings of the 17th ACM Conference on Recommender
  Systems}, pages 805--812, 2023.

\bibitem[Dua and Graff(2017)]{Dua:2019}
Dheeru Dua and Casey Graff.
\newblock {UCI} machine learning repository, 2017.
\newblock URL \url{http://archive.ics.uci.edu/ml}.

\bibitem[Duchi et~al.(2018)Duchi, Jordan, and Wainwright]{duchi2018minimax}
John Duchi, Michael Jordan, and Martin Wainwright.
\newblock Minimax optimal procedures for locally private estimation.
\newblock \emph{Journal of the American Statistical Association}, 113\penalty0
  (521):\penalty0 182--201, 2018.

\bibitem[Dwork et~al.(2006)Dwork, McSherry, Nissim, and
  Smith]{dwork2006calibrating}
Cynthia Dwork, Frank McSherry, Kobbi Nissim, and Adam Smith.
\newblock Calibrating noise to sensitivity in private data analysis.
\newblock In \emph{Theory of cryptography conference}, pages 265--284.
  Springer, 2006.

\bibitem[Erlingsson et~al.(2014)Erlingsson, Pihur, and
  Korolova]{erlingsson2014rappor}
{\'U}lfar Erlingsson, Vasyl Pihur, and Aleksandra Korolova.
\newblock Rappor: Randomized aggregatable privacy-preserving ordinal response.
\newblock In \emph{Proceedings of the 2014 ACM SIGSAC conference on computer
  and communications security}, pages 1054--1067, 2014.

\bibitem[{European Parliament} and {Council of the European
  Union}()]{EuropeanParliament2016a}
{European Parliament} and {Council of the European Union}.
\newblock Regulation ({EU}) 2016/679 of the {European} {Parliament} and of the
  {Council}.
\newblock URL \url{https://data.europa.eu/eli/reg/2016/679/oj}.

\bibitem[Fuentes et~al.(2024)Fuentes, Mullins, McKenna, Miklau, and
  Sheldon]{fuentes2024joint}
Miguel Fuentes, Brett~C Mullins, Ryan McKenna, Gerome Miklau, and Daniel
  Sheldon.
\newblock Joint selection: Adaptively incorporating public information for
  private synthetic data.
\newblock In \emph{International Conference on Artificial Intelligence and
  Statistics}, pages 2404--2412. PMLR, 2024.

\bibitem[Ghazi et~al.(2021)Ghazi, Golowich, Kumar, Manurangsi, and
  Zhang]{ghazi2021deep}
Badih Ghazi, Noah Golowich, Ravi Kumar, Pasin Manurangsi, and Chiyuan Zhang.
\newblock Deep learning with label differential privacy.
\newblock \emph{Advances in neural information processing systems},
  34:\penalty0 27131--27145, 2021.

\bibitem[Ghazi et~al.(2022)Ghazi, Kamath, Kumar, Leeman, Manurangsi,
  Varadarajan, and Zhang]{ghazi2022regression}
Badih Ghazi, Pritish Kamath, Ravi Kumar, Ethan Leeman, Pasin Manurangsi,
  Avinash Varadarajan, and Chiyuan Zhang.
\newblock Regression with label differential privacy.
\newblock In \emph{The Eleventh International Conference on Learning
  Representations}, 2022.

\bibitem[Gu et~al.(2023)Gu, Kamath, and Wu]{gu2023choosing}
Xin Gu, Gautam Kamath, and Zhiwei~Steven Wu.
\newblock Choosing public datasets for private machine learning via gradient
  subspace distance.
\newblock \emph{arXiv preprint arXiv:2303.01256}, 2023.

\bibitem[Gy{\"o}rfi and Kroll(2022)]{gyorfi2022rate}
L{\'a}szl{\'o} Gy{\"o}rfi and Martin Kroll.
\newblock On rate optimal private regression under local differential privacy.
\newblock \emph{arXiv preprint arXiv:2206.00114}, 2022.

\bibitem[Inc.(2017)]{apple2017differential}
Apple Inc.
\newblock Differential privacy technical overview.
\newblock Technical Report Apple Inc., 2017.
\newblock URL
  \url{https://www.apple.com/privacy/docs/Differential_Privacy_Overview.pdf}.

\bibitem[Kairouz et~al.(2014)Kairouz, Oh, and Viswanath]{kairouz2014extremal}
Peter Kairouz, Sewoong Oh, and Pramod Viswanath.
\newblock Extremal mechanisms for local differential privacy.
\newblock \emph{Advances in neural information processing systems}, 27, 2014.

\bibitem[Kairouz et~al.(2016)Kairouz, Bonawitz, and
  Ramage]{kairouz2016discrete}
Peter Kairouz, Keith Bonawitz, and Daniel Ramage.
\newblock Discrete distribution estimation under local privacy.
\newblock In \emph{International Conference on Machine Learning}, pages
  2436--2444. PMLR, 2016.

\bibitem[Kosorok(2008)]{Kosorok2008introduction}
Michael~R. Kosorok.
\newblock \emph{Introduction to Empirical Processes and Semiparametric
  Inference}.
\newblock Springer Series in Statistics. Springer, New York, 2008.

\bibitem[Krichene et~al.(2024)Krichene, Mayoraz, Rendle, Song, Thakurta, and
  Zhang]{krichene2024private}
Walid Krichene, Nicolas~E Mayoraz, Steffen Rendle, Shuang Song, Abhradeep
  Thakurta, and Li~Zhang.
\newblock Private learning with public features.
\newblock In \emph{International Conference on Artificial Intelligence and
  Statistics}, pages 4150--4158. PMLR, 2024.

\bibitem[Li et~al.(2022)Li, Yan, Cheng, Sun, and Li]{li2022protecting}
Xiaoguang Li, Haonan Yan, Zelei Cheng, Wenhai Sun, and Hui Li.
\newblock Protecting regression models with personalized local differential
  privacy.
\newblock \emph{IEEE Transactions on Dependable and Secure Computing},
  20\penalty0 (2):\penalty0 960--974, 2022.

\bibitem[Liu et~al.(2021{\natexlab{a}})Liu, Vietri, Steinke, Ullman, and
  Wu]{liu2021leveraging}
Terrance Liu, Giuseppe Vietri, Thomas Steinke, Jonathan Ullman, and Steven Wu.
\newblock Leveraging public data for practical private query release.
\newblock In \emph{International Conference on Machine Learning}, pages
  6968--6977. PMLR, 2021{\natexlab{a}}.

\bibitem[Liu et~al.(2021{\natexlab{b}})Liu, Vietri, and Wu]{liu2021iterative}
Terrance Liu, Giuseppe Vietri, and Steven~Z Wu.
\newblock Iterative methods for private synthetic data: Unifying framework and
  new methods.
\newblock \emph{Advances in Neural Information Processing Systems},
  34:\penalty0 690--702, 2021{\natexlab{b}}.

\bibitem[Liu and Yeh(2017)]{liu2017using}
Yi-Cheng Liu and I-Cheng Yeh.
\newblock Using mixture design and neural networks to build stock selection
  decision support systems.
\newblock \emph{Neural Computing and Applications}, 28:\penalty0 521--535,
  2017.

\bibitem[Ma and Yang(2024)]{ma2024optimal}
Yuheng Ma and Hanfang Yang.
\newblock Optimal locally private nonparametric classification with public
  data.
\newblock \emph{Journal of Machine Learning Research}, 25\penalty0
  (167):\penalty0 1--62, 2024.

\bibitem[Ma et~al.(2023)Ma, Zhang, Cai, and Hanfang]{ma2023decision}
Yuheng Ma, Han Zhang, Yuchao Cai, and Yang Hanfang.
\newblock Decision tree for locally private estimation with public data.
\newblock \emph{Advances in Neural Information Processing Systems}, 2023.

\bibitem[Ma et~al.(2024)Ma, Jia, and Yang]{ma2024better}
Yuheng Ma, Ke~Jia, and Hanfang Yang.
\newblock Better locally private sparse estimation given multiple samples per
  user.
\newblock In \emph{Proceedings of the 41st International Conference on Machine
  Learning}, volume 235, pages 33746--33776, 2024.

\bibitem[Malek~Esmaeili et~al.(2021)Malek~Esmaeili, Mironov, Prasad, Shilov,
  and Tramer]{malek2021antipodes}
Mani Malek~Esmaeili, Ilya Mironov, Karthik Prasad, Igor Shilov, and Florian
  Tramer.
\newblock Antipodes of label differential privacy: Pate and alibi.
\newblock \emph{Advances in Neural Information Processing Systems},
  34:\penalty0 6934--6945, 2021.

\bibitem[Nash et~al.(1994)Nash, Sellers, Talbot, Cawthorn, and
  Ford]{nash1994population}
Warwick~J Nash, Tracy~L Sellers, Simon~R Talbot, Andrew~J Cawthorn, and Wes~B
  Ford.
\newblock The population biology of abalone (haliotis species) in tasmania. i.
  blacklip abalone (h. rubra) from the north coast and islands of bass strait.
\newblock \emph{Sea Fisheries Division, Technical Report}, 48:\penalty0 p411,
  1994.

\bibitem[Nasr et~al.(2023)Nasr, Mahloujifar, Tang, Mittal, and
  Houmansadr]{nasr2023effectively}
Milad Nasr, Saeed Mahloujifar, Xinyu Tang, Prateek Mittal, and Amir Houmansadr.
\newblock Effectively using public data in privacy preserving machine learning.
\newblock In \emph{Proceedings of the 40th International Conference on Machine
  Learning}, volume 202. PMLR, 2023.

\bibitem[Nouwens et~al.(2020)Nouwens, Liccardi, Veale, Karger, and
  Kagal]{nouwens2020dark}
Midas Nouwens, Ilaria Liccardi, Michael Veale, David Karger, and Lalana Kagal.
\newblock Dark patterns after the gdpr: Scraping consent pop-ups and
  demonstrating their influence.
\newblock In \emph{Proceedings of the 2020 CHI conference on human factors in
  computing systems}, pages 1--13, 2020.

\bibitem[Papernot et~al.(2017)Papernot, Abadi, Erlingsson, Goodfellow, and
  Talwar]{papernotsemi}
Nicolas Papernot, Mart{\'\i}n Abadi, {\'U}lfar Erlingsson, Ian Goodfellow, and
  Kunal Talwar.
\newblock Semi-supervised knowledge transfer for deep learning from private
  training data.
\newblock In \emph{International Conference on Learning Representations}, 2017.

\bibitem[Papernot et~al.(2018)Papernot, Song, Mironov, Raghunathan, Talwar, and
  Erlingsson]{papernot2018scalable}
Nicolas Papernot, Shuang Song, Ilya Mironov, Ananth Raghunathan, Kunal Talwar,
  and Ulfar Erlingsson.
\newblock Scalable private learning with pate.
\newblock In \emph{International Conference on Learning Representations}, 2018.

\bibitem[Pedregosa et~al.(2011)Pedregosa, Varoquaux, Gramfort, Michel, Thirion,
  Grisel, Blondel, Prettenhofer, Weiss, Dubourg, Vanderplas, Passos,
  Cournapeau, Brucher, Perrot, and Duchesnay]{scikit-learn}
F.~Pedregosa, G.~Varoquaux, A.~Gramfort, V.~Michel, B.~Thirion, O.~Grisel,
  M.~Blondel, P.~Prettenhofer, R.~Weiss, V.~Dubourg, J.~Vanderplas, A.~Passos,
  D.~Cournapeau, M.~Brucher, M.~Perrot, and E.~Duchesnay.
\newblock Scikit-learn: Machine learning in {P}ython.
\newblock \emph{Journal of Machine Learning Research}, 12:\penalty0 2825--2830,
  2011.

\bibitem[Shen et~al.(2023)Shen, Krishnaswamy, Kulkarni, and
  Munagala]{shen2023classification}
Zeyu Shen, Anilesh Krishnaswamy, Janardhan Kulkarni, and Kamesh Munagala.
\newblock Classification with partially private features.
\newblock \emph{arXiv preprint arXiv:2312.07583}, 2023.

\bibitem[Song et~al.(2019)Song, Luo, Wang, and Li]{song2019multiple}
Haina Song, Tao Luo, Xun Wang, and Jianfeng Li.
\newblock Multiple sensitive values-oriented personalized privacy preservation
  based on randomized response.
\newblock \emph{IEEE transactions on information forensics and security},
  15:\penalty0 2209--2224, 2019.

\bibitem[Sun et~al.(2014)Sun, Fu, Zhou, Gao, et~al.]{sun2014personalized}
Chongjing Sun, Yan Fu, Junlin Zhou, Hui Gao, et~al.
\newblock Personalized privacy-preserving frequent itemset mining using
  randomized response.
\newblock \emph{The Scientific World Journal}, 2014, 2014.

\bibitem[Tsybakov(2009)]{tsybakov2008introduction}
Alexandre~B. Tsybakov.
\newblock \emph{Introduction to Nonparametric Estimation}.
\newblock Springer Series in Statistics. Springer, New York, 2009.
\newblock ISBN 978-0-387-79051-0.
\newblock \doi{10.1007/b13794}.
\newblock URL
  \url{http://hfhhc391f4815d8064db7sqbf60x0xf60c6kpo.faxb.libproxy.ruc.edu.cn/10.1007/b13794}.
\newblock Revised and extended from the 2004 French original, Translated by
  Vladimir Zaiats.

\bibitem[van~der Vaart and Wellner(1996)]{vandervaart1996weak}
Aad~W. van~der Vaart and Jon~A. Wellner.
\newblock \emph{Weak Convergence and Empirical Processes}.
\newblock Springer Series in Statistics. Springer-Verlag, New York, 1996.

\bibitem[Wainwright(2019)]{wainwright2019high}
Martin~J Wainwright.
\newblock \emph{High-dimensional statistics: A non-asymptotic viewpoint},
  volume~48.
\newblock Cambridge University Press, 2019.

\bibitem[Wang et~al.(2024)Wang, Zhu, Su, and Wang]{wang2024neural}
Chendi Wang, Yuqing Zhu, Weijie~J Su, and Yu-Xiang Wang.
\newblock Neural collapse meets differential privacy: Curious behaviors of
  noisygd with near-perfect representation learning.
\newblock In \emph{Forty-first International Conference on Machine Learning},
  2024.

\bibitem[Wang and Xu(2019)]{wang2019sparse}
Di~Wang and Jinhui Xu.
\newblock On sparse linear regression in the local differential privacy model.
\newblock In \emph{International Conference on Machine Learning}, pages
  6628--6637. PMLR, 2019.

\bibitem[Wang et~al.(2015)Wang, Huang, Tian, Yang, Xu, and
  Guo]{wang2015personalized}
Shaowei Wang, Liusheng Huang, Miaomiao Tian, Wei Yang, Hongli Xu, and Hansong
  Guo.
\newblock Personalized privacy-preserving data aggregation for histogram
  estimation.
\newblock In \emph{2015 IEEE Global Communications Conference (GLOBECOM)},
  pages 1--6. IEEE, 2015.

\bibitem[Wang et~al.(2017)Wang, Blocki, Li, and Jha]{wang2017locally}
Tianhao Wang, Jeremiah Blocki, Ninghui Li, and Somesh Jha.
\newblock Locally differentially private protocols for frequency estimation.
\newblock In \emph{26th USENIX Security Symposium (USENIX Security 17)}, pages
  729--745, 2017.

\bibitem[Warner(1965)]{warner1965randomized}
Stanley~L Warner.
\newblock Randomized response: A survey technique for eliminating evasive
  answer bias.
\newblock \emph{Journal of the American Statistical Association}, 60\penalty0
  (309):\penalty0 63--69, 1965.

\bibitem[Wilcoxon(1992)]{wilcoxon1992individual}
Frank Wilcoxon.
\newblock Individual comparisons by ranking methods.
\newblock In \emph{Breakthroughs in statistics}, pages 196--202. Springer,
  1992.

\bibitem[Xiong et~al.(2020)Xiong, Liu, Li, Cai, and
  Niu]{xiong2020comprehensive}
Xingxing Xiong, Shubo Liu, Dan Li, Zhaohui Cai, and Xiaoguang Niu.
\newblock A comprehensive survey on local differential privacy.
\newblock \emph{Security and Communication Networks}, 2020:\penalty0 1--29,
  2020.

\bibitem[Xu et~al.(2023)Xu, Wang, Sun, and Cheng]{xu2023binary}
Shirong Xu, Chendi Wang, Will~Wei Sun, and Guang Cheng.
\newblock Binary classification under local label differential privacy using
  randomized response mechanisms.
\newblock \emph{Transactions on Machine Learning Research}, 2023.

\bibitem[Yang et~al.(2021)Yang, Wang, and Wang]{yang2021federated}
Ge~Yang, Shaowei Wang, and Haijie Wang.
\newblock Federated learning with personalized local differential privacy.
\newblock In \emph{2021 IEEE 6th International Conference on Computer and
  Communication Systems (ICCCS)}, pages 484--489. IEEE, 2021.

\bibitem[Yu et~al.(2021)Yu, Zhang, Chen, Yin, and Liu]{yu2021large}
Da~Yu, Huishuai Zhang, Wei Chen, Jian Yin, and Tie-Yan Liu.
\newblock Large scale private learning via low-rank reparametrization.
\newblock In \emph{International Conference on Machine Learning}, pages
  12208--12218. PMLR, 2021.

\bibitem[Yu et~al.(2022)Yu, Naik, Backurs, Gopi, Inan, Kamath, Kulkarni, Lee,
  Manoel, Wutschitz, Yekhanin, and Zhang]{yu2022differentially}
Da~Yu, Saurabh Naik, Arturs Backurs, Sivakanth Gopi, Huseyin~A. Inan, Gautam
  Kamath, Janardhan Kulkarni, Yin~Tat Lee, Andre Manoel, Lukas Wutschitz,
  Sergey Yekhanin, and Huishuai Zhang.
\newblock Differentially private fine-tuning of language models.
\newblock In \emph{The Tenth International Conference on Learning
  Representations}, 2022.

\bibitem[Zhang et~al.(2024)Zhang, Zhang, Zhang, and Zhang]{zhang2024adaptive}
Dongyan Zhang, Lili Zhang, Zhiyong Zhang, and Zhongya Zhang.
\newblock Adaptive personalized randomized response method based on local
  differential privacy.
\newblock \emph{International Journal of Information Security and Privacy
  (IJISP)}, 18\penalty0 (1):\penalty0 1--19, 2024.

\bibitem[Zhang et~al.(2022)Zhang, Chen, Jia, and Zhai]{zhang2022support}
Guopeng Zhang, Xuebin Chen, Yuanli Jia, and Ran Zhai.
\newblock Support personalized weighted local differential privacy skyline
  query.
\newblock \emph{Security and Communication Networks}, 2022, 2022.

\bibitem[Zhao et~al.(2024{\natexlab{a}})Zhao, Shen, Fan, Li, Wu, Wu, and
  Liu]{zhao2024learning}
Puning Zhao, Li~Shen, Rongfei Fan, Qingming Li, Huiwen Wu, Jiafei Wu, and Zhe
  Liu.
\newblock Learning with user-level local differential privacy.
\newblock \emph{arXiv preprint arXiv:2405.17079}, 2024{\natexlab{a}}.

\bibitem[Zhao et~al.(2024{\natexlab{b}})Zhao, Wu, Wu, and
  Liu]{zhao2024theoretical}
Puning Zhao, Huiwen Wu, Jiafei Wu, and Zhe Liu.
\newblock On theoretical limits of learning with label differential privacy.
\newblock 2024{\natexlab{b}}.

\bibitem[Zhao et~al.(2025)Zhao, Wu, Liu, Shen, Zhang, Fan, Sun, and
  Li]{zhao2025enhancing}
Puning Zhao, Jiafei Wu, Zhe Liu, Li~Shen, Zhikun Zhang, Rongfei Fan, Le~Sun,
  and Qingming Li.
\newblock Enhancing learning with label differential privacy by vector
  approximation.
\newblock In \emph{The Thirteenth International Conference on Learning
  Representations}, 2025.
\newblock URL \url{https://openreview.net/forum?id=IwPXYk6BV9}.

\bibitem[Zhou et~al.(2014)Zhou, Claire, and King]{zhou2014predicting}
Fang Zhou, Q~Claire, and Ross~D King.
\newblock Predicting the geographical origin of music.
\newblock In \emph{2014 IEEE International Conference on Data Mining}, pages
  1115--1120. IEEE, 2014.

\end{thebibliography}

	\section*{Checklist}

	The checklist follows the references. For each question, choose your answer from the three possible options: Yes, No, Not Applicable.  You are encouraged to include a justification to your answer, either by referencing the appropriate section of your paper or providing a brief inline description (1-2 sentences). 
	Please do not modify the questions.  Note that the Checklist section does not count towards the page limit. Not including the checklist in the first submission won't result in desk rejection, although in such case we will ask you to upload it during the author response period and include it in camera ready (if accepted).
	
	\textbf{In your paper, please delete this instructions block and only keep the Checklist section heading above along with the questions/answers below.}

	\begin{enumerate}

		\item For all models and algorithms presented, check if you include:
		\begin{enumerate}
			\item A clear description of the mathematical setting, assumptions, algorithm, and/or model. [Yes]
			\item An analysis of the properties and complexity (time, space, sample size) of any algorithm. [Yes]
			\item (Optional) Anonymized source code, with specification of all dependencies, including external libraries. [No]
		\end{enumerate}

		\item For any theoretical claim, check if you include:
		\begin{enumerate}
			\item Statements of the full set of assumptions of all theoretical results. [Yes]
			\item Complete proofs of all theoretical results. [Yes]
			\item Clear explanations of any assumptions. [Yes]     
		\end{enumerate}

		\item For all figures and tables that present empirical results, check if you include:
		\begin{enumerate}
			\item The code, data, and instructions needed to reproduce the main experimental results (either in the supplemental material or as a URL). [Yes]
			\item All the training details (e.g., data splits, hyperparameters, how they were chosen). [Yes]
			\item A clear definition of the specific measure or statistics and error bars (e.g., with respect to the random seed after running experiments multiple times). [Yes]
			\item A description of the computing infrastructure used. (e.g., type of GPUs, internal cluster, or cloud provider). [Yes]
		\end{enumerate}
		
		\item If you are using existing assets (e.g., code, data, models) or curating/releasing new assets, check if you include:
		\begin{enumerate}
			\item Citations of the creator If your work uses existing assets. [Not Applicable]
			\item The license information of the assets, if applicable. [Not Applicable]
			\item New assets either in the supplemental material or as a URL, if applicable. [Not Applicable]
			\item Information about consent from data providers/curators. [Not Applicable]
			\item Discussion of sensible content if applicable, e.g., personally identifiable information or offensive content. [Not Applicable]
		\end{enumerate}
		
		\item If you used crowdsourcing or conducted research with human subjects, check if you include:
		\begin{enumerate}
			\item The full text of instructions given to participants and screenshots. [Not Applicable]
			\item Descriptions of potential participant risks, with links to Institutional Review Board (IRB) approvals if applicable. [Not Applicable]
			\item The estimated hourly wage paid to participants and the total amount spent on participant compensation. [Not Applicable]
		\end{enumerate}
		
	\end{enumerate}

	\newpage

	\appendix

	\onecolumn

	\section*{Appendix}

	In this appendix, we provide the detailed results for methodology (Appendix \ref{app:methodology}), the proofs (Appendix \ref{sec:proofs}), and experiments (Appendix \ref{app:experiments}).

	\section{Related Methodology of Locally Differentially Private Decision Tree}\label{app:methodology}

	\subsection{Generalized Randomized Response Mechanism}\label{app:generelizedrandomresponse}
	
	In this section, we introduce the generalized randomized response mechanism \citep{warner1965randomized, kairouz2016discrete, ghazi2022regression} (or direct encoding \citep{wang2017locally}). 
	Specifically, for the aligned privacy preference case, the mechanism calculates \eqref{equ:privavyprocedureX} by
	\begin{align}\label{equ:privavyprocedureXgeneralized}
		\mathrm{Pr}\left[U_i = j\right] = \begin{cases}  \frac{e^{\varepsilon / 2}}{e^{\varepsilon / 2} + |\pi^{\text{priv}}| - 1}, & \text{ if } \eins_{A_j}(X_i^{\text{priv}}) \\  \frac{1}{e^{\varepsilon / 2} + |\pi^{\text{priv}}| - 1}, & \text{ otherwise. } \end{cases}
	\end{align}
	and uses 
	$\tilde{U}_{i}^j =  \frac{e^{\varepsilon / 2} + |\pi^{\text{priv}}| - 1}{e^{\varepsilon / 2} - 1} \left( \eins_{U_i = j} - \frac{1}{e^{\varepsilon / 2} + |\pi^{\text{priv}}| - 1}  \right)$ will serve as an estimator of $\eins_{A_j}(X_i^{\text{priv}})$, since $\mathbb{E}_{\mathrm{P}, \mathrm{R}}\left[\tilde{U}_{i}^j\right] =\mathbb{E}_{\mathrm{P}}\left[\eins_{A_j}(X_i^{\text{priv}})\right] $. 
	The estimator \eqref{equ:ourestimator} remains identical. 
	For the personalized case, the mechanism computes \eqref{equ:privavyprocedurepersonalize} by 
	\begin{align}\label{equ:privavyprocedurepersonalizegeneralized}
		\tilde{Y}_i = Y_i +  \frac{4M}{\varepsilon} \xi_i, \quad 	\mathrm{Pr}\left[V_i = j\right]  = \begin{cases} 0 & \text{ if } A\times B \notin \mathcal{V}_i,  \\ 
			\frac{e^{\varepsilon / 2}}{e^{\varepsilon / 2} + |\mathcal{V}_i| - 1}
			& \text { elif } \eins_{A\times B}(X_i),   \\ \frac{1}{e^{\varepsilon / 2} + |\mathcal{V}_i| - 1}, & \text { otherwise. } \end{cases}
	\end{align}
	To estimate indicator function $\eins_{A\times B} (X_i)$, we use 
	\begin{align*}
		\tilde{V}_{i}^j =  \frac{e^{\varepsilon / 2} + |\mathcal{V}_i| - 1}{e^{\varepsilon / 2} - 1} \left( \eins_{V_i = j} - \frac{1}{e^{\varepsilon / 2} + |\mathcal{V}_i| - 1}  \right) \cdot \eins_{A\times B \in \mathcal{V}_i}.
	\end{align*}
	In this case, the indexes of potential grids are estimated analogously to $\tilde{U}_i^j$, while the remaining grids are free of privacy concern and zeroed out. 
	The privacy guarantee of \eqref{equ:privavyprocedurepersonalizegeneralized} is guaranteed by the following proposition. 
	
	\begin{proposition}\label{prop:privacygeneralized}
		Let $\pi = \{ A\times B\mid A\in\pi^{\text{priv}}, B\in \pi^{\text{pub}}\}$ be any partition of $\mathcal{X}$.
		Then the privacy mechanism \eqref{equ:privavyprocedurepersonalizegeneralized} is $\varepsilon$-semi-feature LDP. 
	\end{proposition}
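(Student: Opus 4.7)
The plan is to decompose the mechanism in \eqref{equ:privavyprocedurepersonalizegeneralized} into two pieces that are conditionally independent given $(X_i, Y_i)$: the Laplace-perturbed label $\tilde{Y}_i$, and the randomized-response index $V_i$. Once this factorization is in place, it suffices to show that each piece satisfies $(\varepsilon/2)$-semi-feature LDP, and then invoke independent composition to obtain the claimed $\varepsilon$-semi-feature LDP bound on the joint release. The non-interactivity is automatic since neither $\tilde{Y}_i$ nor $V_i$ depends on $Z_{1:i-1}$, so the definition in \eqref{equ:defofldpnon} is the operative one.

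The crucial structural observation, which I would highlight first, is that the set $\mathcal{V}_i$ is determined entirely by $X_i^{\text{pub}}$ and $W_i$: by construction $\mathcal{V}_i$ is the set of cells of $\pi^{\text{priv}} \otimes \pi^{\text{pub}}$ consistent with the publicly released coordinates. Hence, under the semi-feature LDP definition where $x^{\text{pub}}$ is held fixed across the two adjacent inputs $(x^{\text{priv}}, y)$ and $(x^{\text{priv}\prime}, y')$, the support $\mathcal{V}_i$ is identical on both sides of the likelihood ratio. In particular, any realization $j$ with $A \times B \notin \mathcal{V}_i$ has probability zero under both distributions, so the $0/0 = 1$ convention applies and these indices contribute trivially.

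For the Laplace component, since $y, y' \in [-M, M]$ the sensitivity is $|y - y'| \leq 2M$ and the Laplace density with scale $4M/\varepsilon$ gives a pointwise ratio bounded by $\exp(\varepsilon |y - y'| / (4M)) \leq \exp(\varepsilon/2)$. For the randomized-response component, on the common support $\mathcal{V}_i$ each index receives probability $\frac{e^{\varepsilon/2}}{e^{\varepsilon/2} + |\mathcal{V}_i| - 1}$ (when the index equals the cell containing the input) or $\frac{1}{e^{\varepsilon/2} + |\mathcal{V}_i| - 1}$ (otherwise). Because $|\mathcal{V}_i|$ depends only on $x^{\text{pub}}$, the denominators cancel in the ratio, and the worst case is when one input lies in the reported cell while the other does not; that ratio is exactly $e^{\varepsilon/2}$. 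Multiplying the two bounds yields $e^{\varepsilon}$ for the joint release.

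The main potential pitfall is conceptual rather than computational: one must be careful that $|\mathcal{V}_i|$ is not a quantity that itself varies across the pair $(x^{\text{priv}}, x^{\text{priv}\prime})$, which would break the clean cancellation in the randomized-response ratio and require a worst-case-over-support argument. This is precisely why the first step of the proof must emphasize that $\mathcal{V}_i$ is a function of the publicly available information only. Once that is pinned down, the remainder is a direct application of the Laplace-mechanism bound, the standard generalized randomized response analysis of \citet{kairouz2016discrete}, and parallel composition across the two independent coordinates.
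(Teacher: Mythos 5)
Your proposal is correct and follows essentially the same route as the paper: factor the release into the conditionally independent Laplace and randomized-response components, observe that $\mathcal{V}_i$ is determined by the public coordinates alone so the support and $|\mathcal{V}_i|$ coincide for $(x^{\text{priv}},y)$ and $(x^{\text{priv}\prime},y')$ (with the $0/0=1$ convention handling cells outside $\mathcal{V}_i$), and bound each factor's likelihood ratio by $e^{\varepsilon/2}$ before multiplying. The only quibble is terminological: what you invoke at the end is the basic (sequential/product) composition of two independent mechanisms on the same record, not parallel composition, but the computation you actually perform is the correct one.
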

	
	The reason we introduce this mechanism is due to the fact that when $|\mathcal{V}_i|$ is small, it has been shown to be quite efficient \citep{wang2017locally, kairouz2016discrete, xiong2020comprehensive}. In experiments where sample sizes $n\sim 1e3$ and $|\mathcal{V}_i| \sim 1e1$, we find that the generalized randomized response mechanism performs slightly better than the randomized responses introduced in the main text. However, the variance of such a mechanism grows linearly with $|\mathcal{V}_i|$, which can be troublesome in theoretical analysis where $|\mathcal{V}_i| $ is $n$ raised to some power. Consequently, in such cases, the convergence rate fails to attain optimality.

	\subsection{Complexity Analysis}\label{sec:complexity}

	We analyze the complexity of \texttt{HistOfTree} from perspectives of computation, memory, and communication.
	The overall communication rounds for personalized estimator \eqref{equ:ourestimatorpersonal} is two.
	Each user first sends its privatized label $\tilde{Y}_i$ and public features $X_i^{\text{pub}}$, which costs communication capacity of at most $d + 1$ real numbers.  
	Based on the created partition, it then sends the binary encoding $\tilde{V}_i^j$s, which are $t^s 2^{p}$ bools in total.

	The computation complexity of \texttt{HistOfTree} consists of two parts, creating the partition and computing the node values.
	The partition procedure takes $\mathcal{O}(p n d )$ time and the computation of \eqref{equ:ourestimatorpersonal} takes $\mathcal{O}(t^s2^p)$ time. 
	From the proof of Theorem \ref{thm:utility}, we know that $t^s2^{p} \lesssim n\varepsilon^2 $. 
	Thus the training stage complexity is at most $\mathcal{O}(n \varepsilon^2 + n d \log n\varepsilon^2)$, which is linear in $n$. 
	Since each prediction of the decision tree takes $\mathcal{O}(p)$ time, the test time for each test instance is $\mathcal{O} (\log n\varepsilon^2)$. 
	As for storage complexity, since \texttt{HistOfTree}  only requires storage of the tree structure and the node values, the space complexity is also $\mathcal{O}(t^s2^{p})$. 
	In short, \texttt{HistOfTree} is an efficient method in terms of computation, memory, and communication.

	\section{Proofs}\label{sec:proofs}
	
	\subsection{Proof of Proposition \ref{prop:privacy}}
	
	\begin{proof}[\textbf{Proof of Proposition  \ref{prop:privacy}}]
		Since our mechanism \eqref{equ:privavyprocedurepersonalize} does not rely on previous private information $Z_1,\cdots, Z_{i-1}$ when drawing $Z_i$, it is a non-interactive mechanism. 
		Each user only reveals two pieces of information that is private, i.e. $\tilde{Y}_i$ and $\tilde{V}_i$.
		Note that they are conditionally independent, thus for each conditional distribution $\mathrm{R}_i\left( \tilde{Y}_i , \tilde{V}_i,| X_i^{\text{priv}} = x_i^{\text{priv}},  X_i^{\text{pub}} = x_i^{\text{pub}}, Y_i = y\right)$, we can compute the density ratio as 
		\begin{align}\nonumber
			& \sup_{x_i^{\text{priv}} x_i^{\text{priv}\prime}, x_i^{\text{pub}},y,y'}\frac{\mathrm{R}_i\left( \tilde{Y}_i, \tilde{V}_i| X_i^{\text{priv}} = x_i^{\text{priv}},  X_i^{\text{pub}} = x_i^{\text{pub}}, Y_i = y \right)}{\mathrm{R}_i\left( \tilde{Y}_i, \tilde{V}_i| X_i^{\text{priv}} = x_i^{\text{priv}\prime},  X_i^{\text{pub}} = x_i^{\text{pub}}, Y_i = y' \right)} \\
			\leq  &  \sup_{x_i^{\text{priv}} x_i^{\text{priv}\prime}, x_i^{\text{pub}}} \frac{\mathrm{R}_i(\tilde{V}_i |X_i^{\text{priv}} = x_i^{\text{priv}},  X_i^{\text{pub}} = x_i^{\text{pub}}) }{\mathrm{R}_i(\tilde{V}_i |X_i^{\text{priv}} = x_i^{\text{priv}\prime},  X_i^{\text{pub}} = x_i^{\text{pub}}) } \cdot  \sup_{y,y'} \frac{\mathrm{R}_i(\tilde{Y}_i|Y_i = y)}{\mathrm{R}_i(\tilde{Y}_i|Y_i = y')}.\label{equ:densityratiodecomposition}
		\end{align}
		(\textit{i}) 
		The first part associates to the randomized response mechanism. 
		Since the conditional density is identical if $x^{\text{priv}}$ and $x^{\text{priv}\prime}$ belongs to a same $A_j$, we have 
		\begin{align*}
			\frac{\mathrm{R}_i(\tilde{V}_i |X_i^{\text{priv}} = x_i^{\text{priv}},  X_i^{\text{pub}} = x_i^{\text{pub}}) }{\mathrm{R}_i(\tilde{V}_i |X_i^{\text{priv}} = x_i^{\text{priv}\prime},  X_i^{\text{pub}} = x_i^{\text{pub}}) } =  \frac{\prod_j\mathrm{R}_i(\tilde{V}_i^j |X_i^{\text{priv}} = x_i^{\text{priv}},  X_i^{\text{pub}} = x_i^{\text{pub}})  }{\prod_j\mathrm{R}_i(\tilde{V}_i^j |X_i^{\text{priv}} = x_i^{\text{priv}\prime},  X_i^{\text{pub}} = x_i^{\text{pub}})  }.
		\end{align*}
		By definition, given different $x$, there are only two $\eins_{A\times B}(x)$ are different for all $A\times B$. 
		Without loss of generality, assume they differ on the first two elements. 
		Then we have
		\begin{align}\nonumber
			& \sup_{x_i^{\text{priv}} x_i^{\text{priv}\prime}, x_i^{\text{pub}}} \frac{\mathrm{R}_i(\tilde{V}_i |X_i^{\text{priv}} = x_i^{\text{priv}},  X_i^{\text{pub}} = x_i^{\text{pub}}) }{\mathrm{R}_i(\tilde{V}_i |X_i^{\text{priv}} = x_i^{\text{priv}\prime},  X_i^{\text{pub}} = x_i^{\text{pub}}) }  \\
			= & \sup_{x_i^{\text{priv}} x_i^{\text{priv}\prime}, x_i^{\text{pub}}}\frac{\prod_{j= 1,2} \mathrm{R}_i(\tilde{V}_i^j |X_i^{\text{priv}} = x_i^{\text{priv}},  X_i^{\text{pub}} = x_i^{\text{pub}}) }{\prod_{j=1,2} \mathrm{R}_i(\tilde{V}_i^j |X_i^{\text{priv}} = x_i^{\text{priv}\prime},  X_i^{\text{pub}} = x_i^{\text{pub}}) } \leq e^{\varepsilon/4 +\varepsilon/4} = e^{\varepsilon/2}. \label{equ:densityratioresultU}
		\end{align}
		(\textit{ii}) For the second part, there holds
		\begin{align}\nonumber
			\sup_{y,y'} \frac{\mathrm{R}_i(\tilde{Y}_i|Y_i = y)}{\mathrm{R}_i(\tilde{Y}_i|Y_i = y')} \leq \sup_{y,y'} \frac{d\mathrm{R}_i(\tilde{Y}_i|Y_i = y)}{d\mathrm{R}_i(\tilde{Y}_i|Y_i = y')}=  &\sup_{y,y'} \frac{\exp \left(-\frac{\varepsilon}{4M}|\tilde{Y}_i - y|\right)}{\exp \left(-\frac{\varepsilon}{4M}|\tilde{Y}_i - y'|\right)}
			\\
			\leq & \sup_{y,y'} \exp\left(\frac{\varepsilon}{4M}|y-y'|\right) \leq e^{\varepsilon/2}.\label{equ:densityratioresultY}
		\end{align}
		Bringing \eqref{equ:densityratioresultU} and \eqref{equ:densityratioresultY} into \eqref{equ:densityratiodecomposition} yields the desired conclusion. 
	\end{proof}

	\begin{proof}[\textbf{Proof of Proposition \ref{prop:privacygeneralized}}]
		We only need to modify (i) of the previous proof, which is now characterized by the generalized randomized response mechanism. 
		For each grid $A\times B$ with index $j$, we consider the three cases in \eqref{equ:privavyprocedurepersonalize}. 
		The first case is $A\times B\notin \mathcal{V}_i$. 
		The definition of $\mathcal{V}_i$ yields that, for any $X_i' = (x^{\text{priv}\prime}_i, x^{\text{pub}}_i)$, its potential grids $\mathcal{V}_i'$ is exactly the same as $\mathcal{V}_i$. 
		Thus, we have $\mathrm{R}_i(V_i = k |X_i^{\text{priv}} = x_i^{\text{priv}},  X_i^{\text{pub}} = x_i^{\text{pub}})  =\mathrm{R}_i(V_i = k  |X_i^{\text{priv}} = x_i^{\text{priv}\prime},  X_i^{\text{pub}} = x_i^{\text{pub}}) = 0$. 
		And thus the likelihood ratio is bounded by $e^{\varepsilon / 2}$ as we define $0/0=1$. 
		The second case is $A\times B \in \mathcal{V}_i$ and $X_i \in A\times B$. 
		In this case, any $X_i' = (x^{\text{priv}\prime}_i, x^{\text{pub}}_i)$ will have $A\times B \in \mathcal{V}_i'$. 
		As a result, 
		\begin{align*}
			\frac{\mathrm{R}_i(V_i = k|X_i^{\text{priv}} = x_i^{\text{priv}},  X_i^{\text{pub}} = x_i^{\text{pub}})  }{\mathrm{R}_i(V_i = k |X_i^{\text{priv}} = x_i^{\text{priv}\prime},  X_i^{\text{pub}} = x_i^{\text{pub}}) }  \leq \frac{\frac{e^{\varepsilon / 2}}{e^{\varepsilon / 2} + |\mathcal{V}_i| - 1}}{\frac{1}{e^{\varepsilon / 2} + |\mathcal{V}_i| - 1}} = e^{\varepsilon /2 }. 
		\end{align*}
		The third case is the symmetric situation of the second case. 
		Three cases together, we have
		\begin{align*}
			\frac{\mathrm{R}_i(V_i |X_i^{\text{priv}} = x_i^{\text{priv}},  X_i^{\text{pub}} = x_i^{\text{pub}}) }{\mathrm{R}_i(V_i |X_i^{\text{priv}} = x_i^{\text{priv}\prime},  X_i^{\text{pub}} = x_i^{\text{pub}}) }\leq e^{\varepsilon / 2}. 
		\end{align*}
	\end{proof}
	
	\subsection{Proof of Theorem \ref{thm:lowerbound}} \label{app:lowerbound}

	\begin{proof}[\textbf{Proof of Theorem \ref{thm:lowerbound}}]
		The lower bound contains two terms. 
		The second term $n^{-\frac{2\alpha}{2\alpha + d}}$ is the classical mini-max lower bound for non-private learners under Assumption \ref{asp:alphaholder}.
		See \citep{tsybakov2008introduction} for a comprehensive discussion. 
		In the following, we prove the conclusion for the first term. 
		The overall proof strategy mimics that of \citep{gyorfi2022rate}, while the utilization of information inequality is different. 
		We first construct a finite set of hypotheses distribution. 
		We then combine the information inequality under privacy constraint \citep{duchi2018minimax} and Assouad's Lemma \citep{tsybakov2008introduction}.

		Let $K_0: \mathbb{R} \rightarrow[0, \infty)$ be a $\alpha$-Holder continuous function with constant $1$ such that $|K_0(x_1) - K_0(x_2)| \leq \|x_1 - x_2\|_2^{\alpha}$. 
		Also, let $\operatorname{supp}\left(K_0\right) \subseteq[0,1]$. 
		For $x=\left(x_1, \ldots, x_d\right) \in[0,1]^d$, define the function $K:[0,1]^d \rightarrow \mathbb{R}$ via $K(x)=\min _{i=1, \ldots, d} K_0\left(x_i\right)$. 
		W.o.l.g., we let $ \int_{[0,1]^d} K(x') dx' = 1$ and $\|K\|_{\infty} = 2$. 
		We define the index set $\sigma = (\sigma^1,\cdots, \sigma^d) \in \{0,\cdots, k - 1\}^{d}$. 
		We write $\sigma = (\sigma_1, \sigma_2)$, where $\sigma_1\in \{0, \cdots, k-1\}^{s^*}$ is the first $s^*$ entries representing indexes on private dimensions, and $\sigma_2\in \{0, \cdots, k-1\}^{d-s^*}$ representing indexes on public dimensions. 
		Let $\theta^1  \in \{0,1\}^{k^{s^*}}$ be some one-hot encoding of $\sigma_1$ and $\theta^2\in\{0,1\}^{k^{d - s^*}}$ for $\sigma_2$ accordingly. 
		Denote $\theta = (\theta^1, \theta^2)$. 
		In this case, we have a bijection between a $\theta$ and a $\sigma$. 
		Define the function
		\begin{align*}
			K^{\theta}(x)= \frac{1}{k^{\alpha}} K (k x^1 - \sigma^1, \cdots, k x^d - \sigma^d)
		\end{align*}
		for integer $k$ and $x \in  \times_{i=1}^d\left[\sigma^i / k,\left(\sigma^i+1\right) / k\right]$. 
		We consider the following class of distributions $\mathrm{P}^{\theta}$ where $\mathrm{P}^{\theta }_X: [0,1]^d \rightarrow [0, \infty)$ is the uniform distribution, and $Y|X = x$ is the uniform distribution on $[ K^{\theta}(x) -1 / 2, K^{\theta}(x) + 1/2]$.

		We first verify that the constructed $\mathrm{P}^{\theta}$s satisfy Assumption \ref{asp:alphaholder}. 
		The marginal distribution is uniform and thus has bounded density. 
		The conditional expectation $K^{\theta}$ has 
		\begin{align*}
			|K^{\theta} (x_1) - K^{\theta} (x_2)| = \frac{1}{k^{\alpha}} |K(x_1) - K(x_2)|\leq  \frac{1}{k^{\alpha}} \| k x_1 - k x_2\|^{\alpha}_2  =  \| x_1 -  x_2\|^{\alpha}_2
		\end{align*}
		for all $\theta$.
		Thus, all $\mathrm{P}^{\theta}$ is readily checked to satisfy Assumption \ref{asp:alphaholder}.
		Let us now assume that the raw data have been anonymized by means of an arbitrary privacy mechanism $\mathrm{R}$ that is sequentially-interactive $\varepsilon$-semi-feature LDP. 
		$\mathrm{R}$ maps each of $(X_i,Y_i)$ to $Z_i$ based on $Z_1,\cdots, Z_{i-1}$. 
		Let $\mathrm{P}^{\theta n} $ and $\mathrm{R}\mathrm{P}^{\theta n} $ be the joint distribution of $\left((X_1, Y_1), \cdots, (X_n, Y_n)\right)$ and $(Z_1, \cdots,Z_n)$, respectively. 
		Let $\widehat{f}$ be the estimator drawn by using $(Z_1, \cdots,Z_n)$.  
		We are interested in the following quantity
		\begin{align*}
			\mathbb{E}_{\mathrm{R}\mathrm{P}^{\theta n} } \mathcal{R}_{L,\mathrm{P^{\theta}}} (\widehat{f})
			- \mathcal{R}_{L,\mathrm{P^{\theta}}}^*
			=
			\mathbb{E}_{\mathrm{R}\mathrm{P}^{\theta n} }\left[\int_{[0,1]^d}\left(\widehat{f}(x') - K^{\theta}(x')\right)^2 d\mathrm{P^{\theta}_X(x')}\right]. 
		\end{align*}
		For each fixed $\widehat{f}$, let 
		\begin{align*}
			\widehat{\theta} = {\arg \min}_{\theta \in \{0,1\}^{k^d}} \left( \mathcal{R}_{L,\mathrm{P^{\theta}}} (\widehat{f})
			- \mathcal{R}_{L,\mathrm{P^{\theta}}}^*\right)
		\end{align*}
		Thus, by a triangular decomposition, the risk has 
		\begin{align*}
			\mathcal{R}_{L,\mathrm{P^{\theta}}} (K^{\widehat{\theta}})
			- \mathcal{R}_{L,\mathrm{P^{\theta}}}^*  \leq &  \mathcal{R}_{L,\mathrm{P^{\theta}}} (\widehat{f})
			-   \mathcal{R}_{L,\mathrm{P^{\theta}}}^*+  \mathcal{R}_{L,\mathrm{P^{\widehat{\theta}}}} (\widehat{f}) - \mathcal{R}_{L,\mathrm{P^{\widehat{\theta}}}}^* \\
			\leq & 2 \left(  \mathcal{R}_{L,\mathrm{P^{\theta}}} (\widehat{f})
			-   \mathcal{R}_{L,\mathrm{P^{\theta}}}^* \right). 
		\end{align*}
		By a reordering of the decomposition of the domain $[0,1]^d = \cup_{\sigma} \times_{i=1}^d\left[\sigma^i / k,\left(\sigma^i+1\right) / k\right] := \cup_{\theta}B^{\theta}$, we can reduce the expected risk as
		\begin{align}\nonumber
			\mathbb{E}_{\mathrm{R}\mathrm{P}^{\theta n} } \mathcal{R}_{L,\mathrm{P^{\theta}}} (\widehat{f})
			- \mathcal{R}_{L,\mathrm{P^{\theta}}}^* \geq &  \frac{1}{2}\left(\mathbb{E}_{\mathrm{R}\mathrm{P}^{\theta n} }\mathcal{R}_{L,\mathrm{P^{\theta}}} (K^{\widehat{\theta}})
			- \mathcal{R}_{L,\mathrm{P^{\theta}}}^*\right)\\\nonumber
			= & \frac{1}{2}\mathbb{E}_{\mathrm{R}\mathrm{P}^{\theta n} } \left[\int_{[0,1]^d } \left(K^{\widehat{\theta}}(x') - K^{{\theta}}(x') \right)^2d x'\right] \\\label{equ:lowerbound1}
			= & \frac{1}{k^{2\alpha + d}} \mathbb{E}_{\mathrm{R}\mathrm{P}^{\theta n} }\left[\eins_{\widehat{\theta} = \theta}\right]\cdot \int_{[0,1]^d} K(x') dx' =  \frac{1}{k^{2\alpha + d}} \mathbb{E}_{\mathrm{R}\mathrm{P}^{\theta n} }\left[\eins_{\widehat{\theta} = \theta}\right]. 
		\end{align}

		For any pair of distribution $\mathrm{P}_1$ and $\mathrm{P}_2$, 
		let $KL$ be the Kullback-Leibler divergence of $\mathrm{P}_1$ and $\mathrm{P}_2$, i.e. 
		$KL(\mathrm{P}_1 \| \mathrm{P_2}) = \int \log \left(d\mathrm{P}_1(x) / d\mathrm{P}_2\right) d\mathrm{P}_1(x)$. 
		Also, let the total variation distance of $\mathrm{P}_1$ and $\mathrm{P}_2$ be $V(\mathrm{P}_1 , \mathrm{P_2}) = \frac{1}{2} \int |d\mathrm{P}_1(x)  - d\mathrm{P}_2(x) |$.
		To utilize the machinery in \citet{duchi2018minimax}, we need to bound the quantity $KL(\mathrm{R}\mathrm{P}^{\theta_1n}\| \mathrm{R}\mathrm{P}^{\theta_2n} )$ for arbitrary $\theta_1$ and $\theta_2$. 
		Tensorization yields
		\begin{align}\label{equ:tensorizationofKL}
			KL(\mathrm{R}\mathrm{P}^{\theta_1n}\| \mathrm{R}\mathrm{P}^{\theta_2n} ) = \sum_{i=1}^n KL(\mathrm{R}_i\mathrm{P}^{\theta_1}\| \mathrm{R}_i\mathrm{P}^{\theta_2} ).
		\end{align}
		Let $p^{\theta\text{pub}}$ be the probability density of $(X^{s^*+1}, X^{s^*+2}, \cdots, X^{d})$. 
		Let $\mathrm{P}^{\theta\text{priv}}$ be the conditional probability measure of $(X^{1}, X^{2}, \cdots, X^{s^*}, Y)$ conditional on $(X^{s^*+1}, X^{s^*+2}, \cdots, X^{d})$ and $p^{\theta\text{priv}}$ be its density function. 
		And let $r_i$ be the density of mechanism $\mathrm{R}_i$ condition on $(X_i, Y_i)$. 
		Since the estimated $\widehat{\theta}$ is drawn on both $Z_i$ and $X^{\text{pub}}_i$, w.o.l.g. we write the output of mechanism $\mathrm{R}$ as $(Z_i, X^{\text{pub}}_i)$. 
		Then the output density function is $m_i^{\theta}(z_i, x^{\text{pub}}_i) = \int_{\mathcal{X}^{\text{priv}} \times \mathcal{Y}} r_i(z_i  | x_i^{\text{priv}}, x_i^{\text{pub}}, y_i)p^{\theta\text{priv}}(x^{\text{priv}}_i, y_i | x_i^{\text{pub}})p^{\theta\text{pub}}(x^{\text{pub}}_i) d x_i^{\text{priv}} dy_i $.
		Then we have 
		\begin{align}\nonumber
			KL(\mathrm{R}_i\mathrm{P}^{\theta_1}\| \mathrm{R}_i\mathrm{P}^{\theta_2} ) = & \int_{\mathcal{X}^{\text{pub}}} \int_{\mathcal{S} }\log \left(\frac{m_i^{\theta_1}(z_i, x^{\text{pub}}_i)}{m_i^{\theta_2}(z_i, x^{\text{pub}}_i)}\right)  m_i^{\theta_1}(z_i, x^{\text{pub}}_i) d z_i d x^{\text{pub}}_i\\
			= &  \int_{\mathcal{X}^{\text{pub}}}\underbrace{\left[\int_{\mathcal{S} }\log \left(\frac{m_i^{\theta_1}(z_i| x^{\text{pub}}_i) p^{\theta_1\text{pub}}(x_i^{\text{pub}})}{m_i^{\theta_2}(z_i| x^{\text{pub}}_i) p^{\theta_2\text{pub}}(x_i^{\text{pub}})}\right)  m_i^{\theta_1}(z_i| x^{\text{pub}}_i) d z_i\right]}_{(**)} p^{\theta_1\text{pub}}(x_i^{\text{pub}}) d x^{\text{pub}}_i. \label{equ:lowerbound2}
		\end{align}
		Our construction yields that 
		\begin{align*}
			(**) = \int_{\mathcal{S} }\log \left(\frac{m_i^{\theta_1}(z_i| x^{\text{pub}}_i) }{m_i^{\theta_2}(z_i| x^{\text{pub}}_i) }\right)  m_i^{\theta_1}(z_i| x^{\text{pub}}_i) d z_i = KL\left((\mathrm{R}_i\mathrm{P}^{\theta_1})_{Z_i | X^{\text{pub}}_i}\| (\mathrm{R}_i\mathrm{P}^{\theta_2})_{Z_i | X^{\text{pub}}_i} \right)
		\end{align*}
		By Definition \ref{def:ldp},  $Z_i$ given $X_i^{\text{pub}}$ is actually locally differentially private (as rigorously defined in \citet{duchi2018minimax, gyorfi2022rate}) w.r.t. $(X^{\text{priv}}_i, Y_i)$. 
		Thus, applying Theorem 1 in \citet{duchi2018minimax}, we have 
		\begin{align*}
			(**) \leq  4\left(e^{\varepsilon} - 1\right)^2V^2\left(\mathrm{P}^{\theta_1\text{priv}}, \mathrm{P}^{\theta_2\text{priv}}\right). 
		\end{align*}
		Bringing in our construction where the density on $[0,1]^d \times [K^{\theta}(x) - 1/2, K^{\theta}(x) + 1/2]$ is 1 and otherwise zero, we get
		\begin{align*}
			(**)  \leq & 4\left(e^{\varepsilon} - 1\right)^2\left( \int_{\mathcal{X}^{\text{priv}}} |K^{\theta_1}(x^{\text{priv}}_i, x^{\text{pub}}_i) - K^{\theta_2}(x^{\text{priv}}_i, x^{\text{pub}}_i)| dx^{\text{priv}}_i\right)^2\\
			= & 4\left(e^{\varepsilon} - 1\right)^2 \cdot k^{-2\alpha -2 s^* }  \cdot \left( 2 \cdot \int_{[0,1]^{s^*}} |K(x, k x^{\text{pub}}_i - \sigma_2) | dx \right)^2. 
		\end{align*}
		Bringing back this into \eqref{equ:lowerbound2} and apply Cauchy's inequality, there holds
		\begin{align*}
			KL(\mathrm{R}_i\mathrm{P}^{\theta_1}\| \mathrm{R}_i\mathrm{P}^{\theta_2} )
			\leq &16\left(e^{\varepsilon} - 1\right)^2 \cdot k^{-2\alpha -2 s^* } \int_{\mathcal{X}^{\text{pub}}}   \int_{[0,1]^{s^*}} |K(x, k x^{\text{pub}}_i - \sigma_2) |^2 dx   p^{\theta_1\text{pub}}(x_i^{\text{pub}}) d x^{\text{pub}}_i
			\\ 
			\leq & 16\left(e^{\varepsilon} - 1\right)^2 \cdot k^{-2\alpha -2 s^*  - (d-s^*)} \int_{[0,1]^d} |K(x)|dx \leq 32 \left(e^{\varepsilon} - 1\right)^2 \cdot k^{-2\alpha - s^*  - d}.
		\end{align*}
		Bringing back this result into \eqref{equ:tensorizationofKL} leads to 
		\begin{align*}
			KL(\mathrm{R}\mathrm{P}^{\theta_1n}\| \mathrm{R}\mathrm{P}^{\theta_2n} )\leq 32 \left(e^{\varepsilon} - 1\right)^2 \cdot k^{-2\alpha - s^*  - d}n.
		\end{align*}
		Consequently, taking a $k = \left(32 n(e^{\varepsilon} - 1)^2\right)^{\frac{1}{2\alpha + s^* + d}}$ yields a $	KL(\mathrm{R}\mathrm{P}^{\theta_1n}\| \mathrm{R}\mathrm{P}^{\theta_2n} )\leq 1$. 
		Applying Theorem 2.12, Statement (iv) of \citet{tsybakov2008introduction} gives us $\mathbb{E}_{\mathrm{R}\mathrm{P}^{\theta n} }\left[\eins_{\widehat{\theta} = \theta}\right] \geq k^d / 8$, which is combined with \eqref{equ:lowerbound1} and leads to 
		\begin{align*}
			\mathbb{E}_{\mathrm{R}\mathrm{P}^{\theta n} } \mathcal{R}_{L,\mathrm{P^{\theta}}} (\widehat{f})
			- \mathcal{R}_{L,\mathrm{P^{\theta}}}^* \gtrsim \left(n(e^{\varepsilon} - 1)^2\right)^{\frac{2\alpha}{2\alpha + s^* + d}}. 
		\end{align*}
	\end{proof}

	\begin{align*}
		\sum_{i=1}^n \mathcal{E}^{\frac{2\alpha + d + \sum_{j=1}^d W_i^j}{2\alpha}} \leq \frac{1}{(e^{\varepsilon} - 1)^2 } 
	\end{align*}
	
	\begin{align*}
		\sum_{i=1}^n \mathcal{E}^{ - \frac{2\alpha + d + \sum_{j=1}^d W_i^j}{2\alpha}} = n^2\varepsilon^2
	\end{align*}

	\subsection{Proof of Theorem \ref{thm:utility}} \label{app:proofofutility}

	We first define the intermediate quantity 
	\begin{align}\label{equ:populationestimator}
		\overline{f}(x) = \sum_{A_j, B_k}   \eins_{A_j\times B_k}(x) \frac{\int_{A_j\times B_k} f^*(x) d\mathrm{P}_X(x)  } {\int_{A_j\times B_k}  d\mathrm{P}_X(x) }.
	\end{align}
	We rely on the following decomposition.
	\begin{align}\nonumber
		\mathcal{R}_{L,\mathrm{P}}(\tilde{f}) - \mathcal{R}_{L,\mathrm{P}}(f^*) = & \underbrace{ \mathcal{R}_{L,\mathrm{P}}(\tilde{f}) - \mathcal{R}_{L,\mathrm{P}}(f)}_{\textbf{Privatized Error}} + \underbrace{ \mathcal{R}_{L,\mathrm{P}}(f) - \mathcal{R}_{L,\mathrm{P}}(\overline{f})}_{\textbf{Sample Error}}\\ & + \underbrace{ \mathcal{R}_{L,\mathrm{P}}(\overline{f}) - \mathcal{R}_{L,\mathrm{P}}(f^*)}_{\textbf{Approximation Error}}. \label{equ:decompositionofexcessrisk}
	\end{align}
	where the \texttt{HistOfTree} estimator $\tilde{f}$, the non-private partition-based estimator $f$, and the population partition-based estimator $\overline{f}$ are defined in \eqref{equ:ourestimatorpersonal}, \eqref{equ:nonprivatedecisiontreeestimator}, and \eqref{equ:populationestimator}, respectively. 
	Loosely speaking, the first error term quantifies the depravation brought by adding privacy noises to the estimator, which we call the privatized error. The second term corresponds to the expected estimation error brought by the randomness of the data, which we call the sample error. The last term is called approximation error, which arises due to the limited approximation capacity of piecewise constant functions.
	The following three lemmas provide bounds for each of the three errors.

	\begin{lemma}[\textbf{Bounding privatised error}]\label{lem:boundingofprivatisederror}
		Let the \texttt{HistOfTree} estimator $\tilde{f}$ and the non-private partition-based estimator $f$ be defined in \eqref{equ:ourestimatorpersonal} and \eqref{equ:nonprivatedecisiontreeestimator}, respectively. 
		For $\pi = \pi^{\text{priv}} \otimes \pi^{\text{pub}} $, let $\pi^{\text{priv}}$ be a histogram partition with $t$ bins and $\pi^{\text{pub}}$ be generated by Algorithm \ref{alg:partition} with depth $p$.  
		Let Assumption \ref{asp:alphaholder} hold. 
		Suppose $t^s \cdot 2^p \cdot \log n \lesssim {n }$.
		Then, there holds
		\begin{align*}
			\mathcal{R}_{L,\mathrm{P}}(\tilde{f}) - \mathcal{R}_{L,\mathrm{P}}(f)  \lesssim \frac{ t^{2s} \cdot 2^{p} \cdot \log n \cdot  \frac{1 }{n}\sum_{i=1}^n 2^{ \sum_{\ell = s + 1}^d  W_i^{\ell} \cdot p / (d - s) } }{ n\varepsilon^2}
		\end{align*}
		with probability $\mathrm{P}^n\otimes \mathrm{R}^n$ at least $1-4/n^2$.
	\end{lemma}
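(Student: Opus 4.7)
The plan is as follows. After clipping $\tilde f$ into $[-M,M]$ (which only decreases the risk since $|f^*|\leq M$), both $\tilde f$ and $f$ are piecewise constant on $\pi = \pi^{\text{priv}}\otimes\pi^{\text{pub}}$, with cell values $\tilde f_j, f_j\in[-M,M]$ on $C_j = A_j\times B_j$. Using the identity $\|\tilde f - f^*\|^2 - \|f - f^*\|^2 = \|\tilde f - f\|^2 + 2\langle \tilde f - f, f - f^*\rangle_{L^2(\mathrm{P})}$, the orthogonality $\langle \tilde f - f, \bar f - f^*\rangle_{L^2(\mathrm{P})} = 0$ (from piecewise-constancy of $\tilde f - f$ and $\bar f_j := \mathbb{E}[f^*(X)\mid X\in C_j]$), and Young's inequality on the residual cross term $2\langle \tilde f - f, f - \bar f\rangle_{L^2(\mathrm{P})}$, the privatised error is controlled by a constant multiple of $\|\tilde f - f\|_{L^2(\mathrm{P})}^2 = \sum_j \mathrm{P}(C_j)(\tilde f_j - f_j)^2$, with a residual fraction of the sample error absorbable into that lemma. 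On each cell one writes $\tilde f_j = N_j/D_j$ and $f_j = N'_j/D'_j$ with $N_j = \sum_i \tilde Y_i\tilde V_i^j$, $D_j=\sum_i \tilde V_i^j$, $N'_j=\sum_i Y_i\eins_{C_j}(X_i)$, $D'_j=\sum_i \eins_{C_j}(X_i)$, and applies the ratio identity
\[
\tilde f_j - f_j \;=\; \frac{N_j - N'_j}{D_j} \;-\; f_j\cdot\frac{D_j - D'_j}{D_j},
\qquad |\tilde f_j - f_j|\;\leq\;\frac{|N_j - N'_j| + M|D_j - D'_j|}{|D_j|}.
\]

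The core concentration step invokes Bernstein's inequality. Conditional on the data, $\mathbb{E}_{\mathrm{R}}[\tilde V_i^j\mid X_i] = \eins_{C_j}(X_i)$ makes $D_j - D'_j$ a sum of independent centered increments, non-degenerate only for users $i$ with $C_j\in\mathcal V_i$, of range $\lesssim C_{\varepsilon}\lesssim 1/\varepsilon$ and variance $\lesssim 1/\varepsilon^2$; the same holds for $N_j - N'_j$ with an extra factor of $M$. Per cell, Bernstein yields with probability at least $1 - n^{-4}$
\[
|D_j - D'_j| \;\lesssim\; \frac{\sqrt{\#\{i:C_j\in\mathcal V_i\}\log n}}{\varepsilon} + \frac{\log n}{\varepsilon},
\]
and an analogous estimate for $|N_j - N'_j|$. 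A parallel Bernstein bound on $D'_j$, combined with $\mathrm{P}(C_j)\gtrsim 1/(t^s 2^p)$ (bounded density and equal-volume cells from the histogram and max-edge constructions) and the hypothesis $t^s 2^p\log n\lesssim n$, gives $|D_j|\gtrsim n\mathrm{P}(C_j)$ uniformly over the $t^s 2^p$ cells after a union bound of total failure $\lesssim n^{-2}$.

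Combining, $\sum_j \mathrm{P}(C_j)(\tilde f_j - f_j)^2 \lesssim \frac{\log n}{n^2\varepsilon^2}\sum_j \frac{\#\{i:C_j\in\mathcal V_i\}}{\mathrm{P}(C_j)} \lesssim \frac{\log n\cdot t^s 2^p}{n^2\varepsilon^2}\sum_i |\mathcal V_i|$ by a Fubini swap. The decisive combinatorial step is the estimate
\[
|\mathcal V_i| \;\asymp\; t^s \cdot 2^{\frac{p}{d-s}\sum_{\ell=s+1}^d W_i^\ell}:
\]
along each private axis the histogram admits all $\leq t$ bins (since user $i$ may not release that coordinate), while along each public axis $\ell>s$ the user pins $X_i^\ell$ to a unique leaf when $W_i^\ell=0$ and leaves all $\sim 2^{p/(d-s)}$ leaves compatible when $W_i^\ell = 1$. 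Summing gives $\sum_i|\mathcal V_i|\lesssim n t^s\delta(p)$, so the overall bound is $\lesssim t^{2s}2^p\log n\,\delta(p)/(n\varepsilon^2)$ on an event of probability $\geq 1 - 4/n^2$.

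The main obstacle is justifying the $|\mathcal V_i|$ estimate: the max-edge rule of Algorithm \ref{alg:partition} only splits \emph{some} longest edge and not necessarily each of the $d - s$ public axes evenly. One must invoke the geometric property of max-edge trees established in \citet{cai2023extrapolated, ma2023decision, ma2024optimal}, namely that every root-to-leaf path uses each public axis $\Theta(p/(d-s))$ times up to absolute constants absorbed in $\lesssim$. A secondary bookkeeping point is that the linear Bernstein tail $\log n/\varepsilon$ must be dominated by the square-root term cell-wise, which is precisely what the hypothesis $t^s 2^p\log n\lesssim n$ enforces uniformly.
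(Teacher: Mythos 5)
Your proposal follows essentially the same route as the paper's own proof: a per-cell decomposition of $\tilde f_j - f_j$ into numerator and denominator perturbations, sub-exponential/Bernstein concentration for the Laplace and randomized-response noise restricted to the $\#\{i : C_j \in \mathcal V_i\}$ non-degenerate increments, the denominator lower bound $|D_j| \gtrsim n/(t^s 2^p)$ from the bounded density together with $t^s 2^p \log n \lesssim n$, and the combinatorial count $|\mathcal V_i| \lesssim t^s \cdot 2^{p\sum_{\ell>s} W_i^\ell/(d-s)}$ resting on the max-edge property that every public axis is split at least $\lfloor p/(d-s)\rfloor$ times (the paper's Lemmas \ref{lem::treeproperty} and \ref{lem:zerocountsofV}). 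Your opening step is in fact slightly more careful than the paper's, which writes $\mathcal{R}_{L,\mathrm{P}}(\tilde f)-\mathcal{R}_{L,\mathrm{P}}(f)=\int|\tilde f - f|^2\,d\mathrm{P}_X$ as an identity without addressing the cross term $2\langle \tilde f - f,\, f - f^*\rangle_{L_2(\mathrm{P}_X)}$, which you correctly dispose of via orthogonality against $\overline{f}-f^*$ and Young's inequality absorbed into the sample error.
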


	\begin{lemma}[\textbf{Bounding sample error}]\label{lem:boundingofsampleerror}
		Let the non-private partition-based estimator $f$ and the population partition-based estimator $\overline{f}$ be defined in \eqref{equ:nonprivatedecisiontreeestimator} and \eqref{equ:populationestimator}, respectively. 
		For $\pi = \pi^{\text{priv}} \otimes \pi^{\text{pub}} $, let $\pi^{\text{priv}}$ be a histogram partition with $t$ bins and $\pi^{\text{pub}}$ be generated by Algorithm \ref{alg:partition} with depth $p$.  
		Let Assumption \ref{asp:alphaholder} hold. 
		Suppose $t^s \cdot 2^p \cdot \log n \lesssim {n }$.
		Then, there holds
		\begin{align*}
			\mathcal{R}_{L,\mathrm{P}}(f) - \mathcal{R}_{L,\mathrm{P}}(\overline{f})  \lesssim \frac{ t^{s} \cdot 2^{p} \cdot \log n }{ n}
		\end{align*}
		with probability $\mathrm{P}^n\otimes \mathrm{R}^n$ at least $1-4/n^2$.
	\end{lemma}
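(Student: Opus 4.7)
The plan is to exploit the least-squares structure of the loss to reduce the sample error to an $L_2$-type discrepancy that can be analyzed cell-by-cell. Observe that both $f$ and $\overline{f}$ are piecewise constant on the common partition $\pi = \pi^{\text{priv}} \otimes \pi^{\text{pub}}$, and by construction $\overline{f}$ is the $L_2(\mathrm{P}_X)$-projection of $f^*$ onto the space of $\pi$-piecewise-constant functions. A standard computation then shows the cross term vanishes and gives
\begin{align*}
\mathcal{R}_{L,\mathrm{P}}(f) - \mathcal{R}_{L,\mathrm{P}}(\overline{f}) = \|f - \overline{f}\|_{L_2(\mathrm{P}_X)}^2 = \sum_{C \in \pi} \mu_C \bigl( S_C/N_C - \overline{f}_C \bigr)^2,
\end{align*}
where $N_C = \sum_i \eins_C(X_i)$, $S_C = \sum_i Y_i \eins_C(X_i)$, $\mu_C = \mathrm{P}_X(C)$, and $\overline{f}_C$ is the cell value of $\overline{f}$. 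The problem reduces to controlling each cell-wise deviation.

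Next, for each fixed cell $C$, I would apply Bernstein's inequality twice. First, on $N_C$, which is a sum of indicators with mean $n\mu_C$: this gives $N_C \geq n\mu_C/2$ with probability $1 - n^{-c}$ provided $n\mu_C \gtrsim \log n$; cells with $n\mu_C \lesssim \log n$ can be handled separately by noting their total mass contribution is at most $O(|\pi|\log n / n)$, which already matches the target rate. Second, conditional on $N_C$, the sum $S_C - N_C \overline{f}_C$ is a sum of $N_C$ bounded zero-mean variables (using $|Y_i| \leq M$ and the definition of $\overline{f}_C$), and Bernstein yields $|S_C/N_C - \overline{f}_C|^2 \lesssim \log n / N_C \lesssim \log n/(n\mu_C)$ with the desired probability. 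Summing $\mu_C \cdot \log n /(n\mu_C)$ over all $|\pi| \asymp t^s \cdot 2^p$ cells gives exactly $t^s \cdot 2^p \cdot \log n / n$.

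The main obstacle, and the reason this is more delicate than a textbook histogram argument, is that $\pi^{\text{pub}}$ is data-adaptive: Algorithm~\ref{alg:partition} chooses splits based on the privatized labels $\tilde{Y}_i$ and the subsample determined by $W_i^{\ell}$. Consequently the cells are random and depend on the training data, so one cannot simply apply Bernstein per cell without care. I would handle this by a union bound over the finite class of trees that the max-edge rule can produce: at each of at most $2^p$ internal nodes, there are at most $d$ candidate split directions (the longest edges) and the split location is deterministic (the midpoint), so the number of possible partitions is bounded by $d^{2^p}$. Taking the union bound introduces only an additional $2^p \log d$ factor in the Bernstein tail, which under the assumption $t^s \cdot 2^p \log n \lesssim n$ is absorbed into the $\log n$ factor (adjusting constants). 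Combining the Bernstein bounds across all cells and all candidate trees and aggregating the failure probabilities to $4/n^2$ completes the proof.
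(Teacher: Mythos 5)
Your reduction to the cell-wise quantity $\sum_{C\in\pi}\mu_C\bigl(S_C/N_C-\overline{f}_C\bigr)^2$ and the two Bernstein applications per cell are fine (and essentially match the paper, which bounds the same discrepancy cell by cell using its Lemmas \ref{lem:cellbound} and \ref{lem:cellboundY}). The genuine gap is in your treatment of the data-adaptive partition. You union-bound over the set of trees the max-edge rule can produce, of cardinality $d^{2^p}$, and claim the resulting $2^p\log d$ inflation of the Bernstein tail is "absorbed into the $\log n$ factor." It is not: in Theorem \ref{thm:utility} the depth is chosen so that $2^p\asymp(n\varepsilon^2/\log n)^{(d-s)/(2\alpha+d+s+\lambda^*(d-s))}$, i.e.\ $2^p$ is polynomial in $n$, and the hypothesis $t^s\cdot 2^p\log n\lesssim n$ only caps $2^p$ at roughly $n/(t^s\log n)$ — it does not make $2^p\log d=O(\log n)$. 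Carrying your union bound through honestly gives a per-cell deviation of order $(2^p\log d+\log n)/(n\mu_C)$ and hence a final bound of order $t^s\cdot 2^{2p}\log d/n$, which is strictly weaker than the claimed $t^s\cdot 2^p\log n/n$ and would destroy the rate in Theorem \ref{thm:utility}.

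The fix is to apply uniform concentration to the right index set. The paper does this via Lemma \ref{VCIndex}: the class of all axis-aligned cells $\times_{i=1}^d[a_i,b_i]$ has VC index $2d+1$ and $L_1$-covering number polynomial in $1/\varepsilon$ with exponent $2d$, so Bernstein plus a union bound over an $\varepsilon$-net (with $\varepsilon=1/n$) costs only a $(4d+5)\log n$ tail and simultaneously covers every cell of every partition the algorithm could output, regardless of how the splits depend on the $\tilde{Y}_i$ and the $W_i^{\ell}$. An alternative elementary fix within your framework is to union-bound over possible \emph{cells} rather than possible \emph{trees}: each cell at depth $p$ is determined by a sequence of at most $p$ (direction, side) choices, so there are at most $(2d)^p$ candidate cells, and $p\log(2d)=O(\log n)$ since $p\asymp\log(n\varepsilon^2)$. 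Either repair yields the stated $t^s\cdot 2^p\log n/n$ bound; as written, your argument does not.
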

	
	\begin{lemma}[\textbf{Bounding approximation error}]\label{lem:boundingofapproximationerror}
		Let the population partition-based estimator $\overline{f}$ be defined in \eqref{equ:populationestimator}. 
		For $\pi = \pi^{\text{priv}} \otimes \pi^{\text{pub}} $, let $\pi^{\text{priv}}$ be a histogram partition with $t$ bins and $\pi^{\text{pub}}$ be generated by Algorithm \ref{alg:partition} with depth $p$.  
		Let Assumption \ref{asp:alphaholder} hold. 
		Then there holds
		\begin{align*}
			\mathcal{R}_{L,\mathrm{P}}(\overline{f}) - \mathcal{R}_{L,\mathrm{P}}({f}^*) \lesssim \;  t^{-2\alpha}  +  2^{- 2\alpha p / (d - s)  } .
		\end{align*}
	\end{lemma}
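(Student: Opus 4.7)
The plan is to start from the standard least-squares identity
\begin{align*}
\mathcal{R}_{L,\mathrm{P}}(\overline{f}) - \mathcal{R}_{L,\mathrm{P}}(f^*) = \int_{\mathcal{X}} (\overline{f}(x) - f^*(x))^2 \, d\mathrm{P}_X(x),
\end{align*}
and reduce the problem to bounding the diameter of each cell of $\pi = \pi^{\text{priv}}\otimes\pi^{\text{pub}}$. On a cell $C = A_j \times B_k$, $\overline{f}|_C$ is the $\mathrm{P}_X$-weighted average of $f^*$ over $C$, so by Jensen's inequality
\begin{align*}
(\overline{f}(x) - f^*(x))^2 \leq \sup_{x,x'\in C}(f^*(x) - f^*(x'))^2 \leq c_L^2 \cdot \mathrm{diam}(C)^{2\alpha},
\end{align*}
where the second inequality uses the $\alpha$-H\"older assumption on $f^*$ from Assumption \ref{asp:alphaholder}. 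Integrating gives $\mathcal{R}_{L,\mathrm{P}}(\overline{f}) - \mathcal{R}_{L,\mathrm{P}}(f^*) \lesssim \max_{C\in\pi}\mathrm{diam}(C)^{2\alpha}$, so it suffices to show $\mathrm{diam}(C) \lesssim 1/t + 2^{-p/(d-s)}$ for every cell.

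Next, I would handle the two partitions separately. Since $A_j$ is a box from a histogram partition of $[0,1]^s$ into $t$ bins per axis, its side length is $1/t$, contributing at most $\sqrt{s}/t$ to the diameter. The tree cell $B_k$ lives in $[0,1]^{d-s}$ and is obtained by $p$ recursive splits under the max-edge rule of Algorithm \ref{alg:partition}. The key geometric fact to establish is that after $p$ max-edge splits in a $(d-s)$-dimensional box, every edge has length at most $2^{-\lfloor p/(d-s)\rfloor}$; I would prove this by induction on $p$, using that each split halves the longest edge, so no edge can lag behind the others by more than one split. This bound gives a contribution of at most $\sqrt{d-s}\,2^{-p/(d-s)}$ to $\mathrm{diam}(B_k)$.

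The point requiring some care is that Algorithm \ref{alg:partition} has two ingredients: the restriction $\mathcal{M}_{i-1}^j$ to the longest edges, and the variance-reduction selection over $\mathcal{M}_{i-1}^j$ using only samples with $W_i^\ell=0$. The variance-reduction step is data-dependent and would be hard to control directly, but it is irrelevant here: the key observation is that the split axis is always chosen from $\mathcal{M}_{i-1}^j$, so the geometric max-edge guarantee holds regardless of which variance-reducing candidate is picked and regardless of the heterogeneous $W$ structure. Hence the diameter bound is deterministic, depending only on $p$ and $d-s$.

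Combining these, $\mathrm{diam}(C) \leq \sqrt{s}/t + \sqrt{d-s}\,2^{-p/(d-s)}$, and raising to the $2\alpha$ power (together with $(a+b)^{2\alpha} \lesssim a^{2\alpha}+b^{2\alpha}$ since $2\alpha\leq 2$) yields
\begin{align*}
\mathcal{R}_{L,\mathrm{P}}(\overline{f}) - \mathcal{R}_{L,\mathrm{P}}(f^*) \lesssim t^{-2\alpha} + 2^{-2\alpha p/(d-s)},
\end{align*}
as claimed. The main (minor) obstacle is cleanly formalizing the max-edge inductive claim; everything else is a direct application of Jensen plus H\"older continuity.
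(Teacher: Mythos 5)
Your proposal is correct and follows essentially the same route as the paper: both reduce the approximation error to the H\"older modulus of $f^*$ over each cell and then control $\mathrm{diam}(A\times B)$ by $\sqrt{s}\,t^{-1}+\sqrt{d-s}\,2^{-p/(d-s)}$ via the max-edge property (the paper isolates this as Lemma \ref{lem::treeproperty}, and your observation that the variance-reduction criterion is irrelevant because the split axis is always drawn from the longest edges is exactly the argument used there). Your write-up even handles the exponent slightly more carefully than the paper's final display, which drops a factor of $2$ on $\alpha$ in an intermediate step before stating the correct bound.
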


	\begin{proof}[\textbf{Proof of Theorem  \ref{thm:utility}}]
		The privacy of $\tilde{f}$ follows from Proposition \ref{prop:privacy}. 
		We then focus on the excess risk upper bound. 
		Bringing Proposition \ref{lem:boundingofprivatisederror},  \ref{lem:boundingofsampleerror}, and \ref{lem:boundingofapproximationerror} into the decomposition 
		\eqref{equ:decompositionofexcessrisk}, we have 
		\begin{align}\nonumber
			&	\mathcal{R}_{L,\mathrm{P}}(\tilde{f}) - \mathcal{R}_{L,\mathrm{P}}^*\\
			\lesssim  & \frac{ t^{2s} \cdot 2^{p} \cdot \log n \cdot  \frac{1 }{n}\sum_{i=1}^n 2^{ \sum_{\ell = s + 1}^d  W_i^{\ell} \cdot p / (d - s) } }{ n\varepsilon^2} + \frac{ t^{s} \cdot 2^{p} \cdot \log n }{ n} +  t^{-2\alpha}  +  2^{- 2\alpha p / (d - s)  }  \label{equ:decompositionupperbound1}
		\end{align}
		holds with probability $\mathrm{P}^n\otimes \mathrm{R}^n$ at least $1-8/n^2$. 
		For any $\varepsilon^2 \lesssim t^s  \cdot  \frac{1 }{n}\sum_{i=1}^n 2^{ \sum_{\ell = s + 1}^d  W_i^{\ell} \cdot p / (d - s) } $, the first term dominants the second term.
		We will specify this range of $\varepsilon$ later. 
		Since we take $t \asymp 2^{p/ (d-s)}$, we have $ t^{-2\alpha} \asymp2^{- 2\alpha p / (d - s)  }  $ and $t^{2s} \cdot 2^p \asymp 2^{p (d+ s) / (d-s)}$. 
		Therefore, \eqref{equ:decompositionupperbound1} becomes 
		\begin{align*}
			\mathcal{R}_{L,\mathrm{P}}(\tilde{f}) - \mathcal{R}_{L,\mathrm{P}}^*
			\lesssim   \frac{ 2^{p (d+ s) / (d-s)} \cdot \log n \cdot  \frac{1 }{n}\sum_{i=1}^n 2^{ \sum_{\ell = s + 1}^d  W_i^{\ell} \cdot p / (d - s) } }{ n\varepsilon^2} +   2^{- 2\alpha p / (d - s)  }, 
		\end{align*}
		which recover the equation \eqref{equ:selectionofp}.
		Recalling the way we select $p^*$, the upper bound becomes 
		\begin{align*}
			\mathcal{R}_{L,\mathrm{P}}(\tilde{f}) - \mathcal{R}_{L,\mathrm{P}}^*
			\lesssim  	 \frac{ 2^{p^* (d+ s) / (d-s)} \cdot \log n \cdot 2^{p^* \lambda^*} }{ n\varepsilon^2} +   2^{- 2\alpha p^* / (d - s)  }.
		\end{align*}
		The minimizer is taken at the $p^*$ that match the two terms, which is $2^{p^*} \asymp \left(n\varepsilon^2 / \log n\right)^{(d-s) / (2\alpha + d + s + \lambda^* (d - s))}$. 
		In this case, the upper bound is 
		\begin{align*}
			\mathcal{R}_{L,\mathrm{P}}(\tilde{f}) - \mathcal{R}_{L,\mathrm{P}}^*
			\lesssim  	\left(\frac{\log n}{n\varepsilon^2}\right)^{\frac{2\alpha }{2\alpha + d + s + \lambda^* (d - s)}}. 
		\end{align*}
		The definition of $\lambda^*$ yields that it is less than 1. 
		Thus, 
		\begin{align*}
			t^s \asymp 2^{\frac{s p^{*}}{d - s}}\asymp \left(\frac{\log n}{n\varepsilon^2}\right)^{-\frac{s}{2\alpha + d + s + \lambda^* (d - s)}} \gtrsim \left(\frac{\log n}{n\varepsilon^2}\right)^{-\frac{s}{2\alpha + 2d }}.
		\end{align*}
		As a result, it suffices to require $\left(\frac{\log n}{n\varepsilon^2}\right)^{-\frac{s}{2\alpha + 2d }}\gtrsim \varepsilon^2$, which yields $\varepsilon \lesssim \left(n / \log n\right)^{\frac{s}{\alpha + d - s}}$. 
	\end{proof}

	\subsection{Proofs of Results in Section \ref{app:proofofutility}}\label{sec:proofoferroranalysis}
	
	To prove lemmas in Section \ref{app:proofofutility}, we first present several technical results. 
	
	\begin{lemma}\label{lem:hoeffding}
		Suppose $\zeta_{i}, i= 1,\cdots, n$ are independent random variables such that $a_i \leq \zeta_i\leq b_i$. Then there holds
		\begin{align*}
			\mathbb{P}\left[\left|\frac{1}{n}\sum_{i=1}^n \zeta_{i}  - \mathbb{E}\frac{1}{n}\sum_{i=1}^n \zeta_{i}\right| \geq t \right] \leq 2 e^{ - \frac{2 n^2 t^2 }{\sum_{i=1}^n(b_i-a_i)^2}} 
		\end{align*}
		for any $t>0$. 
	\end{lemma}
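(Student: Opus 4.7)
The plan is to follow the classical Chernoff bounding argument, which reduces a deviation probability to a product of moment generating functions and then optimizes over the tilt parameter. First I would handle the one-sided tail $\mathbb{P}[\tfrac{1}{n}\sum_i(\zeta_i - \mathbb{E}\zeta_i) \geq t]$ and obtain the two-sided bound by symmetry (applying the same argument to $-\zeta_i$ and adding the two probabilities, which explains the factor of $2$).

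For the one-sided tail, I would introduce $\lambda > 0$ and use the exponential Markov inequality:
\begin{align*}
\mathbb{P}\Bigl[\sum_i (\zeta_i - \mathbb{E}\zeta_i) \geq nt\Bigr] \leq e^{-\lambda n t}\, \mathbb{E}\exp\Bigl(\lambda \sum_i (\zeta_i - \mathbb{E}\zeta_i)\Bigr).
\end{align*}
Independence factorizes the expectation into $\prod_i \mathbb{E}[e^{\lambda(\zeta_i - \mathbb{E}\zeta_i)}]$. The next step is Hoeffding's lemma: for any random variable $X$ bounded in $[a,b]$ with $\mathbb{E}X=0$, one has $\mathbb{E}[e^{\lambda X}] \leq \exp(\lambda^2 (b-a)^2/8)$. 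This is established via convexity of $e^{\lambda x}$ on $[a,b]$, writing $e^{\lambda x} \leq \tfrac{b-x}{b-a}e^{\lambda a} + \tfrac{x-a}{b-a}e^{\lambda b}$, taking expectations to remove the linear part, and then bounding the logarithm of the resulting function by a second-order Taylor expansion.

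Putting these pieces together yields
\begin{align*}
\mathbb{P}\Bigl[\sum_i (\zeta_i - \mathbb{E}\zeta_i) \geq nt\Bigr] \leq \exp\Bigl(-\lambda n t + \tfrac{\lambda^2}{8}\sum_i (b_i - a_i)^2\Bigr).
\end{align*}
Optimizing the right-hand side over $\lambda > 0$ by setting the derivative to zero gives $\lambda^\star = 4nt / \sum_i(b_i-a_i)^2$, and substituting back produces the exponent $-2n^2 t^2 / \sum_i (b_i-a_i)^2$. Combining with the analogous lower-tail bound gives the factor $2$ in the stated inequality.

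There is no real obstacle here; the only technical content is Hoeffding's lemma, and this result is entirely standard (see, e.g., Boucheron--Lugosi--Massart). Since it is invoked only as an auxiliary probabilistic tool in later proofs of Lemmas~\ref{lem:boundingofprivatisederror} and~\ref{lem:boundingofsampleerror}, it would be reasonable for the authors to simply cite it rather than reproduce the derivation in full.
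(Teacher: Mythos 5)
Your derivation is the standard Chernoff--Hoeffding argument and is correct (the optimization $\lambda^\star = 4nt/\sum_i(b_i-a_i)^2$ indeed yields the exponent $-2n^2t^2/\sum_i(b_i-a_i)^2$). The paper does exactly what you anticipate in your closing remark: it does not reproduce the derivation but simply cites the textbook statement (Example 2.4 and Proposition 2.5 of Wainwright, 2019), so your proof is the standard argument behind the paper's citation.
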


	\begin{proof}[\textbf{Proof of Lemma \ref{lem:hoeffding}}]
		The conclusion follows from Example 2.4 and Proposition 2.5 in \citep{wainwright2019high}.   
	\end{proof}

	\begin{lemma}\label{lem:laplacebound}
		Suppose $\xi_{i}, i = 1,\cdots, n$ are independent sub-exponential random variables with parameters $(\nu_i, \beta_i)$ \cite[Definition 2.9]{wainwright2019high}. Then there holds
		\begin{align*}
			\mathbb{P}\left[\left|\frac{1}{n}\sum_{i=1}^n \xi_{i} - \mathbb{E} \frac{1}{n}\sum_{i=1}^n \xi_{i} \right| \geq t  \right] \leq 2 e^{ - \frac{n t^2 }{2 \nu_*^2 } }. 
		\end{align*}
		for any $0< t < \nu_*^2 / (n\beta_*)$, where $\nu_* = \sqrt{\sum_{i=1}^n\nu_i^2}$ and $\beta_* = \max_{i=1,\cdots, n}\beta_i$.  Moreover, a standard Laplace random variable is sub-exponential with parameters $(\sqrt{2},1)$, and $a \xi_i$ has $(a\nu_i, a\beta_i)$ for any positive $a$.
	\end{lemma}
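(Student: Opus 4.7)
The plan is to prove this as a standard Bernstein-type concentration inequality for sums of independent sub-exponential random variables, exactly in the style of Proposition 2.9 of \citet{wainwright2019high}. The starting point is the defining MGF bound for a sub-exponential random variable with parameters $(\nu_i, \beta_i)$: for each $i$,
\begin{align*}
\mathbb{E}\bigl[\exp\bigl(\lambda(\xi_i - \mathbb{E}\xi_i)\bigr)\bigr] \leq \exp\bigl(\nu_i^2 \lambda^2 / 2\bigr) \quad \text{for all } |\lambda| \leq 1/\beta_i.
\end{align*}
First I would use independence to tensorize: the MGF of $S_n - \mathbb{E}S_n := \sum_{i=1}^n (\xi_i - \mathbb{E}\xi_i)$ factorizes, so that $\mathbb{E}[\exp(\lambda(S_n - \mathbb{E}S_n))] \leq \exp(\lambda^2 \nu_*^2 / 2)$ whenever $|\lambda| \leq 1/\beta_*$ (using that $\beta_* = \max_i \beta_i$ is the binding constraint and $\nu_*^2 = \sum_i \nu_i^2$).

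Next I would apply a Chernoff bound to obtain $\mathbb{P}[S_n - \mathbb{E}S_n \geq nt] \leq \exp(-\lambda n t + \lambda^2 \nu_*^2 / 2)$ and optimize the quadratic in $\lambda$ over the permissible range $[0, 1/\beta_*]$. The unconstrained minimizer is $\lambda^\star = nt / \nu_*^2$, which satisfies the constraint precisely when $t \leq \nu_*^2 / (n \beta_*)$---matching the hypothesis of the lemma. Plugging back in yields the stated exponential tail bound; the factor of $2$ comes from combining the upper- and lower-tail events via the symmetric argument applied to $-\xi_i$.

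For the Laplace postscript, I would compute the MGF of a standard Laplace directly as $\mathbb{E}[e^{\lambda \xi}] = 1/(1-\lambda^2)$ for $|\lambda| < 1$, then invoke the elementary inequality $-\log(1-u) \leq 2u$ for $u \in [0, 1/2]$ (which holds because $u^2 \leq u$ there) applied with $u = \lambda^2$. This gives $\log \mathbb{E}[e^{\lambda \xi}] \leq 2 \lambda^2 = (\sqrt{2})^2 \lambda^2 / 1 \cdot 1$ on $|\lambda| \leq 1/\sqrt{2}$, so after matching constants one recovers sub-exponential parameters of the claimed form. The scaling statement $a\xi_i \sim (a\nu_i, a\beta_i)$ follows at once from the change of variables $\lambda \mapsto a\lambda$ inside the sub-exponential MGF bound: $\mathbb{E}[\exp(\lambda \cdot a(\xi_i - \mathbb{E}\xi_i))] = \mathbb{E}[\exp((a\lambda)(\xi_i - \mathbb{E}\xi_i))] \leq \exp(\nu_i^2 (a\lambda)^2/2)$, valid for $|a\lambda| \leq 1/\beta_i$, i.e.\ $|\lambda| \leq 1/(a\beta_i)$.

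There is no real obstacle: this is a textbook result, and the only bookkeeping is to (i) keep track of which $\beta_i$ controls the feasible range after tensorization (it is $\beta_*$, not some average) and (ii) verify that the Chernoff optimization stays inside the feasible region, which is exactly what forces the hypothesis $t < \nu_*^2 / (n \beta_*)$. No auxiliary results from earlier in the paper are needed.
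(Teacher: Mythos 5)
Your argument is correct in substance but takes a genuinely different route from the paper: the paper's entire proof is a one-line citation to inequality (2.18) of \citet{wainwright2019high}, whereas you rederive that inequality from scratch (MGF bound, tensorization, constrained Chernoff optimization). Your derivation of the tail bound is sound, and your observation that the feasibility of the unconstrained minimizer $\lambda^\star = nt/\nu_*^2$ is exactly what forces $t \leq \nu_*^2/(n\beta_*)$ is the right way to see where the hypothesis comes from. One bookkeeping point: carrying out your optimization with $s = nt$ gives the exponent $-n^2t^2/(2\nu_*^2)$, which is \emph{stronger} than the stated $-nt^2/(2\nu_*^2)$; the lemma as written is a weakened form of what you (and Wainwright) actually prove, so nothing breaks, but you should say explicitly that you are discarding a factor of $n$ rather than claiming the stated bound appears directly.

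The one place your sketch does not deliver what it promises is the Laplace postscript. Your computation $\log\mathbb{E}[e^{\lambda\xi}] = -\log(1-\lambda^2) \leq 2\lambda^2$ for $|\lambda| \leq 1/\sqrt{2}$ is correct, but writing the right-hand side in the canonical form $\nu^2\lambda^2/2$ forces $\nu = 2$ and $\beta = \sqrt{2}$, not $(\sqrt{2},1)$; the phrase ``after matching constants one recovers the claimed form'' papers over a real mismatch. In fact the parameters $(\sqrt{2},1)$ cannot hold under the MGF definition you are using, since $1/(1-\lambda^2) \leq e^{\lambda^2}$ fails as $\lambda \uparrow 1$ (indeed $-\log(1-u) \geq u$ always). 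This is a defect of the lemma's statement itself, inherited by your proof; it is harmless downstream because the lemma is only invoked up to unspecified constants, but you should either prove the correct parameters $(2,\sqrt{2})$ or flag that the stated pair is off by constants. The scaling claim $a\xi_i \sim (a\nu_i, a\beta_i)$ is argued correctly.
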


	\begin{proof}[\textbf{Proof of Lemma \ref{lem:laplacebound}}]
		The conclusion follows from (2.18) in \citep{wainwright2019high}.  
	\end{proof}

	\begin{lemma}\label{VCIndex}
		Let $\tilde{\mathcal{A}}$ be the collection of all cells $\times_{i=1}^d [a_i,b_i]$ in $\mathbb{R}^d$. The VC index of $\tilde{\mathcal{A}}$ equals $2d+1$. Moreover, for all $0<\varepsilon<1$, there exists a universal constant $C$ such that
		\begin{align*}
			\mathcal{N}(\eins_{\tilde{\mathcal{A}}}, \|\cdot\|_{L_1(Q)}, \varepsilon)\leq C(2d+1)(4e)^{2d+1}(1/\varepsilon)^{2d}.
		\end{align*}
	\end{lemma}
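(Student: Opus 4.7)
The plan is to split the statement into two independent parts: first pin down the VC index of the family of axis-aligned cells in $\mathbb{R}^d$ by a direct shattering argument, and then plug this combinatorial number into a standard VC-to-covering-number inequality.

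For the VC index, I would first exhibit a shattered set of size $2d$ in order to show the VC dimension is at least $2d$. Take the $2d$ points $\{\pm e_i\}_{i=1}^d$ in $\mathbb{R}^d$, and note that each $\pm e_k$ has $0$ in every coordinate other than $k$; consequently, a cell $\prod_{i=1}^d[a_i,b_i]$ with $a_i\le 0\le b_i$ contains $+e_k$ iff $b_k\ge 1$ and contains $-e_k$ iff $a_k\le -1$. Choosing $b_k\in\{1/2,1\}$ and $a_k\in\{-1,-1/2\}$ according to membership in the target subset realises all $2^{2d}$ subsets. To rule out shattering of any $2d+1$ points I would use an extreme-point pigeonhole: each of the $d$ coordinates contributes at most two coordinate-extremal points (one maximiser and one minimiser), so among any $2d+1$ points there must exist some $x^\star$ that is not coordinate-wise extremal in any direction. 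The subset consisting of the remaining $2d$ points cannot be picked out, because any cell containing this subset must include $\bigl[\min_i x^i,\max_i x^i\bigr]$ in each coordinate $i$, and by construction these intervals all contain the corresponding coordinate of $x^\star$, hence the cell also contains $x^\star$. This pins the VC dimension at $2d$ exactly, so the VC index is $2d+1$.

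For the covering bound I would then invoke a classical VC-to-$L_1(Q)$-entropy inequality (for instance Theorem~2.6.7 of van der Vaart and Wellner, or the corresponding statement in Kosorok), which guarantees that for any class of indicator functions with VC index $V$, for every probability measure $Q$ and every $\varepsilon\in(0,1)$,
\[
\mathcal{N}\bigl(\eins_{\tilde{\mathcal{A}}},\|\cdot\|_{L_1(Q)},\varepsilon\bigr)\;\le\; C\,V\,(4e)^{V}\left(\tfrac{1}{\varepsilon}\right)^{V-1}
\]
with a universal constant $C$. Substituting the just-established value $V=2d+1$ reproduces exactly the claimed bound $C(2d+1)(4e)^{2d+1}(1/\varepsilon)^{2d}$.

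There is no real technical obstacle here, as both ingredients are entirely classical; the only delicate point is a bookkeeping one, namely to keep the two prevailing conventions aligned. The paper uses ``VC index'' for (VC dimension)$\,+\,1$, so the shattering construction must deliver a shattered set of size $2d$ (not $2d+1$), and the quoted covering-number lemma must be the variant whose $\varepsilon$-exponent is $V-1$ (not $V$), so that the final exponent works out to $2d$ and matches the statement.
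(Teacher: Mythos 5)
Your proof is correct and ends up in the same place as the paper's, which simply cites Example 2.6.1 of van der Vaart and Wellner for the VC index and Theorem 9.2 of Kosorok for the $L_1(Q)$ covering bound; your second half invokes exactly that classical theorem with $V=2d+1$, and you are right that the $(1/\varepsilon)^{V-1}$ exponent is the one that matches the stated $(1/\varepsilon)^{2d}$. The only difference is that you prove the VC index claim directly (the $\{\pm e_i\}$ shattering construction and the extreme-point pigeonhole showing no $2d+1$ points are shattered are both sound, modulo the cosmetic slip that $[\min_i x^i,\max_i x^i]$ should read the min and max of the $i$-th coordinates over the remaining points), where the paper delegates this to the cited example.
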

	
	\begin{proof}[Proof of Lemma \ref{VCIndex}]
		The first result of the VC index follows from Example 2.6.1 in \citep{vandervaart1996weak}. The second result of covering number follows directly from Theorem 9.2 in \citep{Kosorok2008introduction}.
	\end{proof}

	\begin{lemma}\label{lem:cellbound}
		For $\pi = \pi^{\text{priv}} \otimes \pi^{\text{pub}} $, let $\pi^{\text{priv}}$ be a histogram partition with $t$ bins and $\pi^{\text{pub}}$ be generated by Algorithm \ref{alg:partition} with depth $p$.  
		Suppose Assumption \ref{asp:boundedmarginal} holds. 
		Then for any $\{(X_1,Y_1),\cdots, (X_n,Y_n)\}$ drawn i.i.d. from $\mathrm{P}$ and any $A \in \pi$, there holds
		\begin{align*}
			\left|\frac{1}{n}\sum_{i=1}^{n}  \mathbf{1}\{X_i \in A  \} - \int_{A}d\mathrm{P}_X(x') \right| \leq \sqrt{\frac{\overline{c}\cdot t^{-s} \cdot  2^{1-p}(4d+5)\log n}{n}} +\frac{2(4d+5)\log n}{3n}+\frac{4}{n}
		\end{align*}
		with probability $\mathrm{P}^n$ at least $1-1/n^2$.
		Here we use $A$ instead of $A\times B$ for notation simplicity. 
	\end{lemma}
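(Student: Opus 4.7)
The plan is to treat the cell $A$ (shorthand for some $A \times B \in \pi^{\text{priv}} \otimes \pi^{\text{pub}}$) as a member of the VC class of all axis-aligned boxes in $[0,1]^d$, which has VC index $2d+1$ by Lemma \ref{VCIndex}, and to apply a uniform Bernstein-type concentration bound over this class. Pure Hoeffding is too weak because the target rate depends on the volume $t^{-s} \cdot 2^{-p}$ through the variance, not just the range of the indicator, so we must exploit $\mathrm{Var}(\eins_A(X)) \leq \mathrm{P}_X(A) \leq \overline{c} \cdot t^{-s} \cdot 2^{-p}$, where the volume bound comes from $\pi^{\text{priv}}$ using bins of volume $t^{-s}$ and the max-edge tree of depth $p$ halving the cell volume at every split regardless of the chosen coordinate.

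First, I would record, for any fixed box $A_0$, the Bernstein tail bound on $\tfrac{1}{n}\sum_i \eins_{A_0}(X_i) - \mathrm{P}_X(A_0)$, which gives deviation of order $\sqrt{\mathrm{P}_X(A_0)\log n/n} + \log n/n$ with failure probability $\leq 1/n^2$. Plugging in the volume upper bound from Assumption \ref{asp:boundedmarginal} produces the two leading terms of the claim up to constants. Second, because $\pi^{\text{pub}}$ is data-dependent (it is the output of Algorithm \ref{alg:partition} fed with $\tilde Y_i$ and the public covariates), this single-set argument does not apply to the realized cell. I would therefore invoke Lemma \ref{VCIndex} with radius $\varepsilon = 1/n$ to obtain an $L_1(\mathbb{Q}_n)$-cover of the box class of cardinality $O(n^{2d})$, take a union bound of the fixed-set Bernstein estimate over this cover (producing the $(4d+5)\log n$ factor via $\log$ of the covering number together with Bernstein constants), and absorb the $L_1$-approximation gap between the realized box and its nearest cover element into the residual $4/n$ term.

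The main obstacle is precisely the data-dependence of the partition: the cells of $\pi^{\text{pub}}$ are not independent of $X_1,\dots,X_n$, so one cannot simply condition on the cell and apply a Bernoulli concentration. The remedy is standard in VC theory, but care is needed on two fronts. (i) The covering must be in an empirical $L_1$ sense so that switching from the realized box to a cover element perturbs $\tfrac{1}{n}\sum_i \eins_A(X_i)$ by at most the covering radius; choosing that radius to be $1/n$ balances the $O(d \log n / n)$ union-bound cost against the additive approximation error. (ii) The variance bound $\mathrm{P}_X(A) \leq \overline{c}\, t^{-s} 2^{-p}$ must hold for \emph{every} box the cover might produce up to the stated factor of two in $2^{1-p}$; this is where the factor $2^{1-p}$ (rather than $2^{-p}$) appears, providing slack for the perturbation from the cover. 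With these ingredients combined, the stated inequality follows with probability at least $1 - 1/n^2$.
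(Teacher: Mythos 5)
Your proposal matches the paper's proof essentially step for step: a fixed-box Bernstein bound exploiting the cell-volume variance bound $\mathrm{P}_X(A)\le \overline{c}\,t^{-s}2^{-p}$, union-bounded over a $1/n$-net of the axis-aligned box class supplied by Lemma \ref{VCIndex} (yielding the $(4d+5)\log n$ factor with $\tau=2\log n$), with the net-approximation error absorbed into the $4/n$ term. The only cosmetic difference is that the paper builds the cover in $L_1$ of the mixture $(\mathrm{D}_X+\mathrm{P}_X)/2$ so that \emph{both} the empirical and the population sides of the deviation are perturbed by at most $2\varepsilon$ each — which is exactly the $4\varepsilon=4/n$ budget you allocate — whereas you state only the empirical-$L_1$ perturbation explicitly.
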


	\begin{proof}[\textbf{Proof of Lemma \ref{lem:cellbound}}]
		In the subsequent proof, we define the empirical measure $\mathrm{D} := \frac{1}{n} \sum_{i=1}^n \delta_{(X_i,Y_i)}$ given samples $ \{(X_1, Y_1), \cdots, (X_n,Y_n)\}$, where $\delta$ is the Dirac function. 
		Let 
		$\tilde{\mathcal{A}}$ be the collection of all cells $\times_{i=1}^d [a_i,b_i]$ in $\mathbb{R}^d$.
		Applying Lemma \ref{VCIndex} with $Q:=(\mathrm{D}_X+\mathrm{P}_X)/2$, there exists an $\varepsilon$-net $\{\tilde{A}_k\}_{k=1}^K\subset \tilde{\mathcal{A}}$ with 
		\begin{align}\label{equ::Kcover}
			K\leq C(2d+1)(4e)^{2d+1}(1/\varepsilon)^{2d}
		\end{align}
		such that
		for any $A\in \pi$, there exist some $k\in \{1,\ldots,K\}$ such that
		\begin{align*}
			\|\eins\{x\in A\}-\eins\{x\in \tilde{A}_k\}\|_{L_1((\mathrm{D}_X+\mathrm{P}_X)/2)}\leq \varepsilon,
		\end{align*}
		Since 
		\begin{align*}
			&\|\eins\{x\in A\}-\eins\{x\in\tilde{A}_k\}\|_{L_1((\mathrm{D}_X+\mathrm{P}_X)/2)}\\
			= & 1/2\cdot \|\eins\{x\in A\}-\eins\{x\in \tilde{A}_k\}\|_{L_1(\mathrm{D}_X)}+1/2\cdot \|\eins\{x\in A\}-\eins\{x\in \tilde{A}_k\}\|_{L_1(\mathrm{P}_X)},
		\end{align*}
		we get
		\begin{align}\label{equ::einsapj}
			\|\eins\{x\in A\}-\eins\{x\in \tilde{A}_k\}\|_{L_1(\mathrm{D}_X)}\leq 2\varepsilon, \quad  \|\eins\{x\in A\}-\eins\{x\in \tilde{A}_k\}\|_{L_1(\mathrm{P}_X)}\leq 2\varepsilon.
		\end{align}
		Consequently, by the definition of the covering number and the triangle inequality, for any $A\in\pi $, there holds
		\begin{align*}
			&\biggl|\frac{1}{n}\sum_{i=1}^n\eins\{x\in A\}(X_i)- \int_{\tilde{A}_p^j}d\mathrm{P}_X(x') \biggr|\\
			\leq &
			\biggl|\frac{1}{n} \sum_{i=1}^n \eins\{x\in \tilde{A}_k\}(X_i) - \int_{\tilde{A}_k}d\mathrm{P}_X(x')\biggr|+\|\eins\{x\in A\}-\eins\{x\in \tilde{A}_k\}\|_{L_1(\mathrm{D}_X)}\\ &  + \|\eins\{x\in A\}-\eins\{x\in \tilde{A}_k\}\|_{L_1(\mathrm{P}_X)}
			\leq \biggl|\frac{1}{n} \sum_{i=1}^n \eins\{x\in \tilde{A}_k\}(X_i)-\int_{\tilde{A}_k}d\mathrm{P}_X(x')\biggr| + 4\varepsilon.
		\end{align*}
		Therefore, we get
		\begin{align}\label{equ::Isupjip}
			\sup_{j\in \mathcal{I}} \,\biggl|\frac{1}{n}\sum_{i=1}^n \eins\{x\in A\}(X_i)- \int_{\tilde{A}_p^j}d\mathrm{P}_X(x') \biggr|
			\leq \sup_{1\leq k\leq K} \,\biggl|\frac{1}{n} \sum_{i=1}^n \eins\{x\in\tilde{A}_k\}(X_i)-\int_{\tilde{A}_k}d\mathrm{P}_X(x')\biggr| + 4\varepsilon.
		\end{align}
		For any fixed $1\leq k\leq K$, let the random variable $\xi_i$ be defined by $\xi_i:=\eins\{X_i\in\tilde{A}_k\}-\int_{\tilde{A}_k}d\mathrm{P}_X(x')$.
		Then we have $\mathbb{E}_{\mathrm{P}_X}\xi_i=0$, $\|\xi\|_\infty\leq 1$, and $\mathbb{E}_{\mathrm{P}_X}\xi_i^2\leq \int_{\tilde{A}_k}d\mathrm{P}_X(x')$. 
		According to Assumption \ref{asp:boundedmarginal}, there holds $\mathbb{E}_{\mathrm{P}_X}\xi_i^2\leq \overline{c}\cdot t^{-s} \cdot 2^{-p} $. 
		Applying Bernstein's inequality, we obtain
		\begin{align*}
			\biggl|\frac{1}{n} \sum_{i=1}^n \eins\{X_i\in \tilde{A}_k\}-\int_{\tilde{A}_k}d\mathrm{P}_X(x')\biggr|\leq \sqrt{\frac{\overline{c}\cdot t^{-s} \cdot 2^{1-p} \cdot \tau}{n}} +\frac{2\tau\log n}{3n}
		\end{align*}
		with probability $\mathrm{P}^n$ at least $1-2e^{-\tau}$. 
		Then the union bound together with the covering number estimate \eqref{equ::Kcover} implies that
		\begin{align*}
			\sup_{1\leq k\leq K}\, \biggl|\frac{1}{n} \sum_{i=1}^n \eins\{X_i\in\tilde{A}_k\}-\int_{\tilde{A}_k}d\mathrm{P}_X(x')\biggr|\leq\sqrt{\frac{\overline{c}\cdot t^{-s} \cdot  2^{1-p}(\tau+\log (2K))}{n}} +\frac{2(\tau+\log (2K))\log n}{3n}
		\end{align*}
		with probability $\mathrm{P}^n$ at least $1-e^{-\tau}$. Let $\tau=2\log n$ and $\varepsilon=1/n$. Then for any $n> N_1:=(2C)\wedge (2d+1)\wedge (4e)$, we have $\tau+\log (2K)=2\log n+ \log (2C)+\log (2d+1)+(2d+1)\log (4e)+ 2d\log n\leq (4d+5)\log n$. Therefore, we have
		\begin{align}\label{equ::i1upperk}
			\sup_{1\leq k\leq K}\, \biggl|\frac{1}{n} \sum_{i=1}^n \eins\{X_i \in \tilde{A}_k\}-\int_{\tilde{A}_k}d\mathrm{P}_X(x')\biggr|\leq\sqrt{\frac{\overline{c}\cdot t^{-s} \cdot  2^{1-p}(4d+5)\log n}{n}} +\frac{2(4d+5)\log n}{3n}
		\end{align}
		with probability $\mathrm{P}^n$ at least $1-1/n^2$. This together with \eqref{equ::Isupjip} yields that
		\begin{align*}
			\sup_{j\in \mathcal{I}} \,\biggl|\frac{1}{n}\sum_{i=1}^n \eins\{x\in A\}- \int_{\tilde{A}_p^j}d\mathrm{P}_X(x') \biggr|\leq \sqrt{\frac{\overline{c}\cdot t^{-s} \cdot  2^{1-p}(4d+5)\log n}{n}} +\frac{2(4d+5)\log n}{3n}+\frac{4}{n}.
		\end{align*}
	\end{proof}
	
	\begin{lemma}\label{lem:cellboundY}
		For $\pi = \pi^{\text{priv}} \otimes \pi^{\text{pub}} $, let $\pi^{\text{priv}}$ be a histogram partition with $t$ bins and $\pi^{\text{pub}}$ be generated by Algorithm \ref{alg:partition} with depth $p$.  
		Suppose Assumption \ref{asp:boundedmarginal} holds. 
		Then for any $\{(X_1,Y_1),\cdots, (X_n,Y_n)\}$ drawn i.i.d. from $\mathrm{P}$ and any $A \in \pi$, there holds
		\begin{align*}
			& \left|\frac{1}{n}\sum_{i=1}^{n}  Y_i \mathbf{1}\{X_i \in A(x)\} - \int_{A(x)}f^*(x')d\mathrm{P}_X(x') \right|\\ \leq &  M\sqrt{\frac{\overline{c}\cdot t^{-s}\cdot  2^{1-p}(4d+5)\log n}{n}} +\frac{2M(4d+5)\log n}{3n}+\frac{4M}{n}
		\end{align*}
		with probability $\mathrm{P}^n$ at least $1-1/n^2$.
		Here we use $A$ instead of $A\times B$ for notation simplicity. 
	\end{lemma}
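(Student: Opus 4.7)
The plan is to follow the structure of the proof of Lemma \ref{lem:cellbound} essentially verbatim, with the only genuine change being the replacement of indicator random variables by label-weighted indicators $Y_i \eins\{X_i \in A\}$. Since $|Y_i| \leq M$ almost surely, every Bernstein/variance bound will pick up a multiplicative factor of $M$ and a variance factor of $M^2$, which reproduces the prefactors of $M$ appearing in the claimed bound.

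First, I would invoke Lemma \ref{VCIndex} (the VC covering estimate for axis-aligned cells) with $Q = (\mathrm{D}_X + \mathrm{P}_X)/2$ to obtain an $\varepsilon$-net $\{\tilde A_k\}_{k=1}^K$ of the class of cells with $K \leq C(2d+1)(4e)^{2d+1}(1/\varepsilon)^{2d}$, such that any $A \in \pi$ can be matched with some $\tilde A_k$ with $\|\eins_A - \eins_{\tilde A_k}\|_{L_1(\mathrm{D}_X)} \leq 2\varepsilon$ and $\|\eins_A - \eins_{\tilde A_k}\|_{L_1(\mathrm{P}_X)} \leq 2\varepsilon$. Using $|Y_i| \leq M$ and $|f^*| \leq M$, the triangle inequality then gives
\begin{align*}
\left|\tfrac{1}{n}\sum_i Y_i \eins_A(X_i) - \int_A f^* \, d\mathrm{P}_X\right| \leq \left|\tfrac{1}{n}\sum_i Y_i \eins_{\tilde A_k}(X_i) - \int_{\tilde A_k} f^*\, d\mathrm{P}_X\right| + 4M\varepsilon,
\end{align*}
so it remains to control the right-hand side uniformly over the finite net.

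For each fixed $\tilde A_k$, define $\zeta_i := Y_i \eins\{X_i \in \tilde A_k\} - \int_{\tilde A_k} f^*\, d\mathrm{P}_X$. Then $\mathbb{E}\zeta_i = 0$, $|\zeta_i| \leq 2M$, and by Assumption \ref{asp:alphaholder} the marginal is bounded above by $\overline c$, whence $\mathbb{E}\zeta_i^2 \leq M^2 \mathrm{P}_X(\tilde A_k) \leq M^2 \overline c \cdot t^{-s} \cdot 2^{-p}$ (using that any cell of $\pi^{\text{priv}} \otimes \pi^{\text{pub}}$ has volume $\leq t^{-s} \cdot 2^{-p}$). Bernstein's inequality then yields
\begin{align*}
\left|\tfrac{1}{n}\sum_i \zeta_i\right| \leq M\sqrt{\frac{2 \overline c \cdot t^{-s} \cdot 2^{-p} \tau}{n}} + \frac{2 M \tau}{3 n}
\end{align*}
with probability at least $1 - 2 e^{-\tau}$.

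Next I take a union bound over the $K$ net elements with $\tau = 2\log n$ and $\varepsilon = 1/n$; as in the proof of Lemma \ref{lem:cellbound}, for $n$ large enough the combined deviation budget $\tau + \log(2K)$ is dominated by $(4d+5)\log n$. This yields the claimed uniform bound with the stated $M$-multiplied constants, holding with probability at least $1 - 1/n^2$. The main (but entirely routine) obstacle is simply to verify that the covering argument still goes through when the indicator is multiplied by a bounded random variable $Y_i$; this is immediate because $|Y| \leq M$ implies $\|Y\eins_A - Y\eins_{\tilde A_k}\|_{L_1(\mathrm{D})} \leq M \|\eins_A - \eins_{\tilde A_k}\|_{L_1(\mathrm{D}_X)}$, and likewise for the population counterpart via $|f^*| \leq M$. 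No new ideas beyond Lemma \ref{lem:cellbound} are needed.
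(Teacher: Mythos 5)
Your proposal is correct and follows essentially the same route as the paper's own proof: the VC covering net from Lemma \ref{VCIndex}, the triangle inequality picking up a $4M\varepsilon$ term via $|Y_i|\leq M$ and $|f^*|\leq M$, Bernstein's inequality on the centered variables $Y_i\eins\{X_i\in\tilde A_k\}-\int_{\tilde A_k}f^*\,d\mathrm{P}_X$ with variance proxy $M^2\overline{c}\,t^{-s}2^{-p}$, and a union bound with $\tau=2\log n$, $\varepsilon=1/n$. No substantive differences.
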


	\begin{proof}[\textbf{Proof of Lemma \ref{lem:cellboundY}}]
		Let 
		$\tilde{\mathcal{A}}$ be the collection of all cells $\times_{i=1}^d [a_i,b_i]$ in $\mathbb{R}^d$.
		Then there exists an $\varepsilon$-net $\{\tilde{A}_k\}_{k=1}^K\subset \tilde{\mathcal{A}}$ with $K$ bounded by \eqref{equ::Kcover} such that 
		for any $j\in \mathcal{I}$,
		\eqref{equ::einsapj} holds
		for some $k\in \{1,\ldots,K\}$. 
		Consequently, by the definition of the covering number and the triangle inequality, for any $A\in \pi$, there holds
		\begin{align}
			&\biggl|\sum_{i=1}^n\eins\{X_i\in A\}Y_i -\int_{A}f^*(x') d\mathrm{P}_X(x')\biggr|\nonumber\\
			&\leq \biggl|\sum_{i=1}^n \eins\{X_i\in\tilde{A}_k\}Y_i -\int_{\tilde{A}_k} f^*(x') d\mathrm{P}_X(x')\biggr| \nonumber\\
			& +
			\int_{\mathbb{R}^d} \bigl|\eins\{x'\in A\}-\eins\{x'\in\tilde{A}_k\}\bigr| \bigl|f^*(x')\bigr| d\mathrm{P}_X(x') +\sum_{i=1}^n \bigl|\eins\{X_i\in\tilde{A}_k\} - \eins\{X_i\in A\}\bigr| \bigl|Y_i\bigr|\nonumber\\
			& \leq \biggl|\sum_{i=1}^n \eins\{X_i\in \tilde{A}_k\}Y_i -\int_{\tilde{A}_k} f^*(x') d\mathrm{P}_X(x')\biggr|\nonumber\\
			&+\max_{1\leq i\leq n}|Y_i| \cdot \|\eins\{x\in A\}-\eins\{x\in \tilde{A}_k\}\|_{L_1(\mathrm{D}_X)} + M\cdot \|\eins\{x\in A\}-\eins\{x\in\tilde{A}_k\}\|_{L_1(\mathrm{P}_X)}\nonumber\\
			&\leq \biggl|\sum_{i=1}^n \eins_{\tilde{A}_k}(X_i)Y_i -\int_{\tilde{A}_k} f^*(x') d\mathrm{P}_X(x')\biggr|+4M\varepsilon.\label{equ::Isupjip1}
		\end{align}
		where the last inequality follow from the condition $\mathcal{Y}\subset [-M,M]$.
		
		For any fixed $1\leq k\leq K$, let the random variable $\tilde{\xi}_i$ be defined by $\tilde{\xi}_i:=\eins\{X_i\in \tilde{A}_k\}Y_i-\int_{\tilde{A}_k} f^*(x')\, d\mathrm{P}_X(x')$.
		Then we have $\mathbb{E}_{\mathrm{P}}\tilde{\xi}_i=0$, $\|\xi\|_\infty\leq M + 1$, and $\mathbb{E}_{\mathrm{P}}\tilde{\xi}_i^2\leq M^2 \int_{\tilde{A}_k}d\mathrm{P}(x')$. According to Assumption \ref{asp:boundedmarginal}, there holds 
		$\mathbb{E}_{\mathrm{P}}\tilde{\xi}_i^2\leq M^2\cdot \overline{c}\cdot t^{-s}\cdot 2^{-p}$.
		Applying Bernstein's inequality, we obtain
		\begin{align*}
			\biggl|\sum_{i=1}^n \eins\{X_i\in\tilde{A}_k\}Y_i -\int_{\tilde{A}_k} f^*(x') d\mathrm{P}_X(x')\biggr| \leq \sqrt{\frac{M^2\cdot \overline{c}\cdot t^{-s} \cdot 2^{1-p}\cdot \tau}{n}} +\frac{2M\tau\log n}{3n}
		\end{align*}
		with probability $\mathrm{P}^n$ at least $1-2e^{-\tau}$. 
		Similar to the proof of Lemma \ref{lem:cellbound}, one can show that for any $n\geq N_1$, there holds
		\begin{align*}
			\sup_{1\leq k\leq K} \biggl|\sum_{i=1}^n \eins\{X_i \in \tilde{A}_k\}Y_i -\int_{\tilde{A}_k} f^*(x') d\mathrm{P}_X(x')\biggr| \leq M\sqrt{\frac{\overline{c}\cdot t^{-s} \cdot 2^{1-p}\cdot \tau}{n}} +\frac{2M\tau\log n}{3n}
		\end{align*}
		with probability $\mathrm{P}^n$ at least $1-1/n^2$. This together with \eqref{equ::Isupjip1} yields that
		\begin{align}\label{equ::i2bound}
			&\biggl|\sum_{i=1}^n\eins\{X_i\in A\}Y_i -\int_{A_p^j}f^*(x') d\mathrm{P}_X(x')\biggr|\\\leq& M\sqrt{\frac{\overline{c}\cdot t^{-s} \cdot 2^{1-p}(4d+5)\log n}{n}} +\frac{2M(4d+5)\log n}{3n}+\frac{4M}{n}.\nonumber
		\end{align}
	\end{proof}

	\begin{lemma}\label{lem::treeproperty}
		For $\pi = \pi^{\text{priv}} \otimes \pi^{\text{pub}} $, let $\pi^{\text{priv}}$ be a histogram partition with $t$ bins and $\pi^{\text{pub}}$ be generated by Algorithm \ref{alg:partition} with depth $p$.
		Then for any $A\times B :=\times_{i=1}^d [a_i,b_i] \in \pi$, there holds
		\begin{align*}
			2^{-2}\left(\sqrt{d - s }\cdot 2^{- p / (d - s)} + \sqrt{s} \; t^{-1} \right)\leq \mathrm{diam}(A\times B)\leq 2 \left(\sqrt{d - s }\cdot 2^{- p / (d - s)} + \sqrt{s} \; t^{-1} \right). 
		\end{align*}
	\end{lemma}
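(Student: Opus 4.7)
}
The plan is to bound each side length of a cell $A \times B$ separately (one bound from the private histogram, one from the max-edge tree) and then pass to the $\ell_2$-diameter using the elementary comparison $\max(\sqrt{u},\sqrt{v}) \le \sqrt{u+v} \le \sqrt{u}+\sqrt{v}$ together with $\sqrt{u+v} \ge (\sqrt{u}+\sqrt{v})/\sqrt{2}$.

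First, for the $s$ private coordinates, since $\pi^{\text{priv}}$ is an equal-length histogram partition of $[0,1]^s$ into $t$ bins per axis, every private edge of $A$ has length exactly $1/t$. This contributes $s \cdot t^{-2}$ to $\mathrm{diam}(A\times B)^2$, with no further analysis required.

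The substantive step is the $(d-s)$ public coordinates, where I would establish the following invariant for the max-edge rule of Algorithm \ref{alg:partition}: starting from the unit cube $[0,1]^{d-s}$, after any number of splits, every surviving edge length at the current depth lies in $\{2^{-k}, 2^{-(k-1)}\}$ for some integer $k$, and at depth $p$ each of the $d-s$ axes has been split either $\lfloor p/(d-s)\rfloor$ or $\lceil p/(d-s)\rceil$ times along any root-to-leaf path. This is because the rule always halves a \emph{longest} remaining edge, so a shorter edge cannot appear until all longer edges have been equalized to its current length. A short induction on the depth (tracking the multiset of edge lengths along a path) gives the invariant. Consequently, for any public coordinate $i$ of $B$,
\begin{equation*}
    2^{-\lceil p/(d-s)\rceil} \le b_i - a_i \le 2^{-\lfloor p/(d-s)\rfloor},
\end{equation*}
which I bound further by $\tfrac{1}{2} \cdot 2^{-p/(d-s)} \le b_i - a_i \le 2 \cdot 2^{-p/(d-s)}$.

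Summing the squared side lengths gives
\begin{equation*}
    \tfrac{s}{t^2} + \tfrac{d-s}{4} \cdot 2^{-2p/(d-s)} \;\le\; \mathrm{diam}(A\times B)^2 \;\le\; \tfrac{s}{t^2} + 4(d-s)\cdot 2^{-2p/(d-s)}.
\end{equation*}
For the upper bound I apply $\sqrt{u+v}\le\sqrt{u}+\sqrt{v}$ to get $\mathrm{diam}(A\times B) \le \sqrt{s}\,t^{-1} + 2\sqrt{d-s}\,2^{-p/(d-s)}$, which is dominated by the claimed factor-$2$ bound. For the lower bound I apply $\sqrt{u+v}\ge (\sqrt{u}+\sqrt{v})/\sqrt{2}$ to obtain
\begin{equation*}
    \mathrm{diam}(A\times B) \ge \tfrac{1}{\sqrt{2}}\bigl(\sqrt{s}\,t^{-1} + \tfrac{1}{2}\sqrt{d-s}\,2^{-p/(d-s)}\bigr) \ge \tfrac{1}{4}\bigl(\sqrt{d-s}\,2^{-p/(d-s)} + \sqrt{s}\,t^{-1}\bigr),
\end{equation*}
matching the stated inequality. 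The only real obstacle is verifying the max-edge balance invariant; everything else is an elementary manipulation of the $\ell_2$ norm.
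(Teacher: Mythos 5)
Your proposal is correct and follows essentially the same route as the paper: private edges have length exactly $t^{-1}$, the max-edge rule guarantees every public edge at depth $p$ has length between $2^{-\lceil p/(d-s)\rceil}$ and $2^{-\lfloor p/(d-s)\rfloor}$ (the paper argues this via monotonicity of side lengths between the two bracketing uniform depths, which is the same balance invariant you propose to prove by induction), and the floor/ceiling slack accounts for the factors of $2$. The only cosmetic difference is that you combine the two blocks via $\sqrt{u+v}$ inequalities while the paper uses $\max(\mathrm{diam}(A),\mathrm{diam}(B)) \le \mathrm{diam}(A\times B)\le \mathrm{diam}(A)+\mathrm{diam}(B)$; both yield the stated constants.
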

	
	\begin{proof}[\textbf{Proof of Lemma \ref{lem::treeproperty}}]
		For each $A\times B$, $s$ edges of $A$ have length $t^{-1}$. 
		The rest $d - s$ edges of $B$ are decided by the max-edge partition rule.
		When the depth of the tree $p$ is a multiple of dimension $d-s$, each cell of the tree partition is a high-dimensional cube with a side length $2^{-p/(d - s)}$.
		On the other hand, when the depth of the tree $p$ is not a multiple of dimension $d - s$, we consider the max-edge tree partition with depth $\lfloor p/(d - s)\rfloor$ and $\lceil p/(d - s)\rceil$, whose corresponding side length of the higher dimensional cube is $2^{-\lfloor p/(d- s)\rfloor}$ and $2^{-\lceil p/(d- s) \rceil}$. 
		Note that in the splitting procedure of max-edge partition, the side length of each sub-rectangle decreases monotonically with the increase of $p$, so the side length of a random tree partition cell is between $2^{-\lceil p/(d-s) \rceil}$ and $2^{-\lfloor p/(d - s) \rfloor}$. This implies that
		\begin{align*}
			\sqrt{d - s}\cdot 2^{-\lceil p/(d-s) \rceil}\leq \mathrm{diam}(B)\leq \sqrt{d-s}\cdot 2^{-\lfloor p/(d-s) \rfloor}
		\end{align*}
		Since $p/(d-s)-1\leq \lfloor p/(d-s)\rfloor\leq \lceil p/(d - s)\rceil \leq p/ (d - s)+1$, we immediately get $2^{-1}\sqrt{d - s}\cdot 2^{-p/(d - s)}\leq \mathrm{diam}(B)\leq 2\sqrt{d - s} \cdot 2^{-p / (d - s)}$. 
		Together, we have $\mathrm{diam}(A\times B) \leq \mathrm{diam}(A) +  \mathrm{diam}(B) \leq \sqrt{s} t^{-1} + 2\sqrt{d - s} \cdot 2^{-p / (d - s)}$ 
		and $\mathrm{diam}(A\times B) \geq ( \mathrm{diam}(A) +  \mathrm{diam}(B)) /2 \geq (\sqrt{d - s}\cdot 2^{-p/(d - s)} + \sqrt{s} t^{-1}) / 4$.
	\end{proof}

	\begin{lemma}\label{lem:zerocountsofV}
		For $\pi = \pi^{\text{priv}} \otimes \pi^{\text{pub}} $, let $\pi^{\text{priv}}$ be a histogram partition with $t$ bins and $\pi^{\text{pub}}$ be generated by Algorithm \ref{alg:partition} with depth $p$.  
		Then for each $i$, there are at most $ t^{ s} \cdot 2^{p - \lfloor p / (d - s) \rfloor \cdot \left( d - s - \sum_{\ell = s+1}^d W_i^{\ell}\right)}$ elements of $V_i^1, \cdots, V_i^{t^s \cdot 2^p}$ are non-zero. 
	\end{lemma}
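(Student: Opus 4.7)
The plan is to decompose the count of nonzero $V_i^j$ into a product of contributions from the histogram part $\pi^{\text{priv}}$ and the tree part $\pi^{\text{pub}}$, then bound each factor. By the definition of $\mathcal{V}_i$ in \eqref{equ:privavyprocedurepersonalize}, $V_i^j \neq 0$ iff the grid $A\times B \in \mathcal{V}_i$, meaning there is some $\overline{X} \in A\times B$ agreeing with $X_i$ on every coordinate $\ell$ with $W_i^\ell = 0$. Since $\pi$ is a product partition and the public-feature constraints decouple across the two parts, this count factorizes as (\# compatible bins in $\pi^{\text{priv}}$) $\times$ (\# compatible leaves in $\pi^{\text{pub}}$).

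For the histogram part, in each of the $s$ dimensions we have $t$ equal bins; in dimensions with $W_i^\ell = 1$ any bin is admissible, while in dimensions with $W_i^\ell = 0$ only the unique bin containing $X_i^\ell$ qualifies. Hence the histogram-side count is $t^{\sum_{\ell=1}^s W_i^\ell} \leq t^s$. For the tree part, set $m_i := \sum_{\ell=s+1}^d W_i^\ell$, so the coordinates that must match $X_i$ carve out a coordinate-aligned $m_i$-dimensional affine subspace $\Pi_i \subset [0,1]^{d-s}$ of $m_i$-volume $1$, and we need to count leaves $B \in \pi^{\text{pub}}$ intersecting $\Pi_i$.

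The core step will invoke Lemma \ref{lem::treeproperty} together with the max-edge rule to argue that every leaf $B$ is an axis-aligned box whose $\ell$-th side length equals $2^{-k_\ell(B)}$ with $k_\ell(B) \in \{\lfloor p/(d-s)\rfloor, \lceil p/(d-s)\rceil\}$ and $\sum_\ell k_\ell(B) = p$; consequently exactly $r := p - (d-s)\lfloor p/(d-s)\rfloor$ coordinates attain the ceiling value in each leaf. I will then use a cross-sectional volume identity: since the leaves partition $[0,1]^{d-s}$ and their intersections with $\Pi_i$ tile $\Pi_i$,
\begin{align*}
\sum_{B:\, B\cap \Pi_i \neq \emptyset} \prod_{\ell \in \text{free}} 2^{-k_\ell(B)} = 1.
\end{align*}
For any intersecting $B$, at most $\min(m_i, r)$ of the $m_i$ free coordinates can be ceiling coordinates, so $\sum_{\ell \in \text{free}} k_\ell(B) \leq m_i \lfloor p/(d-s)\rfloor + r$. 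Each summand above is therefore at least $2^{-(m_i\lfloor p/(d-s)\rfloor + r)}$, so the number of intersecting leaves is at most $2^{m_i\lfloor p/(d-s)\rfloor + r} = 2^{p - \lfloor p/(d-s)\rfloor(d-s-m_i)}$. Combining with the histogram factor gives the claim.

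The main obstacle will be justifying the structural statement that every leaf has exactly $r$ ceiling coordinates. This must be pushed through carefully when $p$ is not a multiple of $d-s$, since different branches of a max-edge tree can in principle distribute splits differently; the saving grace is the strict balance guaranteed by Lemma \ref{lem::treeproperty}, which forces $k_\ell(B) \in \{\lfloor p/(d-s)\rfloor, \lceil p/(d-s)\rceil\}$ along every root-to-leaf path and hence pins down the count of ceilings via $\sum_\ell k_\ell(B) = p$.
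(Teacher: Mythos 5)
Your proof is correct and arrives at exactly the claimed exponent, but the key counting step is done by a genuinely different route than the paper's. The paper argues dynamically: it follows Algorithm \ref{alg:partition} level by level, observing that a split of a potential cell along a dimension $\ell$ with $W_i^{\ell}=1$ doubles the number of potential cells while a split along a public dimension does not, and then bounds the number of such doublings along any root-to-leaf path by $p - \lfloor p/(d-s)\rfloor\bigl(d-s-\sum_{\ell=s+1}^d W_i^{\ell}\bigr)$ using the fact that each dimension is split at least $\lfloor p/(d-s)\rfloor$ times. You instead argue statically via a packing identity: the cross-sections $B\cap \Pi_i$ of compatible leaves tile the unit-volume slice $\Pi_i$, each cross-section has $m_i$-volume $\prod_{\ell\,\text{free}} 2^{-k_\ell(B)} \geq 2^{-(m_i\lfloor p/(d-s)\rfloor + r)}$, so at most $2^{m_i\lfloor p/(d-s)\rfloor + r} = 2^{p-\lfloor p/(d-s)\rfloor(d-s-m_i)}$ leaves can intersect. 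Your version avoids reasoning about the algorithm's dynamics (which the paper states somewhat loosely, since different cells at the same level may split along different dimensions) and needs only the final shape of the leaves; the paper's version is more elementary and avoids the measure-theoretic tiling step. Two small things to tighten: Lemma \ref{lem::treeproperty} as stated only bounds $\mathrm{diam}(A\times B)$, so the structural claim that every leaf has side lengths $2^{-k_\ell}$ with $k_\ell\in\{\lfloor p/(d-s)\rfloor,\lceil p/(d-s)\rceil\}$ and $\sum_\ell k_\ell = p$ must be extracted from the balance invariant in that lemma's proof (the max-edge rule keeps the exponents within $1$ of each other by induction) rather than from its statement; and your histogram factor is $t^{\sum_{\ell=1}^s W_i^{\ell}}\leq t^s$, which is finer than needed but consistent with the paper's cruder $t^s$.
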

	
	\begin{proof}[\textbf{Proof of Lemma \ref{lem:zerocountsofV}}]
		The number of non-zero elements is the cardinality of potential grids $|\mathcal{V}_i|$. 
		For $A\times B$, since the first $s$ dimensions are all private, all $t^s$ possibility of $A$ potentially appears in $\mathcal{V}_i$. 
		For $B$, consider the process of Algorithm \ref{alg:partition} that gradually grows $2^p$ grids. 
		At any step $k = 1,\cdots, p$, the algorithm split along the $\ell_k$-th dimension. 
		If $W_{i}^{\ell_k}$ is $1$, i.e. the feature is private, the number of potential grids will double. 
		Otherwise, it remains the same. 
		Each feature is split at least $\lfloor p / (d - s) \rfloor$ times, i.e. there are at most $p - \lfloor p / (d - s) \rfloor \cdot \left(d - s - \sum_{\ell = s + 1}^d W_i^{\ell}\right) $ splits which causes the number of potential grids to double. 
		This quantity is upper bonded by $ \sum_{\ell = s + 1}^d p\cdot W_i^{\ell}/ (d - s) + d - s$. 
		Multiplying potential possibilities of $A$ and $B$ leads to $ t^{ s} \cdot 2^{\sum_{\ell = s + 1}^d p\cdot W_i^{\ell}/ (d - s) + d - s}$. 
	\end{proof}

	\begin{proof}[\textbf{Proof of Lemma \ref{lem:boundingofprivatisederror}}]
		We intend to bound
		\begin{align}\nonumber
			\mathcal{R}_{L,\mathrm{P}}(\tilde{f}) - \mathcal{R}_{L,\mathrm{P}}(f) 
			= &  \int_{\mathcal{X}}\left|\tilde{f}(x) - f(x)\right|^2 d\mathrm{P}_X(x)\\
			= & \sum_{j}  \left|\frac{ \sum_{i}\tilde{Y}_i\cdot \tilde{V}_i^j}{ \sum_{i} \tilde{V}_i^j} - \frac{ \sum_{i}{Y}_i\cdot {V}_i^j}{ \sum_{i} {V}_i^j}\right|^2 \int_{A\times B} d\mathrm{P}_X(x)\nonumber
			\\
			= &  \sum_{j} \frac{1}{t^{s} 2^p}  \left|\frac{ \sum_{i}\tilde{Y}_i\cdot \tilde{V}_i^j}{ \sum_{i} \tilde{V}_i^j} - \frac{ \sum_{i}{Y}_i\cdot {V}_i^j}{ \sum_{i} {V}_i^j}\right|^2\label{equ:decompositionoferrorintocells}
		\end{align}
		where $j$ is the index of $A\times B$. 
		Here, $V_i^j$ is defined as $\eins_{A\times B}(X_i)$. 
		For each term, we have decomposition
		\begin{align*}
			& \left|\frac{ \sum_{i}\tilde{Y}_i\cdot \tilde{V}_i^j}{ \sum_{i} \tilde{V}_i^j} - \frac{ \sum_{i}{Y}_i\cdot {V}_i^j}{ \sum_{i} {V}_i^j}\right|^2  
			\leq   3 \cdot \Bigg(  \underbrace{\left|\frac{ \sum_{i=1}^n (\tilde{Y}_i - Y_i) \tilde{V}_i^j}{ \sum_{i=1}^n\tilde{V}^j_i}\right|^2}_{(I)} 
			\\
			& +  
			\underbrace{\left|\frac{ \sum_{i=1}^nY_i\tilde{V}^j_i - \sum_{i=1}^nY_i{V}^j_i }{\sum_{i=1}^n \tilde{V}^j_i }\right|^2}_{(II)}
			+ 
			\underbrace{\left|\frac{ \sum_{i=1}^nY_i{V}^j_i \sum_{i=1}^n {V}^j_i - \sum_{i=1}^nY_i{V}^j_i \sum_{i=1}^n \tilde{V}^j_i}{ \sum_{i=1}^n \tilde{V}^j_i \sum_{i=1}^n {V}^j_i}\right|^2}_{(III)}\Bigg)
		\end{align*}
		using triangular inequality.
		We bound the three parts separately. 
		
		(\textit{i}) For the numerator of $(I)$, let $v_j$ be the ratio of non-zero elements in $\tilde{V}_1^j,\cdots, \tilde{V}_n^j$, i.e. $v_j = \sum_{i = 1}^n \eins \{\tilde{V}_1^j \neq 0 \} / n$. 
		Note that $\tilde{Y}_i - Y_i = {4M}\xi_i / \varepsilon$ and $\tilde{V}_i^j\in [-1, 1]$.
		Thus, $(\tilde{Y}_i - Y_i)\tilde{V}_i^j$ are either zero or sub-exponential random variables with parameter $(\frac{2M }{\sqrt{2}\varepsilon},1)$.
		Consequently, Lemma \ref{lem:laplacebound} yields that
		\begin{align}\label{equ:privatized1}
			\left|\frac{1}{n}\sum_{i=1}^n(\tilde{Y}_i - Y_i)\tilde{V}^j_i \right|\leq \sqrt{\frac{128 M^2\cdot v_j \cdot \log n}{n\varepsilon^2}} 
		\end{align}
		with probability at least $ 1- 1/n^2$. 
		For the denominator, applying Lemma \ref{lem:hoeffding} yields 
		\begin{align}\label{equ:privatized3}
			\left|\frac{1}{n}\sum_{i=1}^n \tilde{V}_i^j - \frac{1}{n}\sum_{i=1}^n {V}_i^j \right| \leq \sqrt{\frac{v_j \log n}{n}}
		\end{align}
		with probability at least $1-2/n^2$. 
		Moreover, by Lemma \ref{lem:cellbound}, we have
		\begin{align*}
			\frac{1}{n}\sum_{i=1}^n V_i^j \geq    \int_{A\times B} d\mathrm{P}_X(x) - \left|  \int_{A\times B} d\mathrm{P}_X(x) -  \frac{1}{n}\sum_{i=1}^n V_i^j \right| \gtrsim \frac{1}{t^s \cdot 2^p} - \sqrt{\frac{\log n}{t^{s} \cdot 2^p \cdot n}} \gtrsim \frac{1}{t^s \cdot 2^p}
		\end{align*}
		holds with probability at least $1-1/n^2$. 
		Then, we can guarantee that 
		\begin{align}\label{equ:privatize2}
			\left|\frac{1}{n}\sum_{i=1}^n \tilde{V}_i^j \right| 
			\gtrsim   \frac{1}{t^s \cdot 2^p} - \sqrt{\frac{v_j\log n }{n }} \gtrsim  \frac{1}{t^s \cdot 2^p}
		\end{align}
		with probability $1-3/n^2$. 
		This together with \eqref{equ:privatized1} yields 
		\begin{align}\label{equ:privatizedconclusion1}
			(I)\lesssim \frac{t^{2s} \cdot 2^{2p}\cdot v_j   \cdot \log n }{n\varepsilon^2}.
		\end{align}
		
		(\textit{ii}) For $(II)$, we apply Lemma \ref{lem:hoeffding} and get 
		\begin{align*}
			\left|\frac{1}{n} \sum_{i=1}^n Y_i \tilde{V}_i^j - \frac{1}{n}\sum_{i=1}^n Y_i {V}_i^j \right| \leq \sqrt{\frac{ v_j \cdot M  \log n}{ n }}
		\end{align*}
		with probability $1-2/n^2$ since $|Y_i|\leq M$. 
		Combining this with \eqref{equ:privatize2}, we get 
		\begin{align}\label{equ:privatizedconclusion2}
			(II) \lesssim\frac{t^{2s} \cdot 2^{2p}\cdot v_j   \cdot \log n }{n\varepsilon^2}.
		\end{align}
		
		(\textit{iii}) For $(III)$, note that $V_i^j\in \{0,1\}$, we have $\sum_{i=1}^n Y_i V_i^j \leq M \sum_{i=1}^n V_i^j$. 
		Thus, 
		\begin{align*}
			\left|\frac{ \sum_{i=1}^nY_i{V}^j_i \sum_{i=1}^n {V}^j_i - \sum_{i=1}^nY_i{V}^j_i \sum_{i=1}^n \tilde{V}^j_i}{  \sum_{i=1}^n \tilde{V}^j_i \sum_{i=1}^n {V}^j_i}\right|^2 \leq & M^2 \left|\frac{ \sum_{i=1}^n {V}^j_i - \sum_{i=1}^n \tilde{V}^j_i}{ \sum_{i=1}^n \tilde{V}^j_i }\right|^2
			\\
			\lesssim & \frac{t^{2s} \cdot 2^{2p}\cdot v_j   \cdot \log n }{n\varepsilon^2}
		\end{align*}
		where the last inequality follows from \eqref{equ:privatized3} and \eqref{equ:privatize2}.
		Combining this with \eqref{equ:privatizedconclusion1} and \eqref{equ:privatizedconclusion2}, we have 
		\begin{align*}
			\sum_{j} \frac{1}{t^{s} 2^p}  \left|\frac{ \sum_{i}\tilde{Y}_i\cdot \tilde{V}_i^j}{ \sum_{i} \tilde{V}_i^j} - \frac{ \sum_{i}{Y}_i\cdot {V}_i^j}{ \sum_{i} {V}_i^j}\right|^2  \lesssim   \frac{t^{s} \cdot 2^{p}\cdot \sum_{j} v_j   \cdot \log n }{n\varepsilon^2}
		\end{align*}
		with probability $\mathrm{P}^n \times \mathrm{R}^n$ at least $1- 4/n^2$. 
		By Lemma \ref{lem:zerocountsofV}, 
		\begin{align*}
			\sum_{j} v_j  \leq \frac{1}{n} \sum_{i = 1}^n \sum_j \eins\{V_i^j\neq 0\} \lesssim   &   \frac{t^{ s}\log n }{n\varepsilon^2}\sum_{i=1}^n 2^{ \sum_{\ell = s + 1}^d  W_i^{\ell} \cdot p / (d - s) }.
		\end{align*}
		This directly implies the desired bound of $\mathcal{R}_{L,\mathrm{P}}(f_{\pi}^{\mathrm{DP}}) - \mathcal{R}_{L,\mathrm{P}}(f_{\pi})$. 
	\end{proof}
	
	\begin{proof}[\textbf{Proof of Lemma \ref{lem:boundingofsampleerror}}]
		We intend to bound 
		\begin{align*}
			\mathcal{R}_{L,\mathrm{P}}(f) - \mathcal{R}_{L,\mathrm{P}}(\overline{f}) 
			\leq  & \max_{j}  \left|  \frac{ \frac{1}{n}\sum_{i=1}^nY_i{V}_i^j}{ \frac{1}{n}\sum_{i=1}^n {V}_i^j} - \frac{\int_{A\times B}f^*(x')d\mathrm{P}_X(x')}{\int_{A\times B }d\mathrm{P}_X(x)}\right|^2 \\
			\leq & \underbrace{\left|  \frac{ \frac{1}{n}\sum_{i=1}^nY_i{V}_i^j\int_{A\times B}d\mathrm{P}_X(x') - \int_{A\times B}f^*(x')d\mathrm{P}(x') \int_{A\times B }d\mathrm{P}_X(x')}{ \frac{1}{n}\sum_{i=1}^n {V}_i^j \int_{A\times B}d\mathrm{P}_X(x)} \right|^2}_{(I)} \\
			+ &  \underbrace{\left|  \frac{  \int_{A\times B }f^*(x')d\mathrm{P}(x') \int_{A\times B}d\mathrm{P}_X(x') -  \frac{1}{n}\sum_{i=1}^n{V}_i^j\int_{A\times B }f^*(x')d\mathrm{P}_X(x')}{ \frac{1}{n}\sum_{i=1}^n {V}_i^j \int_{A\times B}d\mathrm{P}_X(x)} \right|^2}_{(II)}
		\end{align*}
		We bound two terms separately. For $(I)$, Lemma \ref{lem:cellbound} yields
		\begin{align}\label{equ:sampleboundofdenominator}
			\frac{1}{n}\sum_{i=1}^n V_i^j \int_{A_j}d\mathrm{P}_X(x) \gtrsim t^{-2s} \cdot 2^{-2 p} 
		\end{align}
		with probability $1-1/n^2 $. 
		For the numerator, Lemma \ref{lem:cellboundY} yields
		\begin{align*}
			\left| \frac{1}{n} \sum_{i=1}^n Y_i V_i^j - \int_{A\times B }f^*(x')d\mathrm{P}_X(x')\right|^2 \left| \int_{A\times B}d\mathrm{P}_X(x')\right|^2  \lesssim \frac{\log n }{t^s 2^p\cdot n} \cdot t^{-2s} \cdot 2^{-2p}. 
		\end{align*}
		Together, we get $(I)\lesssim {t^s 2^p}/{n}$. 
		Analogously, by Lemma \ref{lem:cellbound}, we have
		\begin{align*}
			\left|  \int_{A\times B }d\mathrm{P}_X(x') -  \sum_{i=1}^n{V}_i^j\right|^2\left|\int_{A\times B}f^*(x')d\mathrm{P}_X(x')\right|^2\leq \frac{1}{t^s 2^{p}\cdot n } \cdot M^2\cdot t^{-2s } \cdot 2^{-2p}.
		\end{align*}
		This together with \eqref{equ:sampleboundofdenominator} yields 
		$(II)\lesssim t^s 2^p/n$. The bound of $(I)$ and $(II)$ together yields the desired conclusion. 
	\end{proof}

	\begin{proof}[\textbf{Proof of Lemma \ref{lem:boundingofapproximationerror}}]
		Let $A\times B (x)$ be the grid of which $x$ belongs to.  
		We intend to bound 
		\begin{align*}
			\mathcal{R}_{L,\mathrm{P}}(\overline{f})  - \mathcal{R}_{L,\mathrm{P}}(f^*) = 
			\int_{x\in\mathcal{X}}\left( \frac{\int_{A\times B (x)}f(x')d\mathrm{P}_X(x')}{\int_{A\times B(x)}d\mathrm{P}_X(x')} - f(x)\right)^2 d\mathrm{P}_X(x).  
		\end{align*}
		For each $x$, if Assumption \ref{asp:alphaholder} holds, the point-wise error can be bounded by 
		\begin{align*}
			\frac{\int_{A\times B(x)}f(x')d\mathrm{P}_X(x')}{\int_{A\times B(x)}d\mathrm{P}_X(x')} - f(x) \leq & \frac{\int_{A\times B(x)}|f(x') - f(x)|d\mathrm{P}_X(x')}{\int_{A \times B (x)}d\mathrm{P}_X(x')} \\ 
			\leq &  \frac{\int_{A \times B (x)}c_L\|x' - x\|^{\alpha}d\mathrm{P}_X(x')}{\int_{A \times B (x)}d\mathrm{P}_X(x')}  \leq c_L \mathrm{diam} (A \times B (x))^{\alpha}
		\end{align*}
		Then we can bound the excess risk using Lemma \ref{lem::treeproperty}, 
		\begin{align*}
			\mathcal{R}_{L,\mathrm{P}}(\overline{f})  - \mathcal{R}_{L,\mathrm{P}}(f^*) \lesssim  \mathrm{diam} (A \times B (x))^{\alpha} \lesssim 2^{- 2 p\alpha / (d - s)} + t^{- 2 \alpha }. 
		\end{align*}
		
	\end{proof}

	\section{Experiment Details}\label{app:experiments}

	\subsection{Implementation Details}
	\label{sec:implementationdetail}
	
	For \texttt{HistOfTree}, we adopt the generalized randomized response mechanism introduced in Section \ref{app:generelizedrandomresponse}. 
	We add one more parameter adjusting the allocation of the privacy budget on the numerator and denominator. 
	Specifically, let the mechanism for $\tilde{Y}_i$ and $\tilde{U}_i$ be respectively $\rho\varepsilon$-LDP and $(1-\rho)\varepsilon$-semi-feature LDP for $\rho \in [0,1] $. 
	In this case, the mechanisms are which means the hybrid mechanism is still $\varepsilon$-LDP. 
	We select $\rho \in \{0.5,0.7,0.9\}$. 
	For other parameters, we select $p\in \{1, 2, 4, 6\}$ and $t\in \{1,2,3\}$.

	For \texttt{AdHistOfTree}, we also adopt the generalized randomized response mechanism.
	For better empirical performance, the second term of the objective function is multiplied with a constant selected in $\{0.01,0.1,1\}$. 
	Also, we allow the selection $t\in\{t^*-1, t^*, t^* + 1\}$, where $t^*$ is the default value derived in Section \ref{sec:selectionofs}.

	For \texttt{Hist}, we implement the private histogram proposed by \citep{berrett2021strongly, gyorfi2022rate} in Python. 
	\texttt{Hist} applies the Laplacian mechanism to privatize the estimation of marginal and joint probabilities for a cubic histogram partition with the number of bins $t$. 
	In cells with a private marginal probability estimation less than $\zeta$, estimation is truncated to 0.
	We let $t\in\{1,2,3,4\}$. 
	We set the truncation parameter $\zeta \in \{0.01, 0.05\}$.
	
	For \texttt{ParDT} and \texttt{LabelDT}, we add Laplace noise with scale $\xi /\varepsilon$ where $\xi$ is the range of the label. 
	For \texttt{ParDT},  \texttt{LabelDT}, \texttt{KRR}  and \texttt{DT}, we use the implementation by Scikit-Learn \citep{scikit-learn}. We select \textit{$max\_depth$} in $\{1,2,4, 6,8\}$ and \textit{$min\_sample\_leaf$} in $\{1, 10, 100\}$. 
	For  \texttt{KRR}, we select $k$ in $\{2, 3, 4, 5\}$.

	\subsection{Dataset Details}\label{app:datasets}
	
	For all datasets, each feature is min-max scaled to the range $[0,1]$ individually. 
	
	ABA: The \textit{Abalone} dataset originally comes from biological research \citep{nash1994population} and now it is accessible on UCI Machine Learning Repository \citep{Dua:2019}. ABA contains $4177$ observations of one target variable and $8$ attributes related to the physical measurements of abalone. 
	
	AQU: The \textit{
		QSAR aquatic toxicity} dataset was used to develop quantitative regression QSAR models to predict acute aquatic toxicity towards the fish Pimephales promelas (fathead minnow) on a set of 908 chemicals. It contains $546$ instances of $8$ input attributes and $1$ output attribute.

	BUI: The \textit{Residential Building Data Set Data Set} dataset on UCI Machine Learning Repository includes construction cost, sale prices, project variables, and economic variables corresponding to real estate single-family residential apartments in Tehran, Iran. It contains $372$ instances of $103$ input attributes and $2$ output attributes.

	DAK: The \textit{Istanbul Stock Exchange} dataset includes returns of the Istanbul stock exchange with seven other international indexes. It has 536 instances with 10 features.

	FIS: The \textit{QSAR fish toxicity} dataset on UCI Machine Learning Repository was used to develop quantitative regression QSAR models to predict acute aquatic toxicity towards the fish Pimephales promelas (fathead minnow) on a set of 908 chemicals. It contains $908$ instances of $7$ features.

	MUS: The \textit{
		The Geographical Original of Music} dataset was built from a personal collection of 1059 tracks covering 33 countries/areas. The program MARSYAS \citep{zhou2014predicting} was used to extract audio features from the wave files. We used the default MARSYAS settings in single vector format (68 features) to estimate the performance with basic timbal information covering the entire length of each track.

	POR: The \textit{Stock Portfolio Performance} data set of performances of weighted scoring stock portfolios are obtained with mixture design from the US stock market historical database \citep{liu2017using}.
	It has 315 samples with 12 features. 
	
	PYR: The \textit{Pyrimidines} dataset is a subset of the Qualitative Structure Activity Relationships dataset on UCI datasets.
	This sub-dataset has 74 instances of dimension 27
	
	RED: This dataset contains the information on red wine of the \textit{
		Wine Quality} dataset \citep{cortez2009modeling} on UCI Machine Learning Repository. There are $11$ input variables to predict the output variable wine quality. $4898$ instances are collected in the dataset.

	WHI: This dataset also originates from the \textit{Wine Quality} dataset \citep{cortez2009modeling} on the UCI Machine Learning Repository. There are $11$ features related to white wine to predict the corresponding wine quality.

	\subsection{Additional Experiment Results}\label{app:additionalresults}
	
	\begin{table}[htbp]
		\centering
		\caption{Real data performance for $\varepsilon = 1$. 
		}
		\label{tab:realdataepsilon1}
		\resizebox{0.95\linewidth}{!}{
			\renewcommand{\arraystretch}{1}
			\setlength{\tabcolsep}{7pt}
			\begin{tabular}{l|lllll|lllllll}
				\toprule
				\multicolumn{1}{c|}{\multirow{3}{*}{}}                                  & \multicolumn{5}{c|}{Aligned}                                                                                                                                  & \multicolumn{7}{c}{Personalized}                                                                                                                                                                                 \\ 
				\cmidrule(r){2-6} \cmidrule(l){7-13}
				\multicolumn{1}{c|}{}                                                   & \multicolumn{2}{c}{\texttt{HistOfTree}}           & \multicolumn{1}{c}{\multirow{2}{*}{\texttt{ParDT}}} & \multicolumn{1}{c}{\multirow{2}{*}{\texttt{Hist}}} &  \multicolumn{1}{c|}{\multirow{2}{*}{\texttt{KRR}}} & \multicolumn{2}{c}{\texttt{HistOfTree}}           & \multicolumn{2}{c}{\texttt{AdHistOfTree}}         & \multicolumn{1}{c}{\multirow{2}{*}{\texttt{ParDT}}} & \multicolumn{1}{c}{\multirow{2}{*}{\texttt{Hist}}}  & \multicolumn{1}{c}{\multirow{2}{*}{\texttt{KRR}}} \\
				\multicolumn{1}{c|}{}                                                   & \multicolumn{1}{c}{ME} & \multicolumn{1}{c}{CART} & \multicolumn{1}{c}{}                                & \multicolumn{1}{c}{}            &                   & \multicolumn{1}{c}{ME} & \multicolumn{1}{c}{CART} & \multicolumn{1}{c}{ME} & \multicolumn{1}{c}{CART} & \multicolumn{1}{c}{}                                & \multicolumn{1}{c}{}                               \\ \midrule
				ABA                                                                     & 2.04                   & \textbf{1.80*}                    & 2.04                                                & 2.02               &       2.74                           & \textbf{1.85*}                   & 2.04                     & 1.86                   & 2.00                     & 2.26                                                & 2.02                      &  2.71                         \\
				AQU                                                                     & \textbf{1.71*}                   & \textbf{1.71*}                     & 2.86                                                & 1.80                          & 3.35                      & 1.71                   & 1.73                     &\textbf{1.69*}                   & \textbf{1.69*}                     & 3.03                                                & 1.80                     & 3.35                          \\
				BUI                                                                     & \textbf{1.59*}                   & \textbf{1.59*}                     & 3.18                                                & 5e5                     & 6.70                            & 1.64                   & 1.64                     & \textbf{1.61*}                   & \textbf{1.61*}                     & 3.28                                                & 5e5                        & 6.57                       \\
				DAK                                                                     & 6.69                   &\textbf{6.65*}                    & 8.03                                                & 8.43                     & 22.6                           & 7.95                   & 7.95                     & \textbf{7.47*}                   & \textbf{7.47*}                     & 19.0                                                & 8.43                          & 22.6                     \\
				FIS                                                                     & \textbf{2.22*}                   & 2.32                     & 3.22                                                & 2.45                       & 4.60                         & 2.32                   & 2.32                     & \textbf{2.26*}                   & \textbf{2.26*}                     & 3.49                                                & 2.45                               & 4.59                \\
				MUS                                                                     & {1.17}                   & \textbf{1.17}                     & 1.89                                                & 9e5                     & 2.72                            & 1.17                   & 1.17                     & \textbf{1.16}                   & \textbf{1.16}                     & 1.99                                                & 9e5                                   & 2.74             \\
				POR                                                                     &\textbf{4.15*}                   &\textbf{4.15*}                    & 4.67                                                & 7.22                      & 7.89                          & 4.04                  & 4.04                     & \textbf{3.68*}                   & \textbf{3.68*}                     & 4.72                                                & 7.22                               & 7.70              \\
				PYR                                                                     & \textbf{2.31*}                   & \textbf{2.31*}                     & 2.98                                                & 1e5                          & 5.61                       & 2.32                   & 2.32                     & \textbf{1.75*}                   & \textbf{1.75*}                     & 3.18                                                & 1e5                                & 5.64                \\
				RED                                                                     & \textbf{1.48*}                  &  \textbf{1.48*}                  & 2.02                                                & 1.81                            & 2.53                    & \textbf{1.48*}                   & 1.52                     & \textbf{1.48*}                   & \textbf{1.48*}                     & 2.08                                                & 1.81                      & 2.53                         \\
				WHI                                                                     & \textbf{1.46*}                   & \textbf{1.46*}                     & 1.61                                                & 1.61                          & 1.85                      & 1.47                   & 1.47                     & \textbf{1.46}                   & \textbf{1.46}                     & 1.70                                                & 1.61                               &         1.84        \\ \midrule
				\multicolumn{1}{c|}{\begin{tabular}[c]{@{}c@{}}Rank\\ Sum\end{tabular}} &        21                &      \textbf{18}                    &                  35                                   &                          38              &47             &          26              &           34               &          \textbf{11}              &           12               &                         56                            &                56 & 67                                    \\ \bottomrule
			\end{tabular}
		}
	\end{table}
	
	\begin{table}[htbp]
		\centering
		\caption{Real data performance for $\varepsilon = 4$. 
		}
		\label{tab:realdataepsilon4}
		\resizebox{0.95\linewidth}{!}{
			\renewcommand{\arraystretch}{1}
			\setlength{\tabcolsep}{7pt}
			\begin{tabular}{l|lllll|lllllll}
				\toprule
				\multicolumn{1}{c|}{\multirow{3}{*}{}}                                  & \multicolumn{5}{c|}{Aligned}                                                                                                                                  & \multicolumn{7}{c}{Personalized}                                                                                                                                                                                 \\ 
				\cmidrule(r){2-6} \cmidrule(l){7-13}
				\multicolumn{1}{c|}{}                                                   & \multicolumn{2}{c}{\texttt{HistOfTree}}           & \multicolumn{1}{c}{\multirow{2}{*}{\texttt{ParDT}}} & \multicolumn{1}{c}{\multirow{2}{*}{\texttt{Hist}}} &  \multicolumn{1}{c|}{\multirow{2}{*}{\texttt{KRR}}} & \multicolumn{2}{c}{\texttt{HistOfTree}}           & \multicolumn{2}{c}{\texttt{AdHistOfTree}}         & \multicolumn{1}{c}{\multirow{2}{*}{\texttt{ParDT}}} & \multicolumn{1}{c}{\multirow{2}{*}{\texttt{Hist}}}  & \multicolumn{1}{c}{\multirow{2}{*}{\texttt{KRR}}} \\
				\multicolumn{1}{c|}{}                                                   & \multicolumn{1}{c}{ME} & \multicolumn{1}{c}{CART} & \multicolumn{1}{c}{}                                & \multicolumn{1}{c}{}            &                   & \multicolumn{1}{c}{ME} & \multicolumn{1}{c}{CART} & \multicolumn{1}{c}{ME} & \multicolumn{1}{c}{CART} & \multicolumn{1}{c}{}                                & \multicolumn{1}{c}{}                               \\ \midrule
				ABA                                                                     & 1.72                   & 1.59                     & \textbf{1.58}                                                & 1.97         & 1.64                                       & \textbf{1.54}                   & 1.63                     & 1.55                   & 1.62                     & 1.61                                                & 1.97                         &1.97                      \\
				AQU                                                                     & 1.45                   & \textbf{1.35*}                     & 1.56                                                & 1.58             & 1.64                                   & 1.50                   & 1.47                     & 1.54                   & \textbf{1.45*}                     & 1.68                                                & 1.58                   & 1.65                            \\
				BUI                                                                     & 1.23                   & \textbf{1.16*}                     & 1.32                                                & 3e4             &     1.30                               & 1.26                   & \textbf{1.19*}                     & 1.28                   & 1.22                     & 1.36                                                & 3e4                    & 1.29                            \\
				DAK                                                                     & 4.67                   & \textbf{4.48*}                     & 8.03                                                & 6.32                & 6.57                                & \textbf{4.53}                   & 4.55                     & 5.95                   & 5.57                     & 6.01                                                & 6.32                      & 6.58                         \\
				FIS                                                                     & 1.66                   & \textbf{1.64}                     & 1.65                                                & 2.12                       & 1.82                         & \textbf{1.62*}                   & 1.66                     & 1.77                   & 1.68                     & 1.68                                                & 2.12                         & 1.77                      \\
				MUS                                                                     & \textbf{1.12}                  & \textbf{1.12}                     & 1.15                                                & 6e3                           & 1.36                      & \textbf{1.10}                   & 1.12                     & \textbf{1.10}                   & 1.12                     & 1.22                                                & 6e3                             & 1.14                   \\
				POR                                                                     & 2.89                   & \textbf{2.40*}                     & 2.98                                                & 3.21                          & 3.09                      & \textbf{2.66}                   & 2.75                     & 2.68                   & 2.72                     & 2.97                                                & 3.31                           & 3.02                    \\
				PYR                                                                     & 1.26                   & \textbf{1.16*}                     & 1.33                                                & 7e2                            & 1.49                     & 1.29                   & 1.29                     & \textbf{1.27}                   & \textbf{1.27}                  & 1.43                                                & 7e2                              & 1.50                  \\
				RED                                                                     & \textbf{1.27}                   & 1.47                     & 1.29                                                & 1.49                              & 1.44                  & 1.27                   & \textbf{1.26}                     & 1.30                   & 1.27                     & 1.28                                                & 1.49                               & 1.44                \\
				WHI                                                                     & \textbf{1.24}                   & \textbf{1.24}                    & 1.26                                                & 1.51                            & 1.42                    & 1.24                   & \textbf{1.23}                     & 1.43                   & 1.37                     & 1.30                                                & 1.51                           & 1.42                    \\ \midrule
				\multicolumn{1}{c|}{\begin{tabular}[c]{@{}c@{}}Rank\\ Sum\end{tabular}} &       22                 &         \textbf{16}                 &                      29                               &                            47 & 38                         &         \textbf{18}               &            24              &        34                &         26                 &                        47                             &              66 & 57                                      \\ \bottomrule
			\end{tabular}
		}
	\end{table}

\end{document}